\newcommand{\RCOMMENT}[2][.5\linewidth]{%
	\leavevmode\hfill\makebox[#1][l]{//~#2}}
\def\1{\bm{1}}
\DeclareMathAlphabet{\mathsfit}{\encodingdefault}{\sfdefault}{m}{sl}
\SetMathAlphabet{\mathsfit}{bold}{\encodingdefault}{\sfdefault}{bx}{n}
\def\sN{{\mathbb{N}}}
\def\A{{\bf A}}
\def\a{{\bf a}}
\def\B{{\bf B}}
\def\bb{{\bf b}}
\def\g{{\bf g}}
\def\G{{\bf G}}
\def\I{{\bf I}}
\def\N{{\bf N}}
\def\Q{{\bf Q}}
\def\W{{\bf W}}
\def\X{{\bf X}}
\def\x{{ \boldsymbol x}}
\def\y{{\boldsymbol y}}
\def\Z{{\bf Z}}
\def\0{{\bf 0}}
\def\1{{\bf 1}}
\def\DM{{\mathcal D}}
\def\GM{{\mathcal G}}
\def\FM{{\mathcal F}}
\def\IM{{\mathcal I}}
\def\JM{{\mathcal J}}
\def\NM{{\mathcal N}}
\def\OM{{\mathcal O}}
\def\RB{{\mathbb R}}
\def\EB{{\mathbb E}}
\def\Ph{\mbox{\boldmath$\Phi$\unboldmath}}
\newcommand{\ueq}[1][]{%
	\if\relax\detokenize{#1}\relax
	\sbox0{$\underbrace{=}_{}$}%
	\mathrel{\mathmakebox[\wd0]{=}}
	\else
	\mathrel{\underbrace{=}_{\mathclap{#1}}}
	\fi}
\newtheorem{assumption}{Assumption}
\newtheorem{lem}{Lemma}
\newtheorem{defi}{Definition}
\newtheorem{theorem}{Theorem}
\newtheorem{remark}{Remak}
\newtheorem{corollary}{Corollary}
\begin{document}
%
\title{Communication-Efficient \\ Local Decentralized SGD Methods}
%
%
%
\author{Xiang Li,
        Wenhao Yang,
        Shusen Wang,
        Zhihua Zhang
\thanks{Xiang Li is with 
	School of Mathematical Sciences, Peking University, Beijing, 100871, China. E-mail: lx10077@pku.edu.cn.}
\thanks{Wenhao Yang is with Academy for Advanced Interdisciplinary Studies, Peking University, Beijing, 100871, China. E-mail: yangwhsms@gmail.com.}
\thanks{Shusen Wang is with Department of Computer Science, Stevens Institute of Technology, Hoboken, NJ 07030, USA. E-mail: shusen.wang@stevens.edu.}
\thanks{Zhihua Zhang is with 
	School of Mathematical Sciences, Peking University, Beijing, 100871, China. E-mail: zhzhang@math.pku.edu.cn.}
\thanks{Manuscript resubmitted March 5, 2021.}}

%
%

\markboth{Journal of \LaTeX\ Class Files,~Vol.~14, No.~8, August~2015}%
{Shell \MakeLowercase{\textit{et al.}}: Bare Demo of IEEEtran.cls for IEEE Journals}
%



\maketitle

\begin{abstract}
Recently, the technique of local updates is a powerful tool in centralized settings to improve communication efficiency via periodical communication.
For decentralized settings, it is still unclear how to efficiently combine local updates and decentralized communication.
In this work, we propose an algorithm named as LD-SGD, which incorporates arbitrary update schemes that alternate between multiple Local updates and multiple Decentralized SGDs, and provide an analytical framework for LD-SGD.  
Under the framework, we present a sufficient condition to guarantee the convergence. 
We show that LD-SGD converges to a critical point for a wide range of update schemes when the objective is non-convex and the training data are non-identically independent distributed.
Moreover, our framework brings many insights into the design of update schemes for decentralized optimization.
As examples, we specify two update schemes and show how they help improve communication efficiency.
Specifically, the first scheme alternates the number of local and global update steps. 
From our analysis, the ratio of the number of local updates to that of decentralized SGD trades off communication and computation.
The second scheme is to periodically shrink the length of local updates.
We show that the decaying strategy helps improve communication efficiency both theoretically and empirically.
\end{abstract}

\begin{IEEEkeywords}
Distributed Optimization, Federated Learning, Local Updates, Communication Efficiency
\end{IEEEkeywords}

%
\IEEEpeerreviewmaketitle

\section{Introduction}
%
%
%
%
\IEEEPARstart{W}{e} study distributed optimization where the data are partitioned among $n$ worker nodes; the data are not necessarily identically distributed.
We seek to learn the model parameter (aka optimization variable) $\x \in \RB^d$ by solving the following distributed empirical risk minimization problem:
\begin{small}
	\begin{equation}
	\label{eq:goal}
	\min_{\x \in \RB^d} \ f(\x) := 
	\frac{1}{n} \sum_{k=1}^n f_k(\x),
	\end{equation}
\end{small}%
where $f_k(\x) := \EB_{\xi \sim \DM_k} \big[ F_{k} \left( \x ; \xi\right) \big]$ and $\DM_k$ is the distribution of data on the $k$-th node with $k \in [n] := \{1, \cdots, n\}$. 
Here $\xi$ denotes by a sample point for simplicity and can be extended to a batch of data~\cite{li2014efficient}.
Such a problem is traditionally solved under centralized optimization paradigms such as parameter servers \cite{li2014scaling}.
Federated Learning (FL), which often has a central parameter server, enables massive edge computing devices to jointly learn a centralized model while keeping all local data localized~\cite{konevcny2015federated,mcmahan2017communication,konevcny2017stochastic,li2019federated,sahu2019federated}. 
As opposed to  centralized optimization, decentralized optimization lets every worker node collaborate only with their neighbors by exchanging information.
A typical decentralized algorithm works in this way: a node collects its neighbors' model parameters $\x$, takes a weighted average, and then performs a (stochastic) gradient descent to update its local parameters~\cite{lian2017can}.
Decentralized optimization can outperform the centralized under specific settings \cite{lian2017can}.


Decentralized optimization, as well as the centralized, suffers from high communication costs.
The communication cost is the bottleneck of distributed optimization when the number of model parameters or the number of worker nodes are large.
It is well known that deep neural networks have a large number of parameters. 
For example, ResNet-50 \cite{he2016deep} has 25 million parameters, so sending $\x$ through a computer network can be expensive and time-consuming. 
Due to modern big data and big models, a large number of worker nodes can be involved in distributed optimization, which further increases the communication cost.
The situation can be exacerbated if worker nodes in distributed learning are remotely connected, which is the case in edge computing and other types of distributed learning.

In recent years, to directly save communication, many researchers let more local updates happen before each synchronization in centralized settings.
A typical and famous example is Local SGD~\cite{mcmahan2017communication,lin2018don,stich2018local,wang2018cooperative,wang2018adaptive}.
As its decentralized counterpart, Periodic Decentralized SGD (PD-SGD) alternates between a fixed number of local updates and one step of decentralized SGD~\cite{wang2018cooperative}.
However, its update scheme is too rigid to balance the trade-off between communication and computation efficiently~\cite{wang2019matcha}.
It is still unclear how to combine local updates and decentralized communications efficiently in decentralized 
settings.

To answer the question, in the paper, we propose a meta algorithm termed as LD-SGD, which is able to incorporate arbitrary update schemes for decentralized optimization.
We provide an analytical framework, which sheds light on the relationship between convergence and update schemes. 
We show that LD-SGD converges with a wide choice of communication patterns for non-convex stochastic optimization problems and non-identically independently distributed training data (i.e., $\DM_1 , \cdots , \DM_n$ are not the same).

We then specify two update schemes to illustrate the effectiveness of LD-SGD.
For the first scheme, we let LD-SGD alternate (i.e., $I_1$ steps of) multiple local updates and multiple (i.e., $I_2$ steps of) decentralized SGDs; see the illustration in Figure~\ref{fig:decentralized} (b).
A reasonable choice of $I_2$ could better trade off the balance between communication and computation both theoretically and empirically.

We observe that more local computation (i.e., large $I_1/I_2$) often leads to higher final errors and less test accuracy.
Therefore, in the second scheme, we propose and analyze a decaying strategy that periodically halves $I_1$.
From our framework, we theoretically verify the efficiency of the strategy.
Finally, as an extension, we test LD-SGD  

\section{Related Work}


\subsection{Decentralized stochastic gradient descent (D-SGD)}
Decentralized (stochastic) algorithms were used as compromises when a powerful central server is not available.
They were studied as consensus optimization in the control community~\cite{ram2010asynchronous,yuan2016convergence,sirb2016consensus}.
\cite{lian2017can} justified the potential advantage of D-SGD over its centralized counterpart. 
D-SGD not only reduces the communication cost but achieves the same linear speed-up as centralized counterparts when more nodes are available~\cite{lian2017can}.
This promising result pushes the research of distributed optimization from a sheer centralized mechanism to a more decentralized pattern~\cite{lan2017communication,tang2018d,koloskova2019decentralized,wang2019matcha,luo2019heterogeneity}.

\subsection{Communication efficient algorithms}
The current methodology towards communication-efficiency in distributed optimization could be roughly divided into three categories. 
The most direct approach is to reduce the size of the messages through gradient compression or sparsification \cite{seide2014bit,lin2017deep,zhang2017zipml,tang2018communication,wang2018atomo,horvath2019stochastic}.
An orthogonal one is to pay more local computation for less communication, e.g.,
one-shot aggregation \cite{zhang2013communication,zhang2015divide,lee2017communication,lin2017distributed,wang2019sharper}, primal-dual algorithms \cite{smith2016cocoa,smith2017federated,hong2018gradient} and distributed Newton methods
\cite{shamir2014communication,zhang2015disco,reddi2016aide,wang2018giant,mahajan2018efficient}.
Beyond them, a simple but powerful method is to reduce the communication frequency by allowing more local updates~\cite{zinkevich2010parallelized,stich2018local,lin2018don,you2018imagenet,wang2018cooperative}, which we focus on in this paper. 
The last category is the push-sum methods that is popular  in control theory community~\cite{assran2019stochastic,wang2019slowmo}.

\subsection{Federated optimization}
The optimization problem implicit in FL is referred to as Federated Optimization (FO).
One of the biggest difference that differs FO from previous distributed optimization is that the training data is generated independently but according to different distributions.
One typical optimization method for FO is is Federated Averaging (FedAvg) \cite{konevcny2016federated,mcmahan2017communication, li2019convergence}, which is a centralized optimization method and is also referred to as Local SGD.
In every iteration of FedAvg, a small subset of nodes is activated, and it alternates between multiple SGDs and sends updated parameters to the central server.
PD-SGD is an extension of FedAvg (or Local SGD) towards decentralized optimization~\cite{wang2018cooperative,haddadpour2019convergence}.
MATCHA~\cite{wang2019matcha} extends PD-SGD to a more federated setting by only activating a random subgraph of the network topology each round.
Our work can be viewed as an attempt to generalize FedAvg to decentralized non-iid settings.

\subsection{Most related work}
Our work is most closely related with ones in~\cite{wang2018cooperative,wang2019matcha}. Specifically, 
\cite{wang2018cooperative} proposed PD-SGD that can also combine decentralization and local updates.
However, they only considered the case of one step of decentralized SGD after a fixed number of local updates.
Moreover, they analyzed PD-SGD by assuming all worker nodes have access to the underlying distribution (hence data are identically distributed).

MATCHA~\cite{wang2019matcha} makes communication happen only among a random small portion of worker nodes at each round.\footnote{Specifically, they first decompose the network topology into joint matchings (or subgraphs), then randomly activates a small portion of matchings.}
When no node is activated, local updates come in.
Consequently, the theory of MATCHA is formulated for random connection matrices (i.e., $\W$ in our case) and does not straightforwardly extend to a deterministic sequence of $\W$.
Our work mainly studies a deterministic sequence of $\W$ but could also extend to random sequences.

\section{Notation and Preliminaries}
\label{sec:notation}

\subsection{Decentralized system}


\begin{figure*}[ht]
	\centering
	\subfloat[5 nodes in a decentralized system]{
		\includegraphics[width=0.49\textwidth] {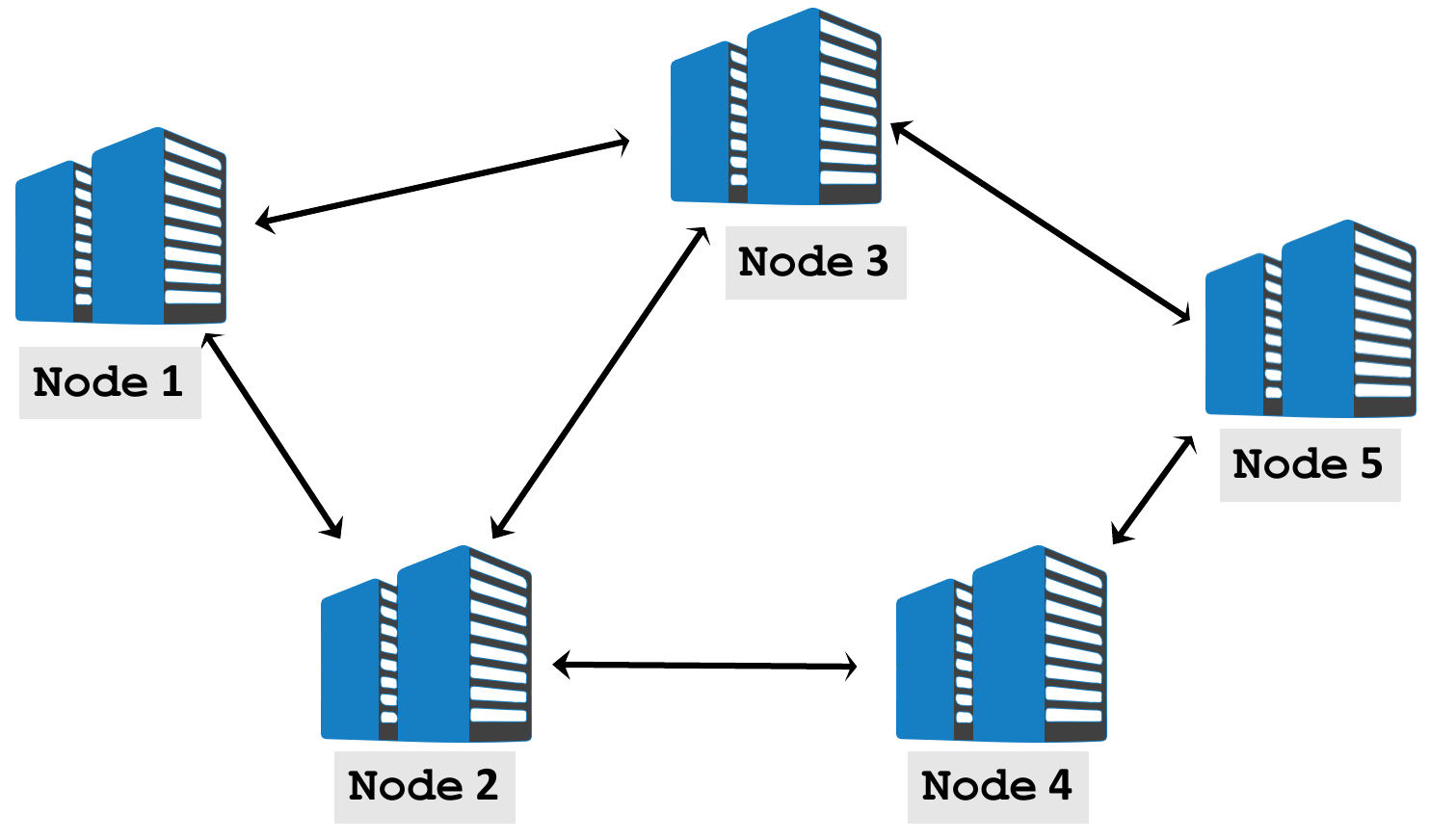}}
	\subfloat[$I_1=3$ and $I_2 =2$]{
		\includegraphics[width=0.49\textwidth] {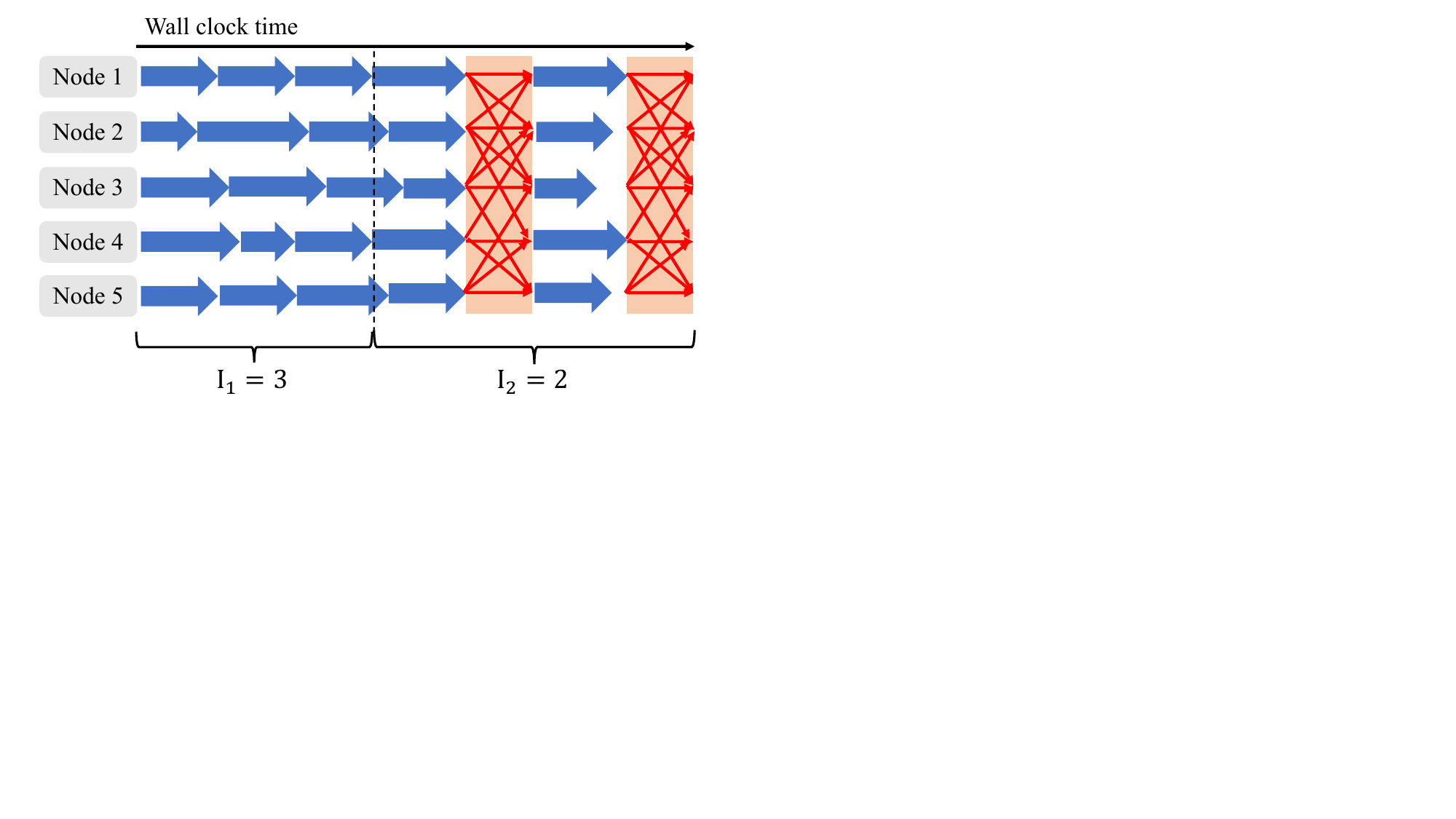}}
	\hspace{-.1in} \\
	\caption{ (a)  Illustration of decentralized system. 
		(b) Illustration of how five nodes run a round of LD-SGD with the first scheme where $I_1=3$ and $I_2 =2$. 
		Blue and red arrows respectively represent local gradient computation and communication among neighbor nodes.
	Different nodes may finish their local computation in different time due to the straggler effect.}
	\label{fig:decentralized}
\end{figure*}


In Figure~\ref{fig:decentralized} (a), we illustrate a decentralized system that doesn't have a central parameter server where each node only communicates with its neighbors. 
Conventionally, a decentralized system can be described by a graph $\GM = ([n], \W)$ where $\W$ is an $n\times n$ doubly stochastic matrix describing the weights of the edges. A nonzero entry $w_{ij}$ indicates that the $i$-th and $j$-th nodes are connected. 

\begin{defi}
	We say a matrix $\W=[w_{ij}] \in \RB^{n {\times} n}$ to be symmetric and doubly stochastic, if $\W$ is symmetric and each row of $\W$ is a probability distribution over the vertex set $[n]$, i.e., $w_{ij} \geq 0, \W = \W^\top$, and $ \W \1_n =\1_n$.
\end{defi}


\subsection{Notation}
Let $\x^{(k)} \in \RB^d$ be the optimization variable held by the $k$-th node.
The step is indicated by a subscript, e.g., $\x_t^{(k)}$ is the parameter held by the $k$-th node in step $t$.
Note that at any time moment, $\x^{(1)}, \cdots , \x^{(n)}$ may not be equal.
The concatenation of all the variables is
\begin{equation}
\X  := \left[ \x^{(1)}, \cdots, \x^{(n)} \right] \in \RB^{d {\times} n}.
\end{equation}
The averaged variable is
$ \overline{\x} := \frac{1}{n} \sum_{k=1}^n  \x^{(k)} = \frac{1}{n} \X \1_n$.
The derivative of $F_k$ w.r.t.\ $\x^{(k)}$ is $\nabla F_k (\x^{(k)};\xi^{(k)}) \in \RB^d$ and the concatenated gradient evaluated at $\X$ with datum $\xi$ is
\begin{equation*}
\G( \X; \xi) := \left[ \nabla F_1(\x^{(1)};\xi^{(1)}), \cdots, \nabla F_n(\x^{(n)};\xi^{(n)})\right] \in \RB^{d \times n}.
\end{equation*} 
We denote the set of natural numbers by $\sN = \{1,2, \cdots\}$.
We define $[n] : = \{1, 2, \cdots, n\}$ and $[s:t]$ means the interval between the positive integers $s$ and $t$, i.e., if $s\le t$, $[s:t] =  \{l \in \sN :s \le l \le t\}$, otherwise $[s:t] = \emptyset$.
For any set $\IM$ and real number $x$, we define $\IM + x := \{ t = y + x: y \in \IM  \}$.

\subsection{Decentralized SGD (D-SGD)}
D-SGD works in the following way~\cite{bianchi2013performance,lan2017communication}. 
At Step $t$, the $k$-th node randomly chooses a local datum $\xi_{t}^{(k)}$, and uses its current local variable $\x_{t}^{(k)}$ to evaluate the stochastic gradient $\nabla F_k \big( \x_{t}^{(k)} ; \xi_t^{(k)}\big)$. 
Then each node performs stochastic gradient descent (SGD) to obtain an intermediate variable $\x_{t+\frac{1}{2}}^{(k)}$ and finally finishes the update by collecting and aggregating its neighbors' intermediate variables:
\begin{eqnarray}
\x_{t+\frac{1}{2}}^{(k)} 
& \longleftarrow & \x_{t}^{(k)} - \eta \nabla F_k \big( \x_{t}^{(k)} ; \xi_t^{(k)}\big), \label{eq:psgd1}\\
\x_{t+1}^{(k)} 
& \longleftarrow &  \sum_{l \in \NM_k} w_{kl} \x_{t + \frac{1}{2}}^{(l)}, \label{eq:psgd2}
\end{eqnarray}
where $\NM_k = \{ l \in [n] \, | \, w_{kl} > 0 \}$ contains the indices of the $k$-th node's neighbors. 
In matrix form, this can be captured by
$\X_{t+1} = (\X_t - \eta \G(\X_t; \xi_t)) \W$.
On the right of the vertical imaginary line in Figure~\ref{fig:decentralized} (b), we depict two rounds of D-SGD. 
D-SGD requires $T$ communications per $T$ steps.

\begin{remark}
	The order of Step~\ref{eq:psgd1} and Step~\ref{eq:psgd2} in D-SGD can be exchanged .
	In this way, we first average the local variable with neighbors and then update the local stochastic gradient into the local variable. 
	The update rule becomes
	$\x_{t+1}^{(k)} 
	\longleftarrow   \sum_{l \in \NM_k} w_{kl} \x_{t}^{(l)} - \eta \nabla F_k \big( \x_{t}^{(k)} ; \xi_t^{(k)} \big)$.
	The benefit is that the computation of stochastic gradients (i.e., $\nabla F_k \big( \x_{t}^{(k)} ; \xi_t^{(k)} \big)$) and communication (i.e., Step~\ref{eq:psgd2}) can be run in parallel. 
	Our theory in latter section is applicable to these cases.
\end{remark}

\section{Local Decentralized SGD (LD-SGD)}
In this section, we first propose LD-SGD which is able to incorporate arbitrary update schemes.
Then we present  convergence analysis for it.

\subsection{The Algorithm}

\begin{algorithm}[h!]
	\caption{Local Decentralized SGD (LD-SGD)}
	\label{alg:main}
	\begin{algorithmic}
		\STATE {\bfseries Input:} total steps $T$, step size $\eta$, communication set $\IM_T$
		\IF {use randomized $\W$}
		\STATE{Compute the distribution $\DM$ according to~\cite{wang2019matcha}.}
		\ENDIF
		\FOR{$t=1$ {\bfseries to} $T$}
		\STATE $\X_{t+\frac{1}{2}} \gets \X_t - \eta \G(\X_t;\xi_t)$
		\IF{$  t \in \IM_T$}
		\IF {use randomized $\W$}
		\STATE{Independently generate $\W \sim \DM$}
		\ENDIF
		\STATE $\X_{t+1} \gets \X_{t+\frac{1}{2}}\W$  \RCOMMENT { Communication }
		\ELSE
		\STATE $\X_{t+1} \gets \X_{t+\frac{1}{2}}$ \RCOMMENT { Local updates }
		\ENDIF
		\ENDFOR
	\end{algorithmic}
\end{algorithm}

Algorithm~\ref{alg:main} summarizes the proposed LD-SGD algorithm.
We can write it in matrix form:
\begin{equation}
\X_{t+1} = (\X_{t} - \G(\X_t; \xi_t))\W_t,
\end{equation}
where  $\W_t \in \RB^{n \times n}$ is the connected matrix defined by
\begin{equation}
\label{eq:W_main}
\W_t = \left\{ \begin{array}{ll}
\I_n & \text{if} \ t  \notin \IM_T,  \\
\W & \text{if} \ t  \in \IM_T.
\end{array}\right. 
\end{equation}
Here $\W$ is a prespecified doubly stochastic matrix that is deterministic.
At each iteration $t$, each node first performs one step of SGD independently using data stored locally.
When $t \notin \IM_T$, each node doesn't communicate with others and goes into the next iteration with $\X_{t+1} = \X_{t} - \G(\X_t; \xi_t)$.
When $t \in \IM_T$, each node then perform one step of decentralized communication and use the resulting vector as a new parameter, i.e., $\X_{t+1} = (\X_{t} - \G(\X_t; \xi_t))\W$.
It periodically performs multiple local updates and D-SGD.
Without local updates, LD-SGD would be the standard D-SGD algorithm.
As an extension, we can use a randomized $\W$ that is generated independently at each round.
This is motivated by \cite{wang2019matcha} which proposes to use a randomized $\W$ that only activates a small portion of devices to improve communication efficiency.
For clear presentation, we use the deterministic $\W$ for our theory.
However, it is handy to parallel our theoretical results to an i.i.d. sequence of $\W$'s.

Let $\IM_T$ index the steps where decentralized SGD is performed.
Different choices of $\IM_T$ give rise to different update schemes and then lead to different communication efficiency.
For example, when we choose $\IM_T^0 = \{ t \in [T] : t \ \text{mod} \ I = 0  \} $ where $I$ is the communication interval, LD-SGD recovers the previous work PD-SGD~\cite{wang2018cooperative}.
Therefore, it is natural to explore how different $\IM_T$ affects the convergence of LD-SGD.
Our theory allows for arbitrary $\IM_T \subset [T]$.


\subsection{Convergence Analysis}
\label{sec:convergence}

\subsubsection{Assumptions}

In Eq.~\eqref{eq:goal}, we define $f_k(\x) := \EB_{\xi \sim \DM_k} \big[ F_{k} \left( \x ; \xi\right) \big]$ as the objective function of the $k$-th node.
Here, $\x $ is the optimization variable and $\xi $ is a data sample.
Note that $f_k(\x)$ captures the data distribution in the $k$-th node.
We make a standard assumption: $f_1 , \cdots , f_n$ are smooth.

\begin{assumption}[Smoothness]
	\label{asum:smooth}
	For all $k \in [n]$, $f_k$ is smooth with modulus $L$, i.e., 
	\begin{equation*}
	\big\| \nabla f_k(\x) - \nabla f_k(\y) \big\| 
	\: \leq \: L \big\| \x - \y \big\|, 
	\quad \forall \  \x, \y \in \RB^d. 
	\end{equation*}
\end{assumption}

We assume bounded stochastic gradients variance, an assumption has been made by the prior work \cite{lian2017can,wang2018cooperative,tang2018d,tang2018communication}.

\begin{assumption}[Bounded variance]
	\label{asum:within-var}
	There exists some $\sigma > 0$ such that $\forall \ k \in [n]$, 
	\begin{equation*}
	\EB_{\xi \sim \DM_k} \big\| \nabla F_k(\x;\xi) - \nabla f_k(\x) \big\|^2 
	\: \leq \: \sigma^2, 
	\quad \forall \ \x \in \RB^d.
	\end{equation*}
\end{assumption}

Recall from~\eqref{eq:goal} that $f(\x) = \frac{1}{n} \sum_{k=1}^n f_k(\x)$ is the global objective function.
If data distributions are not identical ($\DM_k \neq \DM_l$ for $k \neq l$), then the global objective is not the same to the local objectives.
In this case, we define $\kappa$ to quantify the degree of non-iid.
If the data across nodes are iid, then $\kappa = 0$.

\begin{assumption}[Degree of non-iid]
	\label{asum:inter-var}
	There exists some $\kappa \ge 0$ such that
	\begin{equation*}
	\frac{1}{n}\sum_{k=1}^n \big\| \nabla f_k(\x) - \nabla f(\x) \big\|^2
	\: \leq \: \kappa^2, 
	\quad \forall \ \x \in \RB^d.
	\end{equation*}
\end{assumption}

Finally, we need to assume the nodes are well connected;
otherwise, the update in one node cannot be propagated to another node within a few iterations.
In the worst case, if the system is not fully connected, the algorithm will not minimize $f(\x)$. 
We use $\rho = |\lambda_2|$ to quantify the connectivity where $\lambda_2$ is the second largest absolute eigenvalue of $\W$. A small $\rho$ indicates nice connectivity.
If the connection forms a complete graph, then $\W =\frac{1}{n} \1_n \1_n^\top$, and thus $\rho = 0$.

\begin{assumption}[Nice connectivity]
	\label{asum:W}
	The $n\times n$ connectivity matrix $\W $ is symmetric doubly stochastic.
	Denote its eigenvalues by $1 = |\lambda_1| > |\lambda_2| \ge \cdots \ge |\lambda_n| \ge 0$. 
	We assume the spectral gap $1- \rho \in (0, 1]$ where $\rho = |\lambda_2| \in [0, 1)$.
\end{assumption}

Actually, our theory can be directly extended to stochastic $\W$, that is, at each communication round, we generate an instance of $\widehat{\W}$ that conforms to a given distribution $\DM$ in an independent and identical manner.
For completeness, the resulting Algorithm is summarized in Algorithm~\ref{alg:main_randomW}.
The only difference for theory is to replace $\rho(\W)$ with $\rho(\EB_{\widehat{\W} \sim  \DM} \widehat{\W})$ and similar results follows.
In experiments, we test random $\W$ for LD-SGD following the same way as~\cite{wang2019matcha} did.
This is because nodes in reality are often connected via wireless connections, and hence, random matrices are more realistic.

\subsubsection{Main Results}

Recall that $\overline{\x}_t = \frac{1}{n} \sum_{k=1}^n \x_t^{(k)} $ is defined as the averaged variable in the $t$-th iteration. 
Note that the objective function $f(\x)$ is often non-convex when neural networks are applied. 
$\rho_{s, t-1}$ is very important in our theory because it captures the characteristics of each update scheme. 
All the proof can be found in~\ref{append:PD-SGD}.

Typically a single step of decentralized communication pushes all local parameters to move towards their mean, but can't make sure they are synchronized (and identical).
This implies one decentralized communication happening many iterations before will affect the current update due to such incomplete synchronization.
For example, the aggregation performed at iteration $s$ will propagate the variance of stochastic gradient computed at iteration $s$ to the current update $t$, incurring a multiplier to the final variance term ($\sigma^2$).
However, communication shrinks the effect in a exponential manner with the exponent $\rho$.
We capture the shrinkage effect caused by decentralized communication starting from iteration $s$ to $t$ by $\rho_{s, t-1}$.

\begin{defi}
	\label{def:rho}
	For any $s < t$, define $\rho_{s, t-1} = \|\Ph_{s, t-1} - \frac{1}{n} \1_n\1_n^\top \|$ where $\Ph_{s, t-1} = \prod_{l=s}^{t-1} \W_l$ with $\W_l$ given in~\eqref{eq:W_main}.
	Actually, we have $\rho_{s, t-1} = \rho^{|[s: t-1] \cap \IM_T|}$, where $[s:t-1] = \{l \in \sN: s \le l \le t-1\}$ and $\rho$ is defined in Assumption~\ref{asum:W}.
\end{defi}

\begin{theorem}[LD-SGD with any $\IM_T$]
	\label{thm:PD-SGD}
	Let Assumptions~\ref{asum:smooth},~\ref{asum:within-var},~\ref{asum:inter-var},~\ref{asum:W} hold and the constants $L$, $\kappa$, $\sigma$, and $\rho$ be defined therein. 
	Let $\Delta = f(\overline{\x}_0) - \min_{\x } f (\x )$ be the initial error.
	For any fixed $T$, 
	\begin{gather*}
	A_T =\frac{1}{T}\sum_{t=1}^{T} \sum_{s=1}^{t-1} \rho^2_{s, t-1}, 
	B_T = \frac{1}{T} \sum_{t=1}^{T} \left(\sum_{s=1}^{t-1} \rho_{s, t-1} \right)^2, \\
	C_T = \max_{s \in [T-1]} \sum_{t=s+1}^T  \rho_{s, t-1}  \left( \sum_{l=1}^{t-1}  \rho_{l, t-1} \right).
	\end{gather*}
	If the learning rate $\eta$ is small enough such that
	\begin{equation}
	\label{eq:lr}
	\eta \: < \: \min \bigg\{ \frac{1}{2L}, \; \frac{1}{4\sqrt{2}L \sqrt{C_T}} \bigg\} ,
	\end{equation}
	then
	\begin{align}
	\label{eq:main}
	\frac{1}{T} \sum_{t=1}^{T} & \EB \big\|\nabla f (\overline{\x}_t) \big\|^2  
	\: \leq \: \nonumber \\
&	{\underbrace{\frac{2\Delta}{\eta T} + \frac{\eta L\sigma^2}{n}}_{\text{fully sync SGD}}}  +
	{\underbrace{\vphantom{ \left(\frac{a^{0.3}}{b}\right) } 16\eta^2L^2 \left( A_T \sigma^2 +B_T\kappa^2 \right)}_{ \text{residual error}}} .
	\end{align}
\end{theorem}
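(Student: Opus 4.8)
The plan is to follow the standard descent-lemma route for non-convex SGD, but carefully track the extra error coming from the fact that the local iterates $\x_t^{(k)}$ disagree with the average $\overline{\x}_t$. First I would establish the one-step recursion for the averaged iterate. Since $\W_t\1_n = \1_n$, right-multiplying by $\frac{1}{n}\1_n$ kills the mixing step, so $\overline{\x}_{t+1} = \overline{\x}_t - \frac{\eta}{n}\G(\X_t;\xi_t)\1_n = \overline{\x}_t - \frac{\eta}{n}\sum_k \nabla F_k(\x_t^{(k)};\xi_t^{(k)})$. Applying $L$-smoothness of $f$ (Assumption~\ref{asum:smooth}) and taking expectations, I get the usual bound
\begin{equation*}
\EB f(\overline{\x}_{t+1}) \le \EB f(\overline{\x}_t) - \frac{\eta}{2}\EB\|\nabla f(\overline{\x}_t)\|^2 + \frac{\eta L^2}{2n}\sum_k \EB\|\x_t^{(k)} - \overline{\x}_t\|^2 + \frac{\eta^2 L\sigma^2}{2n},
\end{equation*}
up to absorbing the $\frac{1}{2L\eta}\le 1$ factor using $\eta < \frac{1}{2L}$; the cross term is handled by splitting $\nabla f(\overline{\x}_t) - \frac{1}{n}\sum_k \nabla f_k(\x_t^{(k)})$ and Jensen. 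The key quantity to control is the \emph{consensus error} $\Xi_t := \sum_k \EB\|\x_t^{(k)} - \overline{\x}_t\|^2 = \EB\|\X_t(\I_n - \frac{1}{n}\1_n\1_n^\top)\|_F^2$.

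The heart of the proof is bounding $\Xi_t$ in terms of past gradients. Unrolling the matrix recursion $\X_{t+1} = (\X_t - \eta\G(\X_t;\xi_t))\W_t$ from the last "reset" gives $\X_t = \X_0\Ph_{0,t-1} - \eta\sum_{s=0}^{t-1}\G(\X_s;\xi_s)\Ph_{s,t-1}$ (with the convention that $\Ph_{s,t-1} = \prod_{l=s}^{t-1}\W_l$, and assuming WLOG $\X_0$ is already at consensus so the first term drops out after projecting off the mean). Then $\X_t(\I_n - \frac{1}{n}\1_n\1_n^\top) = -\eta\sum_s \G(\X_s;\xi_s)(\Ph_{s,t-1} - \Q)$ where $\Q = \frac{1}{n}\1_n\1_n^\top$, using $\Ph_{s,t-1}\Q = \Q$. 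Taking Frobenius norms, the triangle inequality and $\|\Ph_{s,t-1}-\Q\| = \rho_{s,t-1}$ (Definition~\ref{def:rho}) give $\EB\|\X_t(\I-\Q)\|_F \le \eta\sum_s \rho_{s,t-1}\EB\|\G(\X_s;\xi_s)\|_F$. I would then split each $\G(\X_s;\xi_s)$ into (i) noise $\G - \overline{\G}$ where $\overline{\G}$ is the expectation, contributing variance $\le n\sigma^2$ and — since noise at different steps is independent — entering only through $\sum_s \rho_{s,t-1}^2$; (ii) the non-iid deviation $\nabla f_k(\x_s^{(k)}) - \nabla f(\overline{\x}_s)$, bounded via Assumption~\ref{asum:inter-var} (after another $L$-smoothness step to move from $\x_s^{(k)}$ to $\overline{\x}_s$) contributing $\kappa^2$; (iii) the "signal" $n\|\nabla f(\overline{\x}_s)\|^2$; and (iv) a feedback term $L^2\Xi_s$ from the smoothness step. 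Squaring $\sum_s \rho_{s,t-1}(\cdots)$ with Cauchy--Schwarz turns terms (ii)--(iv) into factors of $\big(\sum_s\rho_{s,t-1}\big)^2$ while (i) stays as $\sum_s\rho_{s,t-1}^2$.

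Next I would sum the $\Xi_t$ bound over $t=1,\dots,T$, divide by $T$, and recognize the definitions: the $\sigma^2$ terms aggregate into $A_T\sigma^2$, the $\kappa^2$ terms into $B_T\kappa^2$, and the gradient/feedback terms into $C_T$ times $\big(\frac1T\sum_s \EB\|\nabla f(\overline{\x}_s)\|^2 + L^2\cdot\frac1T\sum_s\Xi_s\big)$ — this is where $C_T = \max_s\sum_t \rho_{s,t-1}(\sum_l\rho_{l,t-1})$ enters, by swapping the order of summation over $s$ and $t$ and pulling out the max. This yields a self-referential inequality $\frac1T\sum_t\Xi_t \le \eta^2 n(A_T\sigma^2 + B_T\kappa^2) + \eta^2 n\, C_T\big(\tfrac1T\sum_t\EB\|\nabla f(\overline{\x}_t)\|^2 + L^2\tfrac1T\sum_t\Xi_t\big)$; the condition $\eta < \frac{1}{4\sqrt2 L\sqrt{C_T}}$ makes $\eta^2 n C_T L^2 \le \frac{1}{32}n < \frac12$ (or the relevant constant) so I can absorb the $\Xi$ term on the right back into the left, getting $\frac1T\sum_t\Xi_t \lesssim \eta^2 n(A_T\sigma^2+B_T\kappa^2) + \eta^2 n C_T\cdot\frac1T\sum_t\EB\|\nabla f(\overline{\x}_t)\|^2$. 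Plugging this into the telescoped descent inequality $\frac{\eta}{2}\cdot\frac1T\sum_t\EB\|\nabla f(\overline{\x}_t)\|^2 \le \frac{\Delta}{T} + \frac{\eta^2 L\sigma^2}{2n} + \frac{\eta L^2}{2n}\cdot\frac1T\sum_t\Xi_t$, the $C_T$ contribution to the gradient term is again $\propto \eta^3 L^3 C_T \cdot \frac1T\sum\|\nabla f\|^2$, which the learning-rate bound forces below $\frac{\eta}{4}$, so it too gets absorbed, leaving the claimed $\frac{2\Delta}{\eta T} + \frac{\eta L\sigma^2}{n} + 4\eta^2 L^2(A_T\sigma^2 + B_T\kappa^2)$ after tidying constants.

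The main obstacle I expect is the bookkeeping in step (ii)--(iv): correctly handling the double layer of $L$-smoothness (moving from $\nabla f_k(\x_s^{(k)})$ to $\nabla f_k(\overline{\x}_s)$ to $\nabla f(\overline{\x}_s)$), making sure the independence of the stochastic noise across steps is used so that only $\sum_s\rho_{s,t-1}^2$ (not its square) multiplies $\sigma^2$, and keeping all the Cauchy--Schwarz applications tight enough that the final constants match $4\eta^2 L^2$ exactly rather than some larger multiple. A secondary subtlety is justifying the "reset to consensus" / handling of the initial block $\X_0\Ph_{0,t-1}$ cleanly — in the clean statement one assumes all nodes start from the same point, so $\X_0(\I-\Q)=0$ and this term vanishes, but one should be careful that the unrolling is from step $0$ (or the first communication) rather than an arbitrary $s$, so that $\rho_{s,t-1}$ as defined in Definition~\ref{def:rho} is exactly what appears.
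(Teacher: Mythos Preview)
Your plan is correct and follows the same route as the paper: unroll $\X_t(\I_n-\Q)=-\eta\sum_{s}\G(\X_s;\xi_s)(\Ph_{s,t-1}-\Q)$ from the common initialization, split each $\G$ into noise / smoothness / heterogeneity / signal, use independence across steps so that the $\sigma^2$ term picks up only $\sum_s\rho_{s,t-1}^2$, Cauchy--Schwarz the rest to get $\big(\sum_s\rho_{s,t-1}\big)^2$, swap the order of summation over $s$ and $t$ to produce $A_T,B_T,C_T$, close the self-referential inequality on $\frac1T\sum_t V_t$, and feed it into the telescoped descent lemma.

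The one difference worth noting is precisely the constants issue you flagged. You center the gradient decomposition at $\nabla f(\overline{\x}_s)$ and plan to absorb the resulting $C_T\cdot\frac1T\sum_s\EB\|\nabla f(\overline{\x}_s)\|^2$ into the left-hand side of the final inequality; this works but costs a constant factor on the leading $\frac{2\Delta}{\eta T}+\frac{\eta L\sigma^2}{n}$ terms. The paper instead retains the extra negative term $-\frac{\eta}{2}(1-\eta L)\,\EB\big\|\overline{\nabla f}(\X_t)\big\|^2$ in the one-step descent lemma (you threw it away), and centers the decomposition at $\overline{\nabla f}(\X_s)=\frac1n\sum_k\nabla f_k(\x_s^{(k)})$ rather than at $\nabla f(\overline{\x}_s)$, so that $\frac1n\|\nabla f(\X_s)\|_F^2\le 8L^2V_s+4\kappa^2+4\EB\|\overline{\nabla f}(\X_s)\|^2$. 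The signal contribution coming out of the consensus bound is then $16\eta^2L^2C_T\cdot\frac1T\sum_s\EB\|\overline{\nabla f}(\X_s)\|^2$, which is cancelled \emph{for free} against the retained negative term provided $\eta L+16\eta^2L^2C_T\le 1$; this is exactly what $\eta<\min\{\tfrac{1}{2L},\tfrac{1}{4\sqrt2 L\sqrt{C_T}}\}$ guarantees, and it is how the paper lands on the sharp coefficient $4\eta^2L^2$ rather than a larger multiple.
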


The constant learning rate can be replaced by annealing learning rate, but the convergence rate remains the same.

\begin{corollary}
	\label{cor:computation}
	If we choose the learning rate as $\eta = \sqrt{\frac{n}{T}}$ in Theorem~\ref{thm:PD-SGD}, then when $T > 4L^2 n \max\{1,4C_T\} $, we have, 
	\begin{equation}
	\label{eq:linear}
	\frac{1}{T} \sum_{t=1}^{T} \EB \|\nabla f (\overline{\x}_t ) \|^2 
	\leq
	\frac{2 \Delta {+} L \sigma^2 }{\sqrt{n T}}
	{+} \frac{ 16nL^2  \left( A_T \sigma^2 {+} B_T\kappa^2 \right) }{ T }.
	\end{equation}
\end{corollary}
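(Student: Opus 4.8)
The plan is to derive Corollary~\ref{cor:computation} directly from Theorem~\ref{thm:PD-SGD} by substituting the specific learning rate $\eta = \sqrt{n/T}$ and checking that this choice satisfies the admissibility constraint~\eqref{eq:lr}. First I would verify the learning-rate condition: we need $\eta = \sqrt{n/T} < \min\{\frac{1}{2L}, \frac{1}{4\sqrt 2 L\sqrt{C_T}}\}$. The first bound $\sqrt{n/T} < \frac{1}{2L}$ is equivalent to $T > 4L^2 n$, and the second bound $\sqrt{n/T} < \frac{1}{4\sqrt 2 L \sqrt{C_T}}$ is equivalent to $T > 32 L^2 n C_T$. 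Both are implied by the hypothesis $T > 4 L^2 n \max\{1, 4C_T\} = \max\{4L^2n, 16 L^2 n C_T\}$ — actually one should double-check the constant here, since $32 L^2 n C_T$ versus $16 L^2 n C_T$ differ by a factor of $2$; I expect the intended reading is that the stated threshold guarantees the condition up to absorbing constants, or that a slightly sharper bookkeeping of the ``$4\sqrt 2$'' makes it exact. Either way, under the stated hypothesis $\eta$ is a legitimate choice, so~\eqref{eq:main} applies.

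Next I would plug $\eta = \sqrt{n/T}$ into the right-hand side of~\eqref{eq:main} term by term. The ``fully sync SGD'' part becomes $\frac{2\Delta}{\eta T} + \frac{\eta L \sigma^2}{n} = \frac{2\Delta}{\sqrt{nT}} + \frac{L\sigma^2}{\sqrt{nT}} = \frac{2\Delta + L\sigma^2}{\sqrt{nT}}$, which is exactly the first term of~\eqref{eq:linear}. The ``residual error'' part becomes $4\eta^2 L^2 (A_T \sigma^2 + B_T \kappa^2) = 4 \cdot \frac{n}{T} \cdot L^2 (A_T\sigma^2 + B_T\kappa^2) = \frac{4nL^2(A_T\sigma^2 + B_T\kappa^2)}{T}$, which is exactly the second term of~\eqref{eq:linear}. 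Summing the two pieces gives precisely the claimed bound, completing the proof.

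This is essentially a one-step corollary, so there is no serious obstacle; the only point requiring care is the bookkeeping in the learning-rate condition. The main thing to get right is confirming that the threshold $T > 4L^2 n \max\{1, 4C_T\}$ indeed implies $\eta = \sqrt{n/T}$ lies below the minimum in~\eqref{eq:lr} — in particular tracking whether the factor coming from $4\sqrt 2$ (i.e. $32$) is correctly dominated by the $16 C_T$ appearing in the hypothesis, or whether one should instead note $\max\{1,4C_T\} \ge 4C_T$ and that the constant in the theorem statement can be taken slightly looser. I would state this verification cleanly first, then perform the substitution, and remark that the condition $T > 4L^2 n$ (the $\max$ with $1$) is what ensures the leading term actually dominates, making the bound meaningful.
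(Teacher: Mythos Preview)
Your proposal is correct and is exactly the intended approach: the corollary follows by direct substitution of $\eta=\sqrt{n/T}$ into~\eqref{eq:main}, together with verifying the admissibility condition~\eqref{eq:lr}. Your observation about the constant is accurate---the hypothesis $T>4L^2 n\max\{1,4C_T\}$ yields $T>16L^2 n C_T$ while the second branch of~\eqref{eq:lr} strictly requires $T>32L^2 n C_T$, so the stated threshold is off by a factor of~$2$; this is a minor bookkeeping slip in the paper's statement rather than a flaw in your argument.
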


\subsection{Sufficient condition for convergence.} 
If the chosen $\IM_T$ satisfies the sublinear condition that 
\begin{equation}
\label{eq:sublinear}
A_T = o(T), \ B_T = o(T) \ \text{and} \ C_T = o(T),
\end{equation}
we thereby prove the convergence of LD-SGD with the update scheme $\IM_T$ to a stationary point asymptotically, e.g., a local minimum or saddle point (which follows from Corollary~\ref{cor:computation}).
However, not every update scheme satisfies~\eqref{eq:sublinear}.
For example, when $\IM_T = \{T\}$, we have $\rho_{s, t-1}=1$ for all $s < t \le T$ and thus $A_T = \Theta(T), B_T = \Theta(T^2)$ and $C_T = \Theta(T^2)$.
But Theorem~\ref{thm:bound} shows that as long as $\text{gap}(\IM_T)$ is small enough (for example, $\text{gap}(\IM_T) = O(T^a)$ for some $a \in [0, 1/2)$), the sublinear condition holds.
The gap indicates the largest number of local SGD steps before a communication round of D-SGD is triggered.
Intuitively, frequent communication results a small gap.
So there is still a wide range of $\IM_T$ that meets the condition as long as we control the gap.

\begin{defi}[Gap]
	\label{def:gap}
	For any set $\IM_T = \{ e_1, \cdots, e_g \} \subset [T]$ with $e_i < e_{j}$ for $i < j$, the gap of $\IM_T$ is defined as
	\begin{equation}
	\text{gap}(\IM_T) = \max_{i \in [g+1]} (e_i - e_{i-1}) 
	\end{equation}
	where  $e_0 =0, e_{g+1}=T$.
\end{defi}

\begin{theorem}
	\label{thm:bound}
	Let $A_T, B_T, C_T$ be defined in Theorem~\ref{thm:PD-SGD}. Then for any $\IM_T$, we have
	\begin{gather*}
		A_T \le   \frac{ \text{gap}(\IM_T)}{2} \left[ \frac{1+\rho^2}{1-\rho^2} -1  \right],
	\max\{ B_T, C_T \} \le  \frac{\text{gap}(\IM_T)^2}{(1-\rho)^2}. 
	\end{gather*}
\end{theorem}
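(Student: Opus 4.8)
The plan is to bound each of $A_T$, $B_T$, $C_T$ by first controlling the inner sums $\sum_{s=1}^{t-1}\rho_{s,t-1}$ and $\sum_{s=1}^{t-1}\rho_{s,t-1}^2$ uniformly in $t$, using the explicit formula $\rho_{s,t-1}=\rho^{|[s:t-1]\cap\IM_T|}$ from Definition~\ref{def:rho}. The key observation is that as $s$ decreases from $t-1$ down to $1$, the exponent $|[s:t-1]\cap\IM_T|$ is nondecreasing in step sizes governed by the spacing of communication steps: between two consecutive elements of $\IM_T$ the exponent stays constant, and each gap contributes at most $\text{gap}(\IM_T)$ consecutive values of $s$ with the same exponent. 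Hence for fixed $t$,
\[
\sum_{s=1}^{t-1}\rho_{s,t-1} \;\le\; \text{gap}(\IM_T)\sum_{j=0}^{\infty}\rho^{j} \;=\; \frac{\text{gap}(\IM_T)}{1-\rho},
\qquad
\sum_{s=1}^{t-1}\rho_{s,t-1}^2 \;\le\; \text{gap}(\IM_T)\sum_{j=0}^{\infty}\rho^{2j} \;=\; \frac{\text{gap}(\IM_T)}{1-\rho^2}.
\]
This is the single estimate that does all the work; everything else is assembling it.

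First I would prove the $A_T$ bound: since $A_T=\frac1T\sum_{t=1}^T\sum_{s=1}^{t-1}\rho_{s,t-1}^2$, I want a sharper handle than just plugging in the second display above, because the claimed bound is $\text{gap}(\IM_T)\frac{1+\rho^2}{1-\rho^2}$, not $\frac{\text{gap}(\IM_T)}{1-\rho^2}$. The refinement is that the very first block of $s$-values (those $s$ with $[s:t-1]\cap\IM_T=\emptyset$) contributes terms equal to $1=\rho^0$, and there are at most $\text{gap}(\IM_T)$ of them — wait, more carefully: the block with exponent $0$ has size at most $\text{gap}(\IM_T)$, and blocks with exponent $j\ge 1$ also have size at most $\text{gap}(\IM_T)$, so $\sum_s \rho_{s,t-1}^2 \le \text{gap}(\IM_T)(1 + \rho^2 + \rho^4 + \cdots) = \frac{\text{gap}(\IM_T)}{1-\rho^2}$. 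To match the stated form one notes $\frac{1}{1-\rho^2}\le \frac{1+\rho^2}{1-\rho^2}$ trivially, so the bound in the theorem is implied (and is in fact looser). Averaging over $t$ preserves the uniform bound, giving $A_T\le \text{gap}(\IM_T)\frac{1+\rho^2}{1-\rho^2}$.

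Next, for $B_T=\frac1T\sum_{t=1}^T\big(\sum_{s=1}^{t-1}\rho_{s,t-1}\big)^2$, I would simply substitute the first uniform bound: each term is at most $\big(\frac{\text{gap}(\IM_T)}{1-\rho}\big)^2$, and averaging gives $B_T\le \frac{\text{gap}(\IM_T)^2}{(1-\rho)^2}$. For $C_T=\max_{s\in[T-1]}\sum_{t=s+1}^T\rho_{s,t-1}\big(\sum_{l=1}^{t-1}\rho_{l,t-1}\big)$, I bound the inner parenthesized sum by $\frac{\text{gap}(\IM_T)}{1-\rho}$ uniformly, pulling it out, leaving $\max_s\sum_{t=s+1}^T\rho_{s,t-1}$; by the same block-counting argument, as $t$ ranges over $s+1,\dots,T$ the exponent $|[s:t-1]\cap\IM_T|$ increases through consecutive blocks of length at most $\text{gap}(\IM_T)$, so $\sum_{t=s+1}^T\rho_{s,t-1}\le \frac{\text{gap}(\IM_T)}{1-\rho}$, yielding $C_T\le \frac{\text{gap}(\IM_T)^2}{(1-\rho)^2}$. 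Combining the last two gives $\max\{B_T,C_T\}\le \frac{\text{gap}(\IM_T)^2}{(1-\rho)^2}$.

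The main obstacle — really the only place requiring care — is the combinatorial block-counting lemma: making precise that for each nonnegative integer $j$, the number of indices $s\in[1:t-1]$ (resp. $t\in[s+1:T]$) with $|[s:t-1]\cap\IM_T|=j$ is at most $\text{gap}(\IM_T)$. This follows because consecutive integers $s$ with the same count lie strictly between two consecutive elements of $\IM_T\cup\{0,T\}$ (or before the first / after the last), and by Definition~\ref{def:gap} any such run has length at most $\text{gap}(\IM_T)$; I would state this as a short lemma and then the three bounds fall out by geometric summation as above. No delicate analysis is needed beyond this counting step and $\sum_{j\ge0}\rho^j=\frac1{1-\rho}$, $\sum_{j\ge0}\rho^{2j}=\frac1{1-\rho^2}$.
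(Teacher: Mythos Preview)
Your proposal is correct and follows essentially the same route as the paper: both arguments rest on the block-counting observation that for fixed $t$ (resp.\ fixed $s$) the level sets $\{s:|[s:t-1]\cap\IM_T|=j\}$ have size at most $\text{gap}(\IM_T)$, giving the uniform bounds $\sum_{s=1}^{t-1}\rho_{s,t-1}\le \text{gap}(\IM_T)/(1-\rho)$ and $\sum_{t=s+1}^{T}\rho_{s,t-1}\le \text{gap}(\IM_T)/(1-\rho)$, from which $B_T$ and $C_T$ follow exactly as you wrote. The only cosmetic difference is in the $A_T$ step: the paper partitions the outer sum over $t$ into blocks $(e_l,e_{l+1}]$ and sums explicitly, arriving at a bound of the form $\text{gap}(\IM_T)\tfrac{1+\rho^2}{1-\rho^2}-\tfrac{1}{2}$, whereas you take the shortcut of bounding the inner sum uniformly by $\text{gap}(\IM_T)/(1-\rho^2)$ and then note this already implies (and is tighter than) the stated bound---your observation that the theorem's $A_T$ bound is loose by a factor of $1+\rho^2$ is correct.
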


\section{Two Proposed Update Schemes}
\label{sec:algo}
Before we move to the discussion of our results, we  first specify two classes of update schemes, both of which satisfy the sublinear condition~\eqref{eq:sublinear}.
The proposed update schemes  also deepen our understandings of the main results.

\subsection{Adding Multiple Decentralized SGDs}
One centralized average can synchronize all local models, while it often needs multiple decentralized communication to achieve global consensus.
Therefore, it is natural to introduce multiple decentralized SGDs (D-SGD) to $\IM_T^0$.
In particular, we set 
\begin{equation}
\IM_T^1 = \{ t \in [T]: t \ \text{mod} \ (I_1 + I_2)  \notin [I_1]  \}, 
\end{equation}
where $I_1, I_2$ are parameters that respectively control the length of local updates and D-SGD.

Therefore, in a single round of LD-SGD, nodes perform local updates only for a given number of “sub-rounds”, then preform local updates and communication for a given number of other “sub-rounds”. 
In particular, each worker node periodically alternates between two phases in a single round.
In the first phase, each node locally runs $I_1 \ (I_1 \geq 0)$ steps of SGD in parallel.\footnote{That is to perform~\eqref{eq:psgd1} for $I_1$ times.}
In the second phase, each worker node runs $I_2 \ (I_2 \geq 1)$ steps of D-SGD.
As mentioned, D-SGD is a combination of~\eqref{eq:psgd1} and~\eqref{eq:psgd2}. 
So communication only happens in the second phase; a worker node performs $\frac{I_2}{I_1 + I_2} T $ communication per $T$ steps. 
Figure~\ref{fig:decentralized} (b) illustrates one round of LD-SGD with $\IM_T^1$ when $I_1=3$ and $I_2=2$.
When LD-SGD is equipped with $\IM_T^1$, the corresponding $A_T, B_T, C_T$ are $O(1)$ w.r.t.\ $T$.
The proof is provided in~\ref{append:scheme1}.

\begin{theorem}[LD-SGD with $\IM_T^1$]
	\label{thm:scheme1}
	When we set $\IM_T = \IM_T^1$ for PD-SGD, under the same setting, Theorem~\ref{thm:PD-SGD} holds with
	\begin{gather}
	A_T \le \frac{1}{2I}\left( \frac{1+\rho^{2I_2}}{1-\rho^{2I_2}} I_1^2 + \frac{1+\rho^2}{1-\rho^2} I_1 \right)  +  \frac{\rho^2}{1-\rho^2}, \\
	\max \left\{  B_T, C_T \right\}\le K^2, K= \frac{I_1}{1 - \rho^{I_2}} + \frac{\rho}{1-\rho}.
	\end{gather}
	Therefore, LD-SGD converges with $\IM_T^1$.
\end{theorem}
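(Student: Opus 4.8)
The plan is to derive the stated bounds on $A_T$, $B_T$, $C_T$ by exploiting the explicit formula $\rho_{s,t-1} = \rho^{|[s:t-1]\cap\IM_T|}$ from Definition~\ref{def:rho} together with the very regular, periodic structure of $\IM_T^1$. The key structural fact is that $\IM_T^1$ consists of blocks: within each period of length $I := I_1 + I_2$, the first $I_1$ indices are ``local'' steps (not in $\IM_T^1$) and the last $I_2$ are ``communication'' steps (in $\IM_T^1$). Consequently, for a fixed $t$, the quantity $|[s:t-1]\cap\IM_T^1|$ as $s$ decreases from $t-1$ down to $1$ is a non-decreasing step function that increases by $I_2$ with each completed earlier period; this lets me bound the inner sums $\sum_{s=1}^{t-1}\rho_{s,t-1}$ and $\sum_{s=1}^{t-1}\rho^2_{s,t-1}$ by geometric-type series.

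First I would handle $B_T$ and $C_T$ via the uniform bound on the inner sum $S_t := \sum_{s=1}^{t-1}\rho_{s,t-1}$. Writing $t-1$ in terms of how many full ``D-SGD'' steps lie in $[s:t-1]$, one groups the $s$-values according to which period they fall in and how far into the local/communication phase $t-1$ sits. Summing a geometric series in $\rho^{I_2}$ over the periods and a finite geometric series in $\rho$ within the current period yields $S_t \le \frac{I_1}{1-\rho^{I_2}} + \frac{\rho}{1-\rho} =: K$ uniformly in $t$. Since $B_T = \frac1T\sum_t S_t^2 \le K^2$ and $C_T = \max_s \sum_{t=s+1}^T \rho_{s,t-1}(\sum_{l=1}^{t-1}\rho_{l,t-1}) \le \max_s \sum_{t=s+1}^T \rho_{s,t-1} \cdot K$, I then need the further bound $\sum_{t=s+1}^T \rho_{s,t-1} \le K$ as well — which follows by the same block-counting argument applied now with $s$ fixed and $t$ varying, using that $|[s:t-1]\cap\IM_T^1|$ grows by $I_2$ per period as $t$ increases. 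That gives $\max\{B_T,C_T\}\le K^2$.

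For $A_T = \frac1T\sum_{t=1}^T\sum_{s=1}^{t-1}\rho^2_{s,t-1}$, I would compute the inner sum $\sum_{s=1}^{t-1}\rho^{2|[s:t-1]\cap\IM_T^1|}$ more carefully since we want an $O(1)$ constant with the sharper leading term shown. Splitting according to whether $t-1$ lies in the local phase or the D-SGD phase of its period, and then summing the resulting double sum over one representative period and dividing by $I$ (by periodicity, the average over $t\in[1,T]$ is controlled by the average over a single period up to lower-order boundary terms), one obtains a bound of the form $\frac{1}{2I}\big(\frac{1+\rho^{2I_2}}{1-\rho^{2I_2}}I_1^2 + \frac{1+\rho^2}{1-\rho^2}I_1\big) + \frac{\rho^2}{1-\rho^2}$. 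The $I_1^2$ term comes from pairs $(s,t-1)$ both in the same local phase (where $\rho^{2|\cdots|}$ does not decay), the $I_1$ term from the mixed local/D-SGD contributions, and the constant $\frac{\rho^2}{1-\rho^2}$ from the purely-geometric tail across earlier periods. Finally, since $K$ and the $A_T$ bound are constants independent of $T$, they are trivially $o(T)$, so the sublinear condition~\eqref{eq:sublinear} of Theorem~\ref{thm:PD-SGD} is met and convergence follows.

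The main obstacle I anticipate is the bookkeeping in the $A_T$ estimate: one must carefully partition the index pairs by their phase membership and by which period boundary separates them, and the averaging over $t$ introduces boundary effects at the two ends $t\approx 1$ and $t\approx T$ that must be absorbed into the stated constant rather than left as an error term. The bounds for $B_T$ and $C_T$ are cleaner because only a uniform-in-$t$ (resp.\ uniform-in-$s$) estimate of a single geometric-type sum is needed, but one still has to verify both the ``sum over $s$'' and ``sum over $t$'' directions give the same constant $K$, which requires checking that the block structure is symmetric enough under time reversal — it is, because within each period the D-SGD steps are contiguous.
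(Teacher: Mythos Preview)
Your proposal is correct and follows essentially the same approach as the paper: the paper packages the block-by-block geometric computations into a technical lemma (defining per-period quantities $\alpha_j,\beta_j,\gamma_j$ and the reverse sum $w_s$, and proving $\sum_{s=1}^{t-1}\rho_{s,t-1}\le K$, $w_s\le K$, and the per-period bound that yields your $A_T$ expression), then assumes WLOG that $T$ is a multiple of $I$ to replace the time average by an exact per-period average. Your phase-splitting argument for $A_T$ and the two uniform-in-$t$ / uniform-in-$s$ bounds giving $K$ for $B_T,C_T$ match these computations precisely.
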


The introduction of $I_2$ extends the scope of previous framework: Cooperative SGD~\cite{wang2018cooperative}.
As a result, many existing algorithms become special cases when the period lengths $I_1, I_2$, and the connected matrix $\W$ are carefully determined. 
As an evident example, we recover D-SGD by setting $I_1 = 0$ and $I_2 > 0$\footnote{We make a convention that $[0] = \emptyset$, so in this case $\IM_T^1 = [T]$.} and the conventional PD-SGD by setting $I_1 > 0$ and $I_2 = 1$.
Another important example is Local SGD (or FedAvg) that periodically averages local model parameters in a centralized manner~\cite{zhou2017convergence,lin2018don,stich2018local,yu2019parallel}.
Local SGD is a case with $I_1 > 1$, $I_2 = 1$ and $\W = \frac{1}{n} \1_n \1_n^\top$. 
We summarize examples and the comparison with their convergence results in~\ref{appen:discussion}. 


\subsection{Decaying the Length of Local Updates}


Typically, larger local computation ratio (i.e., $I_1/I_2$) incurs a higher final error, while lower local computation ratio enjoys a smaller final error but sacrifices the convergence speed.
A related phenomena is observed by an independent work~\cite{wang2018adaptive}, which finds that a faster initial drop of global loss often accompanies a higher final error.

To decay the final error, we are inspired to decay $I_1$ every $M$ rounds until $I_1$ vanishes.
In this way, we use $(I_1, I_2)$ for a first $M$ rounds, then use $(\lfloor I_1/2 \rfloor, I_2)$ for a second $M$ rounds, then use $(\lfloor I_1/2^2 \rfloor, I_2)$ for a third $M$ rounds...
We repeat this process until the $J = \lceil \log_2 I_1 \rceil $ phase where we have $\lfloor I_1/2^J \rfloor = 0$.
To give a mathematical formulation of such $\IM_T$, we need an ancillary set
\begin{equation*}
\IM(I_1, I_2, M) = \{t \in [M(I_1+I_2)]: t \ \text{mod} \ (I_1+I_2) \notin [I_1] \}, 
\end{equation*}
and then recursively define $\JM_0 = \IM(I_1, I_2, M)$ and 
\begin{equation*}
\JM_j =\IM\left( \bigg\lfloor \frac{I_1}{2^j} \bigg\rfloor , I_2, M\right) + \max(\JM_{j-1}), 1 \le j \le J,
\end{equation*}
where $\max(\JM_{j-1})$ returns the maximum number collected in $\JM_{j-1}$ and $J = \lceil \log_2 I_1 \rceil $.
Finally we set
\begin{equation}
\IM_T^2 =\cup_{j=0}^J \JM_{j} \cup [\max(\JM_{J}):T]. 
\end{equation}
The idea is simple but the formulation is a little bit complicated.
From the recursive definition, once $t \ge \max(\JM_{J})$, $I_1$ is reduced to zero and LD-SGD is reduced to D-SGD. 
When LD-SGD is equipped with $\IM_T^2$, the corresponding $A_T, B_T, C_T$ are $O(1)$ w.r.t. $T$.
The proof is provided in~\ref{append:scheme2}.

\begin{theorem}[LD-SGD with $\IM_T^2$]
	\label{thm:scheme2}
	When we set $\IM_T = \IM_T^2$ for LD-SGD, under the same setting, for $T \ge \max(\JM_{J})$, Theorem~\ref{thm:PD-SGD} holds with
	\begin{gather*}
	A_T \le\frac{1}{T} \frac{I_1}{1{-}\rho^{2I_2}} \rho^{2(T {-}\max(\JM_{J}))} + (1{-}\frac{\max(\JM_{J})}{T}) \frac{\rho^2}{1{-}\rho^2}, \\
	B_T \le K \left[\frac{1}{T} \frac{I_1}{1{-}\rho^{I_2}} \rho^{T{-}\max(\JM_{J})} + (1{-}\frac{\max(\JM_{J})}{T}) \frac{\rho}{1-\rho}\right],\\
	C_T \le K^2,
	\end{gather*}
	where $K$ is the same in Theorem~\ref{thm:scheme1}.
	Therefore, LD-SGD converges with $\IM_T^2$.
\end{theorem}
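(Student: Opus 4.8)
\textbf{Proof plan for Theorem~\ref{thm:scheme2}.}
The plan is to exploit the already-proven Theorem~\ref{thm:bound} on the segment $[1:\max(\JM_J)]$ where local updates are still active, and to compute the contribution of the final segment $[\max(\JM_J):T]$ directly, since on that tail $\W_t = \W$ for every step and the quantities $\rho_{s,t-1}$ simplify to pure powers of $\rho$. Write $T_\star := \max(\JM_J)$ for brevity. The key observation is that for $t > T_\star$ the product $\Ph_{s,t-1}$ either (i) lies entirely in the D-SGD tail, in which case $\rho_{s,t-1} = \rho^{t-1-\max(s,T_\star)+\text{(correction)}}$ is geometrically small, or (ii) straddles the boundary, in which case it still contains at least $t-1-T_\star$ communication steps from the tail and hence $\rho_{s,t-1} \le \rho^{t-1-T_\star}$. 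This dichotomy is what produces the two-term structure ($\rho^{2(T-T_\star)}$-type term plus a $(1 - T_\star/T)\frac{\rho^2}{1-\rho^2}$-type term) in each of the stated bounds.

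First I would bound $A_T$. Split $\frac1T\sum_{t=1}^T\sum_{s=1}^{t-1}\rho^2_{s,t-1}$ into $t \le T_\star$ and $t > T_\star$. For the first range, Theorem~\ref{thm:scheme1} (applied to the sub-horizon, which is exactly $\IM_T^1$-type behavior with the decaying $I_1$'s) gives a uniform $O(1)$ bound, but more carefully one uses the per-segment geometric structure; summing the geometric series in $s$ for each fixed $t$ in the tail gives $\sum_{s=1}^{t-1}\rho^2_{s,t-1} \le \frac{I_1}{1-\rho^{2I_2}}\rho^{2(t-1-T_\star)} + \frac{\rho^2}{1-\rho^2}$, where the first term accounts for the $s \le T_\star$ contributions that must cross the whole tail and the second for $s$ inside the tail. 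Summing over $t \in (T_\star, T]$ and dividing by $T$ yields the claimed bound on $A_T$, the $\rho^{2(T-T_\star)}$ coming from $\sum_{t}\rho^{2(t-1-T_\star)} \le \frac{1}{1-\rho^2}$ absorbed appropriately and the linear-in-$(T-T_\star)$ count of the $\frac{\rho^2}{1-\rho^2}$ terms giving the $(1-T_\star/T)$ factor.

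For $B_T$ and $C_T$ I would argue analogously: $B_T = \frac1T\sum_t(\sum_s\rho_{s,t-1})^2$, and on the tail $\sum_{s=1}^{t-1}\rho_{s,t-1} \le \frac{I_1}{1-\rho^{I_2}}\rho^{t-1-T_\star} + \frac{\rho}{1-\rho} =: $ (tail term) $+ \frac{\rho}{1-\rho}$; since $K = \frac{I_1}{1-\rho^{I_2}}+\frac{\rho}{1-\rho}$ dominates this, squaring and using $(a+b)^2 \le K(a+b)$ when $a+b \le K$ gives $(\sum_s\rho_{s,t-1})^2 \le K[\text{tail term} + \frac{\rho}{1-\rho}]$; summing over $t$ and dividing by $T$ produces exactly the stated $B_T$ bound. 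For $C_T = \max_s\sum_{t=s+1}^T\rho_{s,t-1}(\sum_l\rho_{l,t-1})$, one uses $\sum_l\rho_{l,t-1}\le K$ uniformly (this is the content of the $B_T,C_T$ part of Theorem~\ref{thm:scheme1}, valid segment-wise and hence globally because each maximal local-update run has length at most $I_1$) and $\sum_{t>s}\rho_{s,t-1}\le K$ by the same geometric sum, giving $C_T \le K^2$.

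The main obstacle I anticipate is the bookkeeping at the boundary $t = T_\star$: one has to verify that $|[s:t-1]\cap\IM_T^2|$, the exponent in $\rho_{s,t-1} = \rho^{|[s:t-1]\cap\IM_T^2|}$, really does pick up the full $t-1-T_\star$ communication steps from the tail for every $s \le T_\star$, and simultaneously that for $s$ inside the tail the count is at least $t-1-s$ minus the at-most-$I_1$ residual local steps in the last (decaying) block before the pure tail — so that all the geometric sums close with the constants claimed. Keeping the $\lfloor I_1/2^j\rfloor$ dependence from leaking in (it does not, because the tail after $T_\star$ has $I_1$ reduced to zero, and all pre-$T_\star$ contributions are controlled by the largest block length $I_1$, not the decayed ones) is the one place where a careless estimate would give a worse bound; everything else is a routine geometric-series computation already rehearsed in the proofs of Theorems~\ref{thm:bound} and~\ref{thm:scheme1}.
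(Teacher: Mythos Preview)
Your treatment of $B_T$ and $C_T$ matches the paper's: both rely on the uniform estimates $\sum_{s=1}^{t-1}\rho_{s,t-1}\le K$ and $\sum_{t=s+1}^{T}\rho_{s,t-1}\le K$ (recorded in the paper as items 3 and 4 of Lemma~\ref{lem:rho_strategy}), from which $C_T\le K^2$ is immediate and $B_T\le K\cdot\frac{1}{T}\sum_{t}\sum_{s}\rho_{s,t-1}$ follows by pulling out the $\max$.

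For $A_T$ (and the first-order sum feeding into $B_T$), your decomposition differs from the paper's and does not yield the stated constants. The paper \emph{exchanges the order of summation}, writing $\sum_{t=1}^{T}\sum_{s=1}^{t-1}\rho_{s,t-1}=\sum_{s=1}^{T-1}\sum_{t\ge s}\rho_{s,t}$, and then partitions according to which decay block $s$ lies in: for each $s\le T_\star$ the inner sum carries the full tail factor $\rho^{T-T_\star-1}$, and summing over blocks $j$ (each contributing an additional $\rho^{(J-j)MI_2}$ from crossing the remaining blocks) collapses to the $\frac{I_1}{1-\rho^{I_2}}$ coefficient directly. You instead split the \emph{outer} sum by whether $t>T_\star$. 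This is a reasonable heuristic but loses exactly that structure: summing your per-$t$ estimate $\frac{I_1}{1-\rho^{2I_2}}\rho^{2(t-1-T_\star)}$ over $t\in(T_\star,T]$ gives $\frac{I_1}{(1-\rho^{2I_2})(1-\rho^2)}$, not $\frac{I_1}{1-\rho^{2I_2}}\rho^{2(T-T_\star)}$; the phrase ``absorbed appropriately'' cannot convert an $O(1)$ geometric sum into an exponentially decaying factor. Separately, you dismiss the $t\le T_\star$ range as ``$O(1)$ via Theorem~\ref{thm:scheme1}'' and then simply omit it from the final inequality, but the stated $A_T$ bound has no term that absorbs this contribution. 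To recover the precise bounds as written you need the paper's swap-and-partition-by-$s$ route rather than a split by $t$.
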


From experiments in Section~\ref{sec:exp}, the simple strategy empirically performs better than the PD-SGD.

\section{Discussion}
In this section, we will discuss some aspects of our main results (Theorem~\ref{thm:PD-SGD}) and shed light on advantages of proposed update schemes.

\subsection{Error decomposition} 
From Theorem~\ref{thm:PD-SGD}, the upper bound~\eqref{eq:main} is decomposed into two parts. 
The first part is exactly the same as the optimization error bound in parallel SGD~\cite{bottou2018optimization}. 
The second part is termed as residual errors as it results from performing periodic local updates and reducing inter-node communication. 
In previous literature, the application of local updates inevitably results the residual error~\cite{lan2017communication,stich2018local,wang2018cooperative,haddadpour19a,li2019convergence,yu2019linear}.

To go a step further towards the residual error, take LD-SGD with $\IM_T^1$ for example.
From Theorem~\ref{thm:scheme1}, the residual error often grows with the length of local updates $I = I_1 + I_2$.
When data are independently and identical distributed
\footnote{This is also possible if all nodes have access to the entire data, e.g., the distributed system may shuffle data regularly so that each node actually optimizes the same loss function.} (i.e., $\kappa = 0$), \cite{wang2018cooperative} shows that the residual error of the conventional PD-SGD grows only linearly in $I$.
~\cite{haddadpour19a} achieves the similar linear dependence on $I$ but only requires each node draws samples from its local partitions.
When data are not identically distributed (i.e., $\kappa$ is strictly positive), both~\cite{yu2019parallel} and~\cite{zhou2017convergence} show that the residual error of Local SGD grows quadratically in $I$. 
Theorem~\ref{thm:scheme1} shows that the residual error of LD-SGD with $\IM_T^1$ is $O(I\sigma^2 + I^2 \kappa^2)$, where the linear dependence comes from the stochastic gradients and the quadratic dependence results from the heterogeneity.
The similar dependence is also established for centralized momentum SGD in~\cite{yu2019linear}.

\subsection{On Linear Speedup}
Assume $\IM_T$ satisfies
\begin{equation}
\label{eq:speedup_condition}
A_T = O(\sqrt{T}) \ B_T = O(\sqrt{T}) \ \text{and} \ C_T = o(T).
\end{equation}
Note that Condition~\eqref{eq:speedup_condition} is sufficient for the sublinear condition~\eqref{eq:sublinear}.
From Corollary~\ref{cor:computation}, the convergence of LD-SGD with $\IM_T$ will be dominated by the first term $O(\frac{1}{\sqrt{nT}})$, when the total step $T$ is sufficiently large.
So LD-SGD with $\IM_T$ can achieve a linear speedup in terms of the number of worker nodes.
Both of $\IM_T^1$ and $\IM_T^2$ satisfy Condition~\eqref{eq:speedup_condition}.
Taking LD-SGD with $\IM_T^1$ for example, we have
\begin{corollary} 
	\label{cor:communication}
	In the setting of Theorem~\ref{thm:scheme1}, if we set $\eta =\sqrt{\frac{n}{T}}$ and choose $I_1, I_2$ to satisfy that $nK^2/T = 1/\sqrt{nT}$ then the bound of $\frac{1}{T} \sum_{t=1}^{T} \EB \big\|\nabla f (\overline{\x}_t) \big\|^2$ becomes
	\begin{equation*}
	\frac{2\Delta + L \sigma^2 + 4L^2(\sigma^2 + \kappa^2)}{\sqrt{nT}}.
	\end{equation*}
\end{corollary}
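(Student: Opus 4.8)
The plan is to derive Corollary~\ref{cor:communication} as a direct specialization of Corollary~\ref{cor:computation} once we control $A_T, B_T, C_T$ via Theorem~\ref{thm:scheme1}. First I would recall from Theorem~\ref{thm:scheme1} that with $\IM_T = \IM_T^1$ we have $\max\{B_T, C_T\} \le K^2$ where $K = \frac{I_1}{1-\rho^{I_2}} + \frac{\rho}{1-\rho}$, and also $A_T \le \frac{1}{2I}\left( \frac{1+\rho^{2I_2}}{1-\rho^{2I_2}} I_1^2 + \frac{1+\rho^2}{1-\rho^2} I_1 \right) + \frac{\rho^2}{1-\rho^2}$, which is in particular $O(K^2)$ (indeed $A_T \le K^2$ up to constants, since $I_1^2/(1-\rho^{2I_2}) \lesssim K^2$ and $I_1 \le I$). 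So all three quantities are bounded by a constant multiple of $K^2$, independent of $T$.

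Next I would invoke Corollary~\ref{cor:computation} with the stated choice $\eta = \sqrt{n/T}$. Its hypothesis $T > 4L^2 n \max\{1, 4C_T\}$ is satisfied for $T$ large enough because $C_T \le K^2$ is a fixed constant (this is the mild ``$T$ sufficiently large'' regime implicit in the statement). Corollary~\ref{cor:computation} then gives
\[
\frac{1}{T}\sum_{t=1}^T \EB\|\nabla f(\overline{\x}_t)\|^2 \le \frac{2\Delta + L\sigma^2}{\sqrt{nT}} + \frac{4nL^2(A_T\sigma^2 + B_T\kappa^2)}{T}.
\]
Now substituting the bounds $A_T \le K^2$ (up to the constant absorbed, or more precisely using that $A_T, B_T$ are both $\le K^2$ times an absolute constant) and $B_T \le K^2$, the residual term is at most $\frac{4nL^2 K^2(\sigma^2 + \kappa^2)}{T}$ up to constants.

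The final step is to impose the balancing condition $nK^2/T = 1/\sqrt{nT}$ stated in the corollary. Under this choice of $I_1, I_2$ (relative to $n$ and $T$), the residual term $\frac{4 L^2 (\sigma^2 + \kappa^2) \cdot nK^2}{T} = \frac{4L^2(\sigma^2+\kappa^2)}{\sqrt{nT}}$, so combining with the first term yields
\[
\frac{1}{T}\sum_{t=1}^T \EB\|\nabla f(\overline{\x}_t)\|^2 \le \frac{2\Delta + L\sigma^2 + 4L^2(\sigma^2 + \kappa^2)}{\sqrt{nT}},
\]
which is exactly the claimed bound. I do not anticipate a genuine obstacle here — the corollary is a bookkeeping consequence of results already established. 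The only point needing a little care is making the constants match cleanly: one must check that the $A_T$ bound from Theorem~\ref{thm:scheme1} really is dominated by $K^2$ (so that the same $K^2$ controls both the $\sigma^2$ and $\kappa^2$ residual contributions), and that the factor $4L^2$ out front is consistent with how the $\sigma^2$ and $\kappa^2$ terms were grouped in Corollary~\ref{cor:computation}; this is where I would be most careful to avoid an off-by-constant discrepancy in the numerator.
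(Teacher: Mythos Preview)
Your approach is correct and is essentially the only route, since the paper does not give a separate proof of this corollary; it is a direct combination of Corollary~\ref{cor:computation} with the bounds of Theorem~\ref{thm:scheme1}, exactly as you outline.

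The one place where you hedge unnecessarily is the bound on $A_T$. You do not need an ``up to constants'' caveat: since each $\rho_{s,t-1}\in[0,1]$, for nonnegative terms one has $\sum_{s=1}^{t-1}\rho_{s,t-1}^2 \le \big(\sum_{s=1}^{t-1}\rho_{s,t-1}\big)^2$, whence $A_T \le B_T \le K^2$ exactly. The paper itself records this as ``the fact that $A_T \le B_T$'' in the discussion. With that observation the residual term in Corollary~\ref{cor:computation} is bounded by $\tfrac{4nL^2K^2(\sigma^2+\kappa^2)}{T}$ with no extraneous constant, and the balancing condition $nK^2/T = 1/\sqrt{nT}$ gives the stated numerator $2\Delta + L\sigma^2 + 4L^2(\sigma^2+\kappa^2)$ on the nose.
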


However, if $\IM_T$ fails to meet~\eqref{eq:speedup_condition}, the second term $O(\frac{A_T+B_T}{T} \cdot n)$ will dominate.
As a result, more worker nodes may lead to slow convergence or even divergence. 
As suggested by Theorem~\ref{thm:bound}, one way to make the first term dominate is to involve more communication.\footnote{By pigeonhole principle, we have $\text{gap}(\IM_T) \ge \frac{T}{|\IM_T|+1}$. In order to reduce $\text{gap}(\IM_T)$, one must perform more communication.}

\subsection{Communication Efficiency}
LD-SGD with $\IM_T$ needs only $|\IM_T|$ communications per $T$ total steps.
To increase communication efficiency, we are motivated to reduce the size of $\IM_T$ as much as possible.
However, as suggested by Theorem~\ref{thm:bound}, to guarantee convergence, we are required to make sure $\IM_T$ is sufficiently large (so that $\text{gap}(\IM_T)$ will be small enough).
The trade-off between communication and convergence needs a careful design of update schemes.
The two proposed update schemes have their own way of balancing the trade-off with a bounded $\text{gap}(\IM_T)$. 

For LD-SGD with $\IM_T^1$, it only needs $\frac{I_2}{I} T$ communications per $T$ total steps where $I = I_1 + I_2$.
Similar to Local SGD which has $O(T^{\frac{3}{4}} n^{\frac{3}{4}})$ communication complexity in centralized settings~\cite{yu2019parallel}, LD-SGD with $\IM_T^1$ also achieves that level.
This follows by noting from Corollary~\ref{cor:communication}, to ensure $nK^2/T = 1/\sqrt{nT}$, we have $I_1 =  (T^{\frac{1}{4}} n^{-\frac{3}{4}} - \frac{\rho}{1-\rho})(1-\rho^{I_2})$.
Then the communication complexity of LD-SGD with $\IM_T^1$ is 
\begin{equation*}
\frac{I_2}{I_2 + (T^{\frac{1}{4}} n^{-\frac{3}{4}} - \frac{\rho}{1-\rho})(1-\rho^{I_2})} T = O(T^{\frac{3}{4}} n^{\frac{3}{4}}),
\end{equation*}
which is an increasing function of $I_2$ (which follows since $(1-\rho^{I_2})/I_2$ decreases in $I_2$ and $T^{\frac{1}{4}} n^{-\frac{3}{4}} > \frac{\rho}{1-\rho}$ for large enough $T$).
Hence, large $I_2$ increase communication cost.
On the other hand, a large $I_2$ fastens convergence (since all bounds for $A_T, B_T, C_T$ in Theorem~\ref{thm:scheme1} are decreasing in $I_2$).
Therefore, $I_2$ helps trade-off convergence and communication (with a fixed $I_1$).
In experiments, larger $I_2$ often results a smaller training loss, a higher test accuracy and a higher communication cost.
The introduction of $I_2$ allows more flexibility to balance the trade-off between communication and convergence.

For LD-SGD with $\IM_T^2$, it has much faster convergence rate since from Theorem~\ref{thm:scheme2}, the bounds of $A_T$ and $B_T$ are much smaller than those of $\IM_T^1$.
However, it needs $MI_2J + (T-\max(\JM_J))$ communications per $T$ total steps, which is more than that of $\IM_T^1$ but less than that of D-SGD.
Therefore, $\IM_T^2$ can be viewed as an intermediate state between $\IM_T^1$ and D-SGD.
When $I_1$ gradually vanishes, we expect that the residual error will be reduced and better performance will follows, even though larger $|\IM_T^2|$ may increase a little bit communication cost.
From experiments in Section~\ref{sec:exp}, $\IM_T^2$ empirically has less training loss and obtains higher test accuracy than the non-decayed LD-SGD.

\subsection{Effect of connectivity $\rho$}
The connectivity is measured by $\rho$, the second largest absolute eigenvalue of $\W$.
The network connectivity $\rho$ has impact on the convergence rate via $\rho_{s, t-1}$.
Each update scheme corresponds to one way that $\rho_{s, t-1}$ depends on $\rho$.
Generally speaking, well-connectivity helps reduce residual errors and thus speed up convergence.
If the graph is nicely connected, in which case $\rho$ is close to zero, then the update in one node will be propagated to all the other nodes very soon, and the convergence is thereby fast.
As a result, the bounds in Theorem~\ref{thm:bound} are much smaller due to $\rho \approx 0$.
On the other hand, if the network connection is very sparse (in which case $\rho \approx 1$), $\rho$ will greatly slows convergence.
Take LD-SGD with $\IM_T^1$ for example.
When $\rho \approx 1$, from Theorem~\ref{thm:scheme1}, the bound of $A_T \approx \frac{1}{1-\rho} \frac{I}{2I_2}$ and the bound of $B_T \approx (\frac{1}{1-\rho} \frac{I}{I_2})^2$, both of which can be extremely large.
Therefore, it needs more steps to converge.


\section{Experiments}
\label{sec:exp}

We evaluate LD-SGD with two proposed update schemes ($\IM_T^1$ and $\IM_T^2$) 
on two tasks, namely (1) image classification on CIFAR-10 and CIFAR-100; and (2) Language modeling on Penn Treebank corpus (PTB) dataset.
All training datasets are evenly partitioned over a network of workers.
We will investigate (i) the effect of different $(I_1, I_2)$; (ii) the effect of data heterogeneity; (iii) the effect of connected topology (different $\rho$); (iv)  the effect of i.i.d. generated $\W$.
We run LD-SGD in a sufficient number of rounds that guarantees the convergence of all algorithms.
LD-SGD with $(I_1, I_2) = (0, 1)$ is the PD-SGD.
A detailed description of the training configurations is provided in Appendix~\ref{appen:exp_detail}.

\paragraph{Different $I_1/I_2$}
We evaluate the performance of LD-SGD with various $(I_1, I_2)$ on the three datasets.
The first column of Figure~\ref{fig:iid} shows training loss v.s. epoch.
Since the learning rate is delayed twice for image classification tasks, there is two sudden jumps in losses for curves therein.
Typically, the larger $I_1/I_2$, the larger final loss error.
This is because, when $I_1/I_2$ is large, LD-SGD performs a relatively many number of local computation, which would accumulate a large residual error according to our theory and thus make the loss error larger than that of small $I_1/I_2$.
However, when we study the optimization through the len of running time (that is the sum of computation time and communication time), we will find that local computation is useful in fastening realtime convergence and sometimes does help in better accuracy (see Figure~\ref{fig:iid-test-acc} in the Appendix).
Indeed, larger $I_1/I_2$ completes the training more earlier without degrading the accuracy too much.
The main reason is that communication is more time-consuming than local computation, while both of them could help convergence.
Once we trade communication for more local computation, we could achieve both faster convergence and good accuracy.

\paragraph{Same $I_1/I_2$}
Then, we fix the ratio of $I_1/I_2$ and investigate the performance of different realizations of $(I_1, I_2)$.
We show the result of loss v.s. running time in the third column of Figure~\ref{fig:iid}.
It seems that with the ratio $I_1/I_2$ fixed, the real time convergence remains almost unchanged.
In particular, all curves of PTB results collapse to one single curve.
It implies the ratio $I_1/I_2$ actually controls the communication efficiency.

\begin{figure*}[t]
	\centering
	\vspace{-0.15in} 
	\subfloat[Loss v.s. epoch on CIFAR 10]{
	\includegraphics[width=0.25\textwidth] {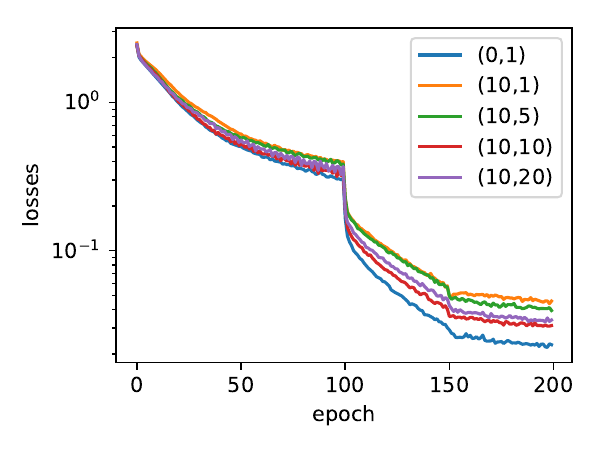}}
\hspace{-0.15in} 
\subfloat[Test acc v.s. time on CIFAR 10]{
	\includegraphics[width=0.25\textwidth] {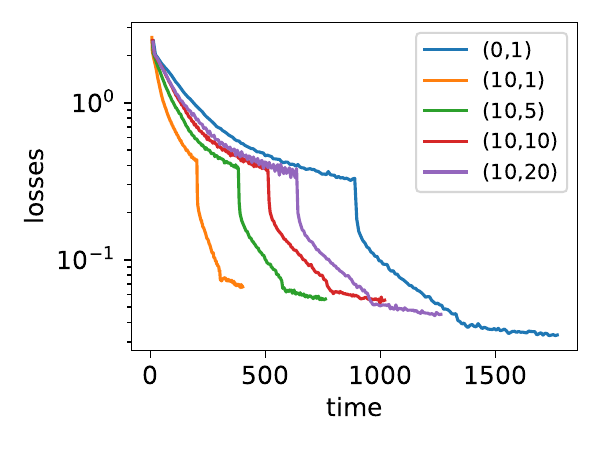}}
\hspace{-0.15in} 
	\subfloat[Same $I_1/I_2$  on CIFAR 10]{
	\includegraphics[width=0.25\textwidth] {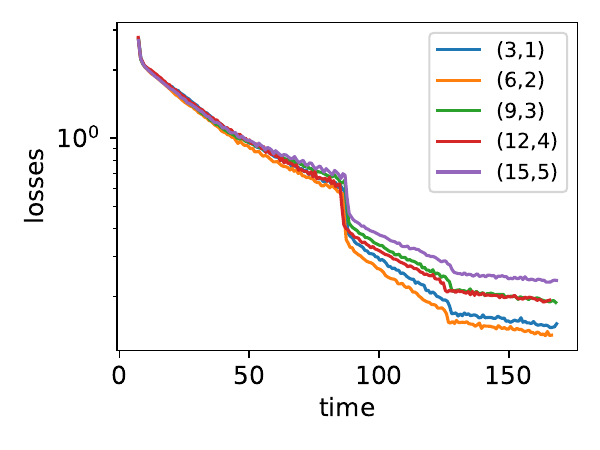}}
\hspace{-0.15in} 
	\subfloat[Decay strategy on CIFAR 10]{
	\includegraphics[width=0.25\textwidth] {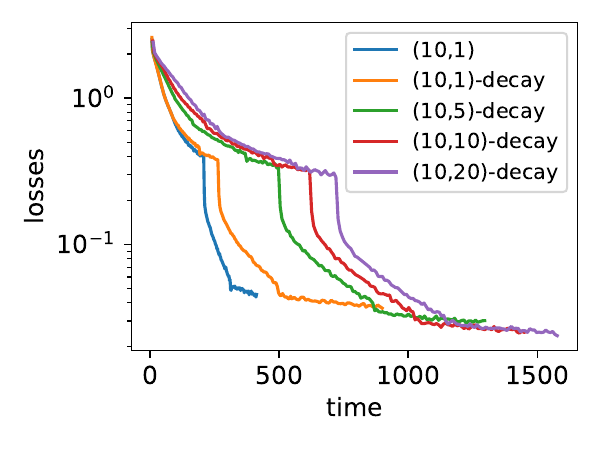}}\\
	\vspace{-0.15in} 
\subfloat[Loss v.s. epoch on CIFAR 100]{
	\includegraphics[width=0.25\textwidth] {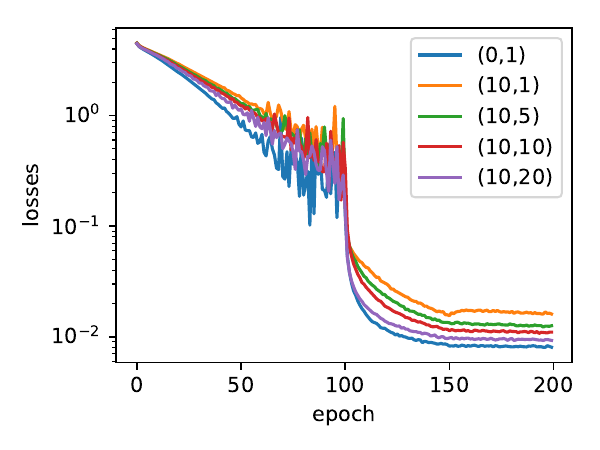}}
	\hspace{-0.15in} 
\subfloat[Test acc v.s. time on CIFAR 100]{
	\includegraphics[width=0.25\textwidth] {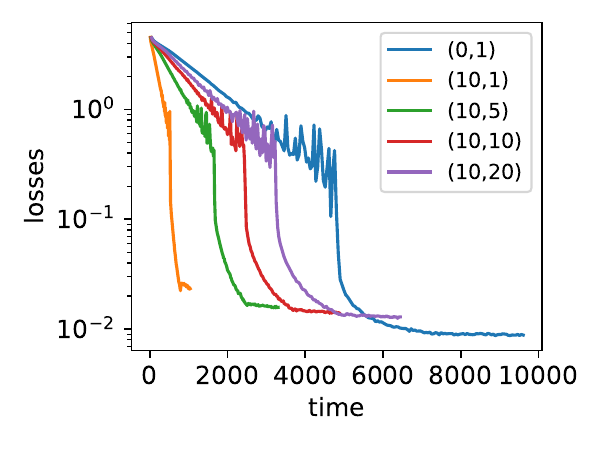}}
\hspace{-0.15in} 
\subfloat[Same $I_1/I_2$ on CIFAR 100]{
	\includegraphics[width=0.25\textwidth] {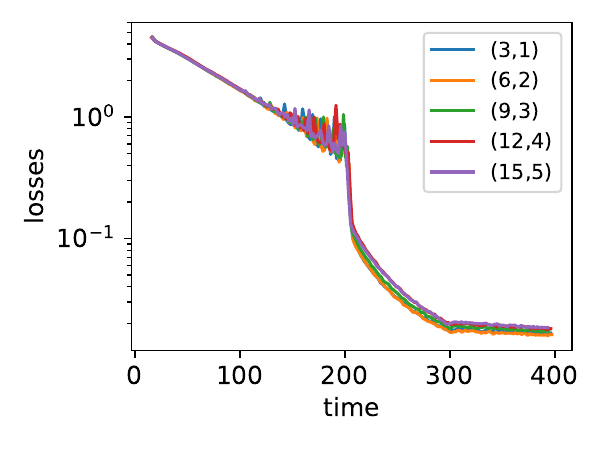}}
\hspace{-0.15in} 
\subfloat[Decay strategy on CIFAR 100]{
	\includegraphics[width=0.25\textwidth] {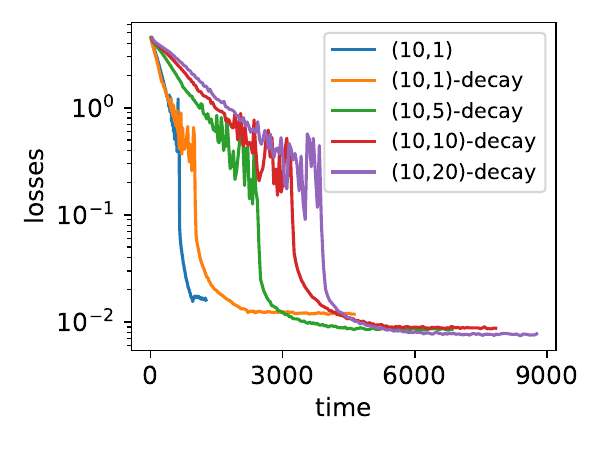}}\\
	\vspace{-0.15in} 
\subfloat[Loss v.s. epoch on PTD]{
	\includegraphics[width=0.25\textwidth] {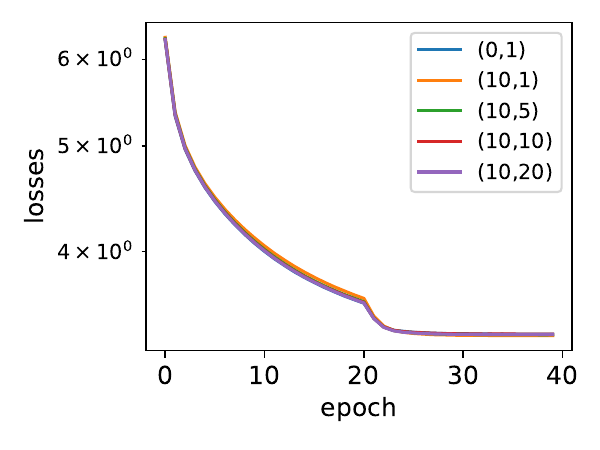}}
	\hspace{-0.15in} 
	\subfloat[Test acc v.s. time on PTD]{
		\includegraphics[width=0.25\textwidth] {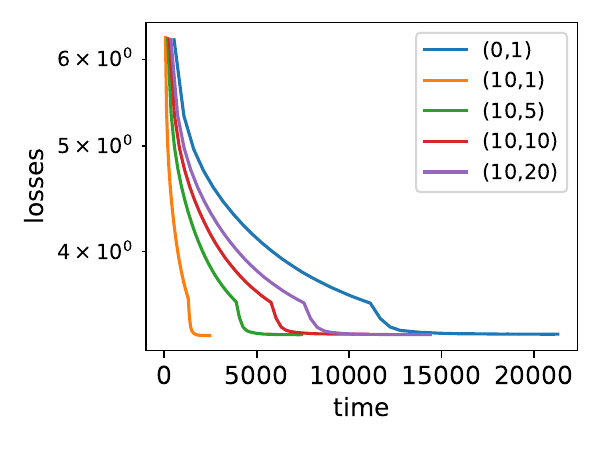}}
\hspace{-0.15in} 
\subfloat[Same $I_1/I_2$ on PTD]{
	\includegraphics[width=0.25\textwidth] {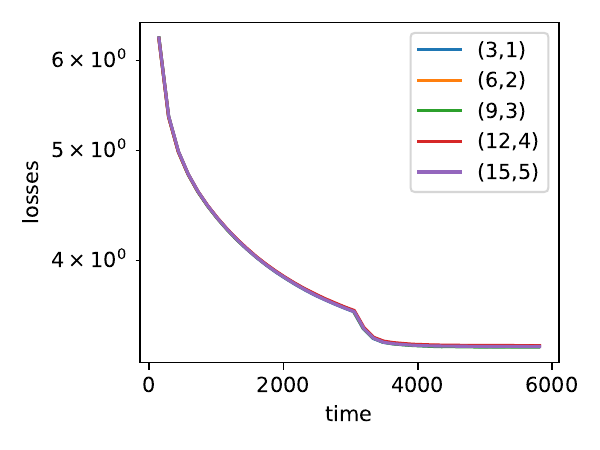}}
\hspace{-0.15in} 
\subfloat[Decay strategy on PTD]{
	\includegraphics[width=0.25\textwidth] {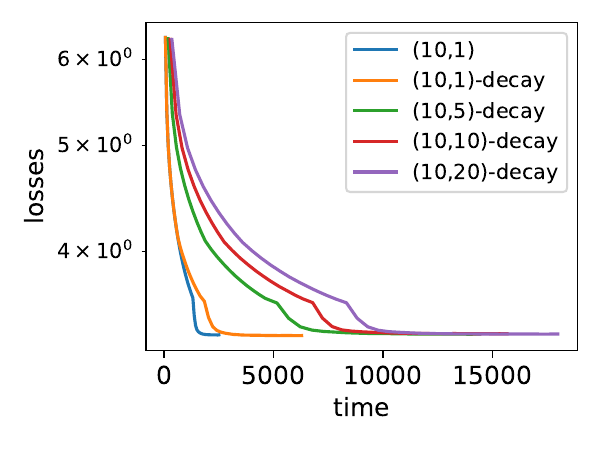}}\\
	\caption{Loss comparison on LD-SGD with different $(I_1, I_2)$. The first two columns show the results of different $I_1/I_2$, the third column shows those of the same $I_1/I_2$, and the final column shows those of the decay strategy.
	For test accuracy comparison, one can refer to Figure~\ref{fig:iid-test-acc} in Appendix.
}
	\label{fig:iid}
	\vspace{-0.15in} 
\end{figure*}

\paragraph{The decay strategy.}
We observe that larger $I_1/I_2$ often incurs a large final error. 
It is intuitive to gradually decay $I_1/I_2$ until $I_1$ reaches zero, in order to have a smaller error.
In experiments, we heuristically halve $I_1$ every $M=40$ epochs, i.e., $I_1 = \lfloor  I_1 / 2 \rfloor$. 
The result of the decay strategy is shown in the rightmost column of Figure~\ref{fig:iid}.
We can see that LD-SGD with the decay strategy typically has smaller final training loss that the non-decayed counterparts, though at the price of a little bit communication efficiency.
Clearly the decay strategy does lower the final error and even improve the test accuracy.

\paragraph{Data heterogeneity}
In previous experiments, we distributed data evenly and randomly to ensure each node has (almost) iid data.
We then distribute all samples in a non-i.i.d. manner and want to explore whether and how data heterogeneity slows down convergence rate.
It means $\kappa$ in our theory could be very large.
For image classification tasks, we evenly distribute different classes of images so that each node will only have samples from a same number of specific classes.\footnote{After this procedure, if we has unassigned classes, we then distributed these samples from those classes evenly and randomly into all nodes.}
Therefore, the classes are not assigned uniformly randomly and the training data on each node is skewed.
Since language itself has heterogeneity, for language modeling, we just divide the whole dataset evenly into different nodes instead of giving each node a copy of the whole dataset.

The result is shown in Figure~\ref{fig:niid}.
For a given choice of $(I_1, I_2)$, we have the following observations.
First, non-i.i.d. dataset often makes the training curves have larger fluctuation (see (b)).
Second, LD-SGD converges slightly faster on i.i.d. data than on non-i.i.d. one in real time measurement (see (a), (b) and (e)).
This can be explained by our theory.
Data heterogeneity enlarge the quantity $\kappa$ and slows down convergence from the main theorems.
Third, models trained from i.i.d. datasets have slightly better generalization since it obtains slightly higher test accuracy (see (c) and (d)).
Indeed, non-iid data makes the training task harder and sacrifices generalization a little bit.
Finally, non-i.i.d. dataset often results in larger final training errors (see (a), (b) and (e)).

\begin{figure*}[ht]
	\centering
	\vspace{-0.1in} 
	\subfloat[Loss on CIFAR 10]{
		\includegraphics[width=0.2\textwidth] {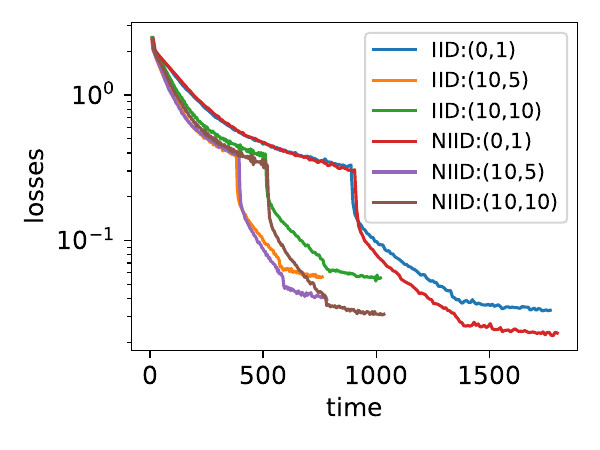}}
	\hspace{-0.17in} 
	\subfloat[Loss on CIFAR 100]{
		\includegraphics[width=0.2\textwidth] {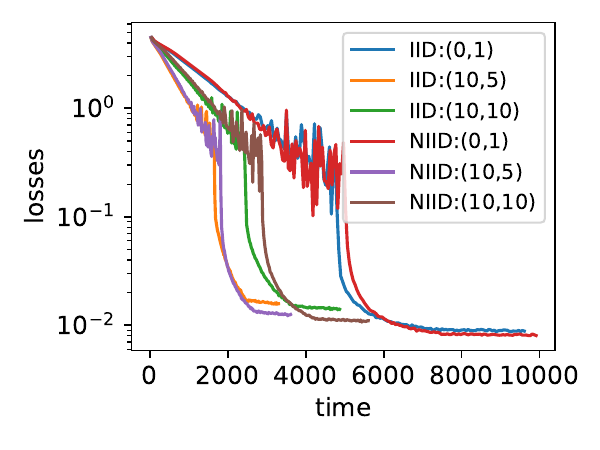}}
	\hspace{-0.17in} 
	\subfloat[Test acc on CIFAR 10]{
		\includegraphics[width=0.20\textwidth] {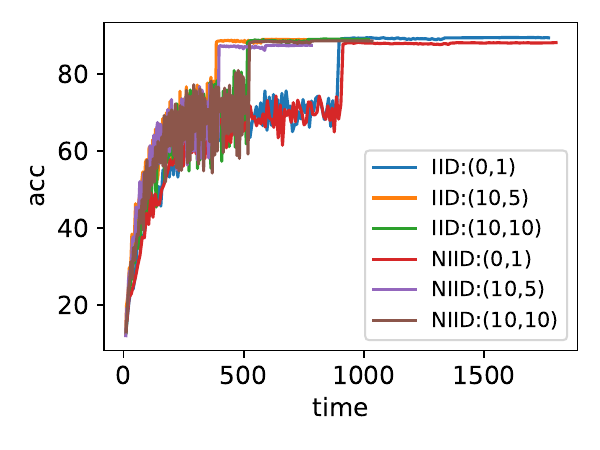}}
	\hspace{-0.17in} 
	\subfloat[Test acc on CIFAR 10]{
		\includegraphics[width=0.20\textwidth] {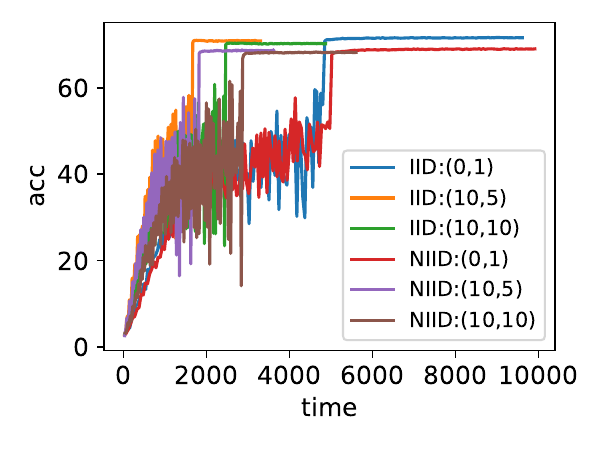}}
	\hspace{-0.17in} 
	\subfloat[Loss on PTD]{
		\includegraphics[width=0.20\textwidth] {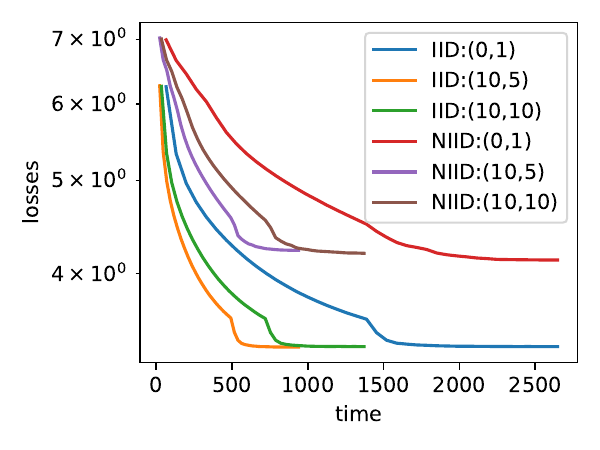}}
	\caption{The effect of data heterogeneity. 
	Non-iid data typically slows convergence rate, increases the final training error and sacrifices a little bit generalization.}
	\label{fig:niid}
	\vspace{-0.1in} 
\end{figure*}


\paragraph{Different topology}
In our theory, the topology affects the convergence rate through $\rho$.
Smaller $\rho$ will have smaller residual errors and faster convergence rate.
We test two graphs that both have $n=16$ nodes (see Figure~\ref{fig:graph} in the Appendix for illustration); for Graph 1, $\rho = 0.964$ while for Graph 2, $\rho = 0.693$.
With the results in Figure~\ref{fig:rho}, we find that LD-SGD indeed converges slightly faster and has a higher test accuracy on Graph 2.
However, in the task of Language modeling on PTD dataset, the convergence behaviors on the two graphs have little difference (see (e) in the Figure~\ref{fig:rho}).
We speculate this is because language modeling is typically harder than image classification and is not sensitive to the underlying topology.

\begin{figure*}[ht]
	\centering
	\vspace{-0.1in} 
	\subfloat[Loss on CIFAR 10]{
	\includegraphics[width=0.2\textwidth] {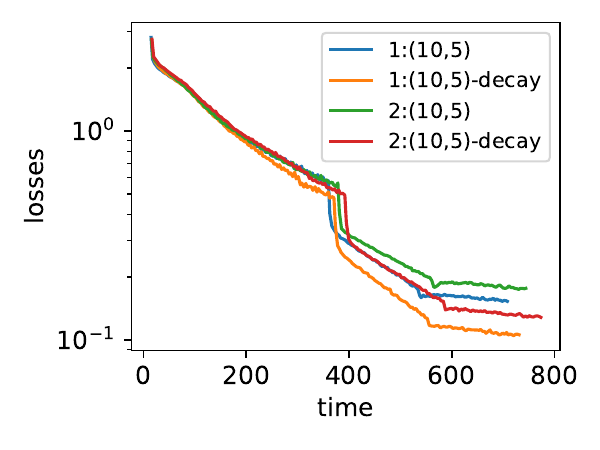}}
	\hspace{-0.17in} 
	\subfloat[Loss on CIFAR 100]{
	\includegraphics[width=0.2\textwidth] {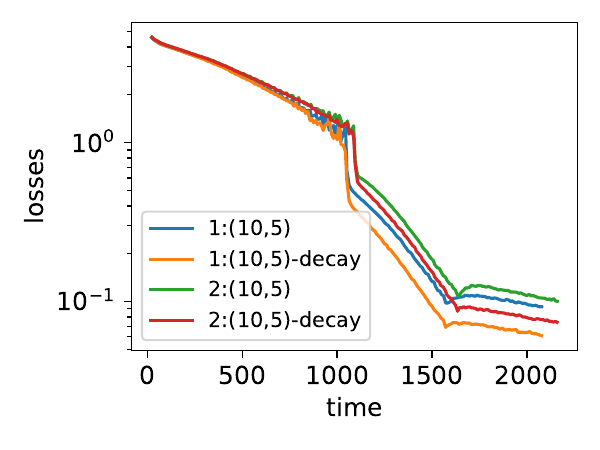}}
	\hspace{-0.17in} 
	\subfloat[Test acc on CIFAR 10]{
	\includegraphics[width=0.20\textwidth] {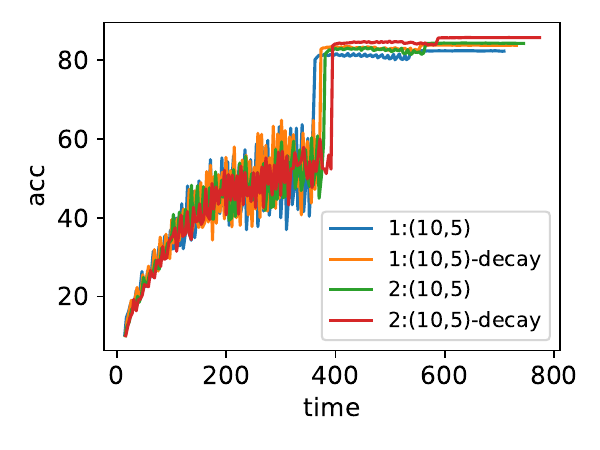}}
	\hspace{-0.17in} 
	\subfloat[Test acc on CIFAR 10]{
	\includegraphics[width=0.20\textwidth] {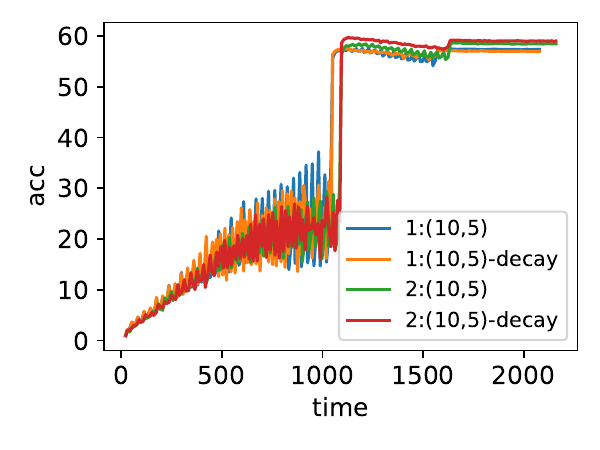}}
	\hspace{-0.17in} 
	\subfloat[Loss on PTD]{
		\includegraphics[width=0.20\textwidth] {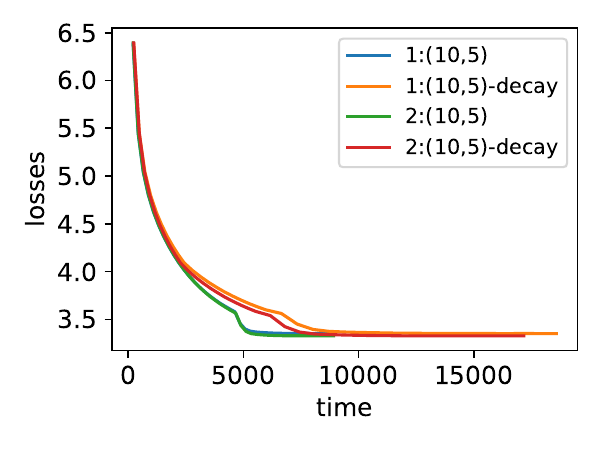}}
	\caption{Effect of different topology graphs.
		We consider two graphs depicted in Figure~\ref{fig:graph}, both of which have 16 nodes and same data allocation for fair comparison.
	(a) and (b) show training loss v.s. real time, while (c) and (d) show test accuracy v.s. real time.}
	\label{fig:rho}
	\vspace{-0.1in} 
\end{figure*}

\begin{figure*}[h!]
	\centering
	\vspace{-0.1in} 
	\subfloat[Training loss on CIFAR 10]{
		\includegraphics[width=0.3\textwidth] {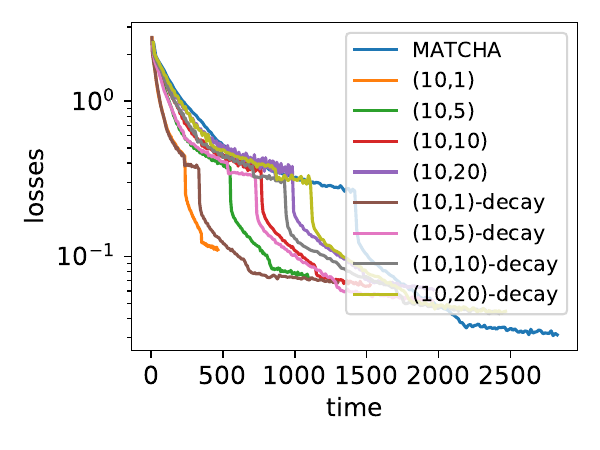}}
	\hspace{-0.15in} 
	\subfloat[Training loss on CIFAR 100]{
		\includegraphics[width=0.3\textwidth] {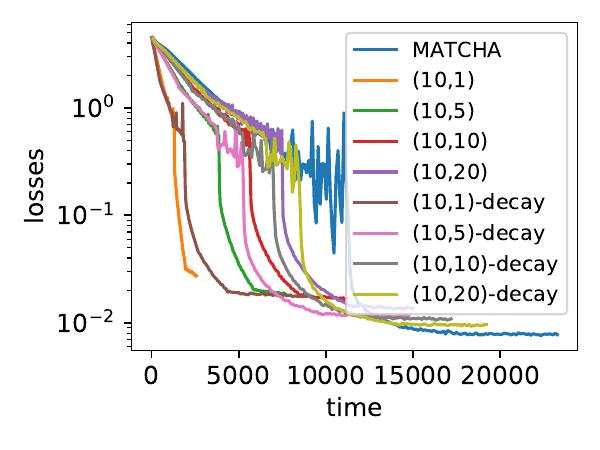}}
	\hspace{-0.15in} 
	\subfloat[Training loss on PTD]{
		\includegraphics[width=0.3\textwidth] {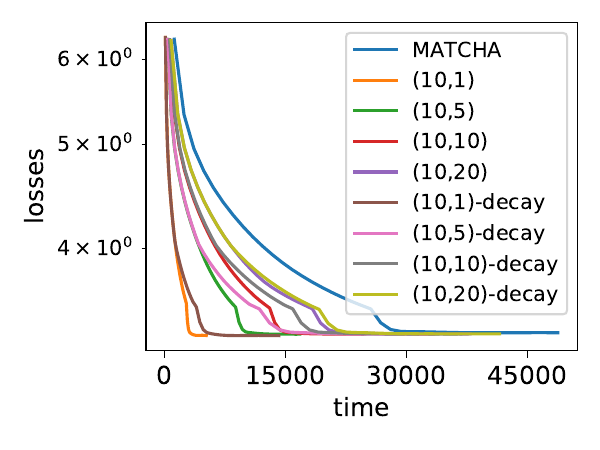}}\\
	\vspace{-0.1in} 
	\subfloat[Test acc on CIFAR 10]{
		\includegraphics[width=0.3\textwidth] {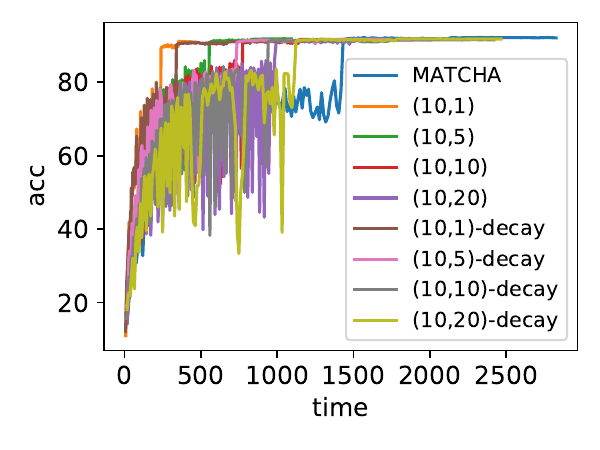}}
	\hspace{-0.15in} 
	\subfloat[Test acc on CIFAR 100]{
		\includegraphics[width=0.3\textwidth] {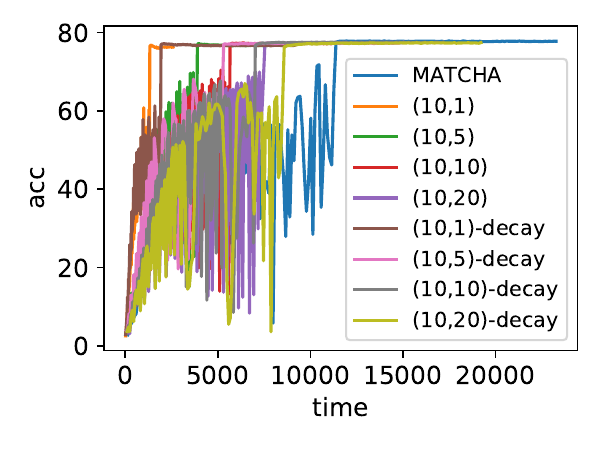}}
	\caption{LD-SGD and its variants with randomized $\W$ on three datasets. 
	The first row shows losses v.s. real time, while the second row shows test accuracy v.s. real time.
}
	\label{fig:miid}
	\vspace{-0.1in} 
\end{figure*}


\paragraph{Random $\W$}
In the body, we focus on the case where $\W$ is fixed and deterministic at each communication round.
MATCHA~\cite{wang2018adaptive} proposes to improve communication efficiency by carefully designing a random $\W$ for D-SGD.
In particular, MATCHA decomposes the base communication graph into
total several disjoint matchings and activate a small portion of the matchings at each communication round.
Mathematically speaking, MATCHA is identical to LD-SGD by letting $(I_1, I_2) = (0, 1)$ and randomizing $\W$.
It is natural to apply such randomized $\W$ to LD-SGD as a further extension.
In this case, $\W$ is independently generated and conforms to the distribution $\DM$ given in~\cite{wang2018adaptive}.

We then test the performance of LD-SGD with the i.i.d. generated  sequence of $\W$'s.
The randomized $\W$ constructed in~\cite{wang2018adaptive} will only activate a small portion of devices, therefore, the straggler effect, which means some quick devices have to wait for slower devices to response, can be alleviated.
So intuitively the communication time will be reduced.

We show the results  in Figure~\ref{fig:miid}.
Recall that MATCHA is actually identical to LD-SGD with $(I_1, I_2) = (0, 1)$ and $\W$ randomized.
From the first row of Figure~\ref{fig:miid}, LD-SGD and its variants have advantages on real time convergence than MATCHA even a small portion of device participates in the training.
The reason is similar as before: local updates is still much cheaper than communication, even though the communication is much more affordable than before by using a randomized $\W$.
Trading local computation for less communication is still a good strategy to improve communication efficiency.
Besides, though MATCHA often has a lower training loss, LD-SGD and its variants can obtain comparative test accuracy using much less real time (see the second row of Figure~\ref{fig:miid}).

We also test our decay strategy in this part.
In terms of loss, the decayed LD-SGD often has a smaller final loss than its non-decayed LD-SGD.
However, if in terms of real time, the non-decay LD-SGD converges a little bit faster than the decayed one; 
indeed, the decayed LD-SGD will gradually increase communication frequency and communication is more time-consuming.
Again, by trading more local updates for smaller communication, we can save a lot of communication measured in real time.

\section{Conclusion}
In this paper we have proposed a meta algorithm LD-SGD, which can be specified with any update scheme $\IM_T$.
Our analytical framework has shown that with any $\IM_T$ that has sufficiently small $\text{gap}(\IM_T)$, LD-SGD converges under the setting of stochastic non-convex optimization and non-identically distributed data.
We have also presented two novel update schemes, one adding multiple D-SGDs (denoted $\IM_T^1$) and the other empirically reducing the length of local updates  (denoted by $\IM_T^2$) .
Both the schemes help improve communication efficiency theoretically and empirically.
The framework we proposed might help researchers to design more efficient update schemes.

\ifCLASSOPTIONcaptionsoff
\newpage
\fi



%
%
%
\bibliographystyle{IEEEtran}
\bibliography{bib/decentralized,bib/distributed,bib/ml,bib/optimization,bib/compression}

\begin{thebibliography}{10}
\providecommand{\url}[1]{#1}
\csname url@samestyle\endcsname
\providecommand{\newblock}{\relax}
\providecommand{\bibinfo}[2]{#2}
\providecommand{\BIBentrySTDinterwordspacing}{\spaceskip=0pt\relax}
\providecommand{\BIBentryALTinterwordstretchfactor}{4}
\providecommand{\BIBentryALTinterwordspacing}{\spaceskip=\fontdimen2\font plus
\BIBentryALTinterwordstretchfactor\fontdimen3\font minus
  \fontdimen4\font\relax}
\providecommand{\BIBforeignlanguage}[2]{{%
\expandafter\ifx\csname l@#1\endcsname\relax
\typeout{** WARNING: IEEEtran.bst: No hyphenation pattern has been}%
\typeout{** loaded for the language `#1'. Using the pattern for}%
\typeout{** the default language instead.}%
\else
\language=\csname l@#1\endcsname
\fi
#2}}
\providecommand{\BIBdecl}{\relax}
\BIBdecl

\bibitem{li2014efficient}
M.~Li, T.~Zhang, Y.~Chen, and A.~J. Smola, ``Efficient mini-batch training for
  stochastic optimization,'' in \emph{Proceedings of the 20th ACM SIGKDD
  international conference on Knowledge discovery and data mining}, 2014, pp.
  661--670.

\bibitem{li2014scaling}
M.~Li, D.~G. Andersen, J.~W. Park, A.~J. Smola, A.~Ahmed, V.~Josifovski,
  J.~Long, E.~J. Shekita, and B.-Y. Su, ``Scaling distributed machine learning
  with the parameter server,'' in \emph{USENIX Symposium on Operating Systems
  Design and Implementation (OSDI)}, 2014.

\bibitem{konevcny2015federated}
J.~Kone{\v{c}}n{\`y}, B.~McMahan, and D.~Ramage, ``Federated optimization:
  distributed optimization beyond the datacenter,'' \emph{arXiv preprint
  arXiv:1511.03575}, 2015.

\bibitem{mcmahan2017communication}
B.~McMahan, E.~Moore, D.~Ramage, S.~Hampson, and B.~A. y~Arcas,
  ``Communication-efficient learning of deep networks from decentralized
  data,'' in \emph{Artificial Intelligence and Statistics (AISTATS)}, 2017.

\bibitem{konevcny2017stochastic}
J.~Kone{\v{c}}n{\`y}, ``Stochastic, distributed and federated optimization for
  machine learning,'' \emph{arXiv preprint arXiv:1707.01155}, 2017.

\bibitem{li2019federated}
T.~Li, A.~K. Sahu, A.~Talwalkar, and V.~Smith, ``Federated learning:
  Challenges, methods, and future directions,'' \emph{arXiv preprint
  arXiv:1908.07873}, 2019.

\bibitem{sahu2019federated}
A.~K. Sahu, T.~Li, M.~Sanjabi, M.~Zaheer, A.~Talwalkar, and V.~Smith,
  ``Federated optimization for heterogeneous networks,'' \emph{arXiv preprint
  arXiv:1812.06127}, 2019.

\bibitem{lian2017can}
X.~Lian, C.~Zhang, H.~Zhang, C.-J. Hsieh, W.~Zhang, and J.~Liu, ``Can
  decentralized algorithms outperform centralized algorithms? a case study for
  decentralized parallel stochastic gradient descent,'' in \emph{Advances in
  Neural Information Processing Systems (NIPS)}, 2017.

\bibitem{he2016deep}
K.~He, X.~Zhang, S.~Ren, and J.~Sun, ``Deep residual learning for image
  recognition,'' in \emph{IEEE Conference on Computer Vision and Pattern
  Recognition (CVPR)}, 2016.

\bibitem{lin2018don}
T.~Lin, S.~U. Stich, and M.~Jaggi, ``Don't use large mini-batches, use local
  sgd,'' \emph{arXiv preprint arXiv:1808.07217}, 2018.

\bibitem{stich2018local}
S.~U. Stich, ``Local {SGD} converges fast and communicates little,''
  \emph{arXiv preprint arXiv:1805.09767}, 2018.

\bibitem{wang2018cooperative}
J.~Wang and G.~Joshi, ``Cooperative {SGD}: A unified framework for the design
  and analysis of communication-efficient {SGD} algorithms,'' \emph{arXiv
  preprint arXiv:1808.07576}, 2018.

\bibitem{wang2018adaptive}
------, ``Adaptive communication strategies to achieve the best error-runtime
  trade-off in local-update sgd,'' \emph{arXiv preprint arXiv:1810.08313},
  2018.

\bibitem{wang2019matcha}
J.~Wang, A.~K. Sahu, Z.~Yang, G.~Joshi, and S.~Kar, ``Matcha: Speeding up
  decentralized sgd via matching decomposition sampling,'' \emph{arXiv preprint
  arXiv:1905.09435}, 2019.

\bibitem{ram2010asynchronous}
S.~S. Ram, A.~Nedi{\'c}, and V.~V. Veeravalli, ``Asynchronous gossip algorithm
  for stochastic optimization: Constant stepsize analysis,'' in \emph{Recent
  Advances in Optimization and its Applications in Engineering}.\hskip 1em plus
  0.5em minus 0.4em\relax Springer, 2010, pp. 51--60.

\bibitem{yuan2016convergence}
K.~Yuan, Q.~Ling, and W.~Yin, ``On the convergence of decentralized gradient
  descent,'' \emph{SIAM Journal on Optimization}, vol.~26, no.~3, pp.
  1835--1854, 2016.

\bibitem{sirb2016consensus}
B.~Sirb and X.~Ye, ``Consensus optimization with delayed and stochastic
  gradients on decentralized networks,'' in \emph{2016 IEEE International
  Conference on Big Data (Big Data)}.\hskip 1em plus 0.5em minus 0.4em\relax
  IEEE, 2016, pp. 76--85.

\bibitem{lan2017communication}
G.~Lan, S.~Lee, and Y.~Zhou, ``Communication-efficient algorithms for
  decentralized and stochastic optimization,'' \emph{Mathematical Programming},
  pp. 1--48, 2017.

\bibitem{tang2018d}
H.~Tang, X.~Lian, M.~Yan, C.~Zhang, and J.~Liu, ``D2: Decentralized training
  over decentralized data,'' \emph{arXiv preprint arXiv:1803.07068}, 2018.

\bibitem{koloskova2019decentralized}
A.~Koloskova, S.~U. Stich, and M.~Jaggi, ``Decentralized stochastic
  optimization and gossip algorithms with compressed communication,''
  \emph{arXiv preprint arXiv:1902.00340}, 2019.

\bibitem{luo2019heterogeneity}
Q.~Luo, J.~He, Y.~Zhuo, and X.~Qian, ``Heterogeneity-aware asynchronous
  decentralized training,'' \emph{arXiv preprint arXiv:1909.08029}, 2019.

\bibitem{seide2014bit}
F.~Seide, H.~Fu, J.~Droppo, G.~Li, and D.~Yu, ``1-bit stochastic gradient
  descent and its application to data-parallel distributed training of speech
  {DNNs},'' in \emph{Fifteenth Annual Conference of the International Speech
  Communication Association}, 2014.

\bibitem{lin2017deep}
Y.~Lin, S.~Han, H.~Mao, Y.~Wang, and W.~J. Dally, ``Deep gradient compression:
  Reducing the communication bandwidth for distributed training,'' \emph{arXiv
  preprint arXiv:1712.01887}, 2017.

\bibitem{zhang2017zipml}
H.~Zhang, J.~Li, K.~Kara, D.~Alistarh, J.~Liu, and C.~Zhang, ``Zipml: Training
  linear models with end-to-end low precision, and a little bit of deep
  learning,'' in \emph{Proceedings of the 34th International Conference on
  Machine Learning-Volume 70}.\hskip 1em plus 0.5em minus 0.4em\relax JMLR.
  org, 2017, pp. 4035--4043.

\bibitem{tang2018communication}
H.~Tang, S.~Gan, C.~Zhang, T.~Zhang, and J.~Liu, ``Communication compression
  for decentralized training,'' in \emph{Advances in Neural Information
  Processing Systems}, 2018, pp. 7652--7662.

\bibitem{wang2018atomo}
H.~Wang, S.~Sievert, S.~Liu, Z.~Charles, D.~Papailiopoulos, and S.~Wright,
  ``Atomo: Communication-efficient learning via atomic sparsification,'' in
  \emph{Advances in Neural Information Processing Systems}, 2018, pp.
  9850--9861.

\bibitem{horvath2019stochastic}
S.~Horv{\'a}th, D.~Kovalev, K.~Mishchenko, S.~Stich, and P.~Richt{\'a}rik,
  ``Stochastic distributed learning with gradient quantization and variance
  reduction,'' \emph{arXiv preprint arXiv:1904.05115}, 2019.

\bibitem{zhang2013communication}
Y.~Zhang, J.~C. Duchi, and M.~J. Wainwright, ``Communication-efficient
  algorithms for statistical optimization,'' \emph{Journal of Machine Learning
  Research}, vol.~14, pp. 3321--3363, 2013.

\bibitem{zhang2015divide}
Y.~Zhang, J.~Duchi, and M.~Wainwright, ``Divide and conquer kernel ridge
  regression: a distributed algorithm with minimax optimal rates,''
  \emph{Journal of Machine Learning Research}, vol.~16, pp. 3299--3340, 2015.

\bibitem{lee2017communication}
J.~D. Lee, Q.~Liu, Y.~Sun, and J.~E. Taylor, ``Communication-efficient sparse
  regression,'' \emph{The Journal of Machine Learning Research}, vol.~18,
  no.~1, pp. 115--144, 2017.

\bibitem{lin2017distributed}
S.-B. Lin, X.~Guo, and D.-X. Zhou, ``Distributed learning with regularized
  least squares,'' \emph{Journal of Machine Learning Research}, vol.~18, no.~1,
  pp. 3202--3232, 2017.

\bibitem{wang2019sharper}
S.~Wang, ``A sharper generalization bound for divide-and-conquer ridge
  regression,'' in \emph{The Thirty-Third AAAI Conference on Artificial
  Intelligence (AAAI)}, 2019.

\bibitem{smith2016cocoa}
V.~Smith, S.~Forte, C.~Ma, M.~Takac, M.~I. Jordan, and M.~Jaggi, ``{CoCoA}: A
  general framework for communication-efficient distributed optimization,''
  \emph{arXiv preprint arXiv:1611.02189}, 2016.

\bibitem{smith2017federated}
V.~Smith, C.-K. Chiang, M.~Sanjabi, and A.~S. Talwalkar, ``Federated multi-task
  learning,'' in \emph{Advances in Neural Information Processing Systems
  (NIPS)}, 2017.

\bibitem{hong2018gradient}
M.~Hong, M.~Razaviyayn, and J.~Lee, ``Gradient primal-dual algorithm converges
  to second-order stationary solution for nonconvex distributed optimization
  over networks,'' in \emph{International Conference on Machine Learning
  (ICML)}, 2018.

\bibitem{shamir2014communication}
O.~Shamir, N.~Srebro, and T.~Zhang, ``Communication-efficient distributed
  optimization using an approximate {N}ewton-type method,'' in
  \emph{International conference on machine learning (ICML)}, 2014.

\bibitem{zhang2015disco}
Y.~Zhang and X.~Lin, ``{DiSCO}: distributed optimization for self-concordant
  empirical loss,'' in \emph{International Conference on Machine Learning
  (ICML)}, 2015.

\bibitem{reddi2016aide}
S.~J. Reddi, J.~Kone{{c}}n{\`y}, P.~Richt{\'a}rik, B.~P{\'o}cz{\'o}s, and
  A.~Smola, ``{AIDE}: {f}ast and communication efficient distributed
  optimization,'' \emph{arXiv preprint arXiv:1608.06879}, 2016.

\bibitem{wang2018giant}
{S}husen {W}ang, {F}arbod~{R}oosta {K}horasani, {P}eng {X}u, and {M}ichael
  {W}.~{M}ahoney, ``{GIANT}: Globally improved approximate newton method for
  distributed optimization,'' in \emph{{C}onference on {N}eural {I}nformation
  {P}rocessing {S}ystems ({{NeurIPS}})}, 2018.

\bibitem{mahajan2018efficient}
D.~Mahajan, N.~Agrawal, S.~S. Keerthi, S.~Sellamanickam, and L.~Bottou, ``An
  efficient distributed learning algorithm based on effective local functional
  approximations,'' \emph{Journal of Machine Learning Research}, vol.~19,
  no.~1, pp. 2942--2978, 2018.

\bibitem{zinkevich2010parallelized}
M.~Zinkevich, M.~Weimer, L.~Li, and A.~J. Smola, ``Parallelized stochastic
  gradient descent,'' in \emph{Advances in neural information processing
  systems}, 2010, pp. 2595--2603.

\bibitem{you2018imagenet}
Y.~You, Z.~Zhang, C.-J. Hsieh, J.~Demmel, and K.~Keutzer, ``Imagenet training
  in minutes,'' in \emph{Proceedings of the 47th International Conference on
  Parallel Processing}.\hskip 1em plus 0.5em minus 0.4em\relax ACM, 2018, p.~1.

\bibitem{assran2019stochastic}
M.~Assran, N.~Loizou, N.~Ballas, and M.~Rabbat, ``Stochastic gradient push for
  distributed deep learning,'' in \emph{International Conference on Machine
  Learning}, 2019, pp. 344--353.

\bibitem{wang2019slowmo}
J.~Wang, V.~Tantia, N.~Ballas, and M.~Rabbat, ``Slowmo: Improving
  communication-efficient distributed sgd with slow momentum,'' \emph{arXiv
  preprint arXiv:1910.00643}, 2019.

\bibitem{konevcny2016federated}
J.~Kone{{c}}n{\`y}, H.~B. McMahan, D.~Ramage, and P.~Richt{\'a}rik, ``Federated
  optimization: distributed machine learning for on-device intelligence,''
  \emph{arXiv preprint arXiv:1610.02527}, 2016.

\bibitem{li2019convergence}
X.~Li, K.~Huang, W.~Yang, S.~Wang, and Z.~Zhang, ``On the convergence of
  {FedAvg} on {Non-IID} data,'' \emph{arXiv:1907.02189}, 2019.

\bibitem{haddadpour2019convergence}
F.~Haddadpour and M.~Mahdavi, ``On the convergence of local descent methods in
  federated learning,'' \emph{arXiv preprint arXiv:1910.14425}, 2019.

\bibitem{bianchi2013performance}
P.~Bianchi, G.~Fort, and W.~Hachem, ``Performance of a distributed stochastic
  approximation algorithm,'' \emph{IEEE Transactions on Information Theory},
  vol.~59, no.~11, pp. 7405--7418, 2013.

\bibitem{zhou2017convergence}
F.~Zhou and G.~Cong, ``On the convergence properties of a k-step averaging
  stochastic gradient descent algorithm for nonconvex optimization,''
  \emph{arXiv preprint arXiv:1708.01012}, 2017.

\bibitem{yu2019parallel}
H.~Yu, S.~Yang, and S.~Zhu, ``Parallel restarted sgd with faster convergence
  and less communication: Demystifying why model averaging works for deep
  learning,'' in \emph{AAAI Conference on Artificial Intelligence}, 2019.

\bibitem{bottou2018optimization}
L.~Bottou, F.~E. Curtis, and J.~Nocedal, ``Optimization methods for large-scale
  machine learning,'' \emph{Siam Review}, vol.~60, no.~2, pp. 223--311, 2018.

\bibitem{haddadpour19a}
F.~Haddadpour, M.~M. Kamani, M.~Mahdavi, and V.~Cadambe, ``Trading redundancy
  for communication: Speeding up distributed {SGD} for non-convex
  optimization,'' in \emph{International Conference on Machine Learning
  (ICML)}, 2019.

\bibitem{yu2019linear}
H.~Yu, R.~Jin, and S.~Yang, ``On the linear speedup analysis of communication
  efficient momentum {SGD} for distributed non-convex optimization,'' in
  \emph{International Conference on Machine Learning (ICML)}, 2019.

\bibitem{press2016using}
O.~Press and L.~Wolf, ``Using the output embedding to improve language
  models,'' \emph{arXiv preprint arXiv:1608.05859}, 2016.

\bibitem{jiang2017collaborative}
Z.~Jiang, A.~Balu, C.~Hegde, and S.~Sarkar, ``Collaborative deep learning in
  fixed topology networks,'' in \emph{Advances in Neural Information Processing
  Systems}, 2017, pp. 5904--5914.

\end{thebibliography}

\newpage
\appendices
\section{Proof of Main Result}
\label{append:PD-SGD}

\subsection{Additional notation}
\label{appen:add-nota-1}
In the proofs we will use the following notation.
Let $\G( \X; \xi)$ be defined in Section~\ref{sec:notation} previously. Let
\begin{align*}
\overline{\g}( \X; \xi) :=\frac{1}{n}\G(\X; \xi) \, \1_n= \frac{1}{n} \sum_{k=1}^n F_k(\x^{(k)};\xi^{(k)}) \in \RB^{d }
\end{align*}
be the averaged gradient.
Recall from~\eqref{eq:goal} the definition $f_k(\x) := \EB_{\xi \sim \DM_k} \big[ F_{k} \left( \x ; \xi\right) \big]$.
We analogously define
\begin{align*}
\nabla f(\X)
& := \EB \big[ \G( \X; \xi) \big]= \left[ \nabla f_1(\x^{(1)}), \cdots, \nabla f_n(\x^{(n)})\right] \in \RB^{d \times n},\\
\overline{\nabla f}(\X) 
&:= \EB  \big[ \overline{\g}( \X; \xi) \big] = \frac{1}{n}\nabla f(\X) \1_n = \frac{1}{n} \sum_{k=1}^n \nabla f_k(\x^{(k)})  \in \RB^{d },\\
\nabla f( \overline{\x} ) &:= \overline{\nabla f}( \overline{\x} )  =  \frac{1}{n} \sum_{k=1}^n \nabla f_k(\overline{\x})  \in \RB^{d } .
\end{align*}
Let $\Q = \frac{1}{n}\1_n \1_n^\top $ and $\overline{\x}_t = \frac{1}{n} \sum_{k=1}^n \x_t^{(k)}$.
Define the residual error as
\begin{equation}
\label{eq:v}
V_t = \EB_{\xi} \frac{1}{n} \big\|\X_t(\I-\Q) \big\|_F^2 =\EB_{\xi} \frac{1}{n} \sum_{k=1}^n \big\| \x_t^{(k)} - \overline{\x}_t \big\|^2 .
\end{equation}
where the expectation is taken with respect to all randomness of stochastic gradients or equivalently $\xi = (\xi_1, \cdots, \xi_t, \cdots)$ where $\xi_s = (\xi_s^{(1)}, \cdots, \xi_s^{(n)})^\top \in \RB^n$. Except where noted, we will use notation $\EB(\cdot)$ in stead of $\EB_{\xi}(\cdot)$ for simplicity. Hence $V_t = \frac{1}{n} \EB \sum_{k=1}^n \big\| \x_t^{(k)} - \overline{\x}_t \big\|^2$.

As mentioned in Section~\ref{sec:notation}, LD-SGD with arbitrary update scheme can be equivalently written in matrix form which will be used repeatedly in the following convergence analysis. Specifically, 
\begin{equation}
\label{eq:pdsgd}
\X_{t+1} = (\X_{t} - \G(\X_t; \xi_t))\W_t
\end{equation}
where $\X_t \in \RB^{d \times n}$ is the concatenation of $\{\x_t^{(k)} \}_{k=1}^n$, $\G(\X_t;\xi_t) \in \RB^{d \times n}$ is the concatenated gradient evaluated at $\X_t$ with the sampled datum $\xi_t$, and $\W_t \in \RB^{n \times n}$ is the connected matrix defined by
\begin{equation}
\label{eq:W}
\W_t = \left\{ \begin{array}{ll}
\I_n & \text{if} \ t  \notin \IM_T ; \\
\W & \text{if} \ t  \in \IM_T.
\end{array}\right. 
\end{equation}

\subsection{Useful lemmas}

The main idea of proof is to express $\X_t$ in terms of gradients and then develop upper bound on residual errors.
The technique to bound residual errors can be found in~\cite{wang2018adaptive,wang2018cooperative,wang2019matcha,yu2019parallel,yu2019linear}.

\begin{lem}[One step recursion]
	\label{lem:one-step}
	Let Assumptions~\ref{asum:smooth} and~\ref{asum:within-var} hold and $L$ and $\sigma$ be defined therein.
	Let $\eta$ be the learning rate.
	Then the iterate obtained from the update rule~\eqref{eq:pdsgd} satisfies
	\begin{align}
	\label{eq:one-step}
	\EB &\big[ f(\overline{\x}_{t+1}) \big]
	\: \leq \: \EB \big[ f(\overline{\x}_t) \big] - \frac{\eta}{2}(1-\eta L) \EB \big\| \overline{\nabla f}(\X_t) \big\|^2  \nonumber \\
	&\quad - \frac{\eta}{2} \EB \big\| \nabla f (\overline{\x}_t) \big\|^2 + \frac{L\sigma^2\eta^2 }{2n} + \frac{\eta L^2}{2} V_t,
	\end{align}
	where the expectations  are taken with respect to all randomness in stochastic gradients.
\end{lem}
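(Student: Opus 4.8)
\textbf{Proof proposal for Lemma~\ref{lem:one-step}.}

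The plan is to follow the standard descent-lemma argument adapted to the averaged iterate $\overline{\x}_t$. First I would take the matrix update rule~\eqref{eq:pdsgd}, right-multiply by $\tfrac{1}{n}\1_n$, and use the fact that $\W_t \1_n = \1_n$ (since $\W_t$ is either $\I_n$ or the doubly stochastic $\W$, both fixing $\1_n$). This yields the clean recursion $\overline{\x}_{t+1} = \overline{\x}_t - \eta \,\overline{\g}(\X_t;\xi_t)$, showing the averaged iterate performs plain SGD with the averaged stochastic gradient, regardless of the communication pattern. Then I would invoke $L$-smoothness of $f$ (which follows from Assumption~\ref{asum:smooth}, since $f = \tfrac1n\sum_k f_k$ is an average of $L$-smooth functions) to write
\[
f(\overline{\x}_{t+1}) \le f(\overline{\x}_t) - \eta \langle \nabla f(\overline{\x}_t), \overline{\g}(\X_t;\xi_t)\rangle + \frac{L\eta^2}{2}\big\|\overline{\g}(\X_t;\xi_t)\big\|^2.
\]

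Next I would take expectations conditioned on the history up to step $t$. For the inner-product term, using that the $\xi_t^{(k)}$ are unbiased (Assumption~\ref{asum:within-var} gives bounded variance, and unbiasedness is the implicit companion assumption), $\EB[\overline{\g}(\X_t;\xi_t)] = \overline{\nabla f}(\X_t)$, so the cross term becomes $-\eta\langle \nabla f(\overline{\x}_t), \overline{\nabla f}(\X_t)\rangle$. For the quadratic term, I would split $\EB\|\overline{\g}\|^2 = \|\overline{\nabla f}(\X_t)\|^2 + \EB\|\overline{\g}(\X_t;\xi_t) - \overline{\nabla f}(\X_t)\|^2$ and bound the variance of the average: since the per-node noises are independent across $k$ (or at least the cross terms vanish in expectation under the standard sampling model), $\EB\|\overline{\g} - \overline{\nabla f}\|^2 = \tfrac{1}{n^2}\sum_k \EB\|\nabla F_k(\x_t^{(k)};\xi_t^{(k)}) - \nabla f_k(\x_t^{(k)})\|^2 \le \sigma^2/n$. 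This produces the $\tfrac{L\sigma^2\eta^2}{2n}$ term.

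The remaining work is to handle the cross term $-\eta\langle \nabla f(\overline{\x}_t), \overline{\nabla f}(\X_t)\rangle$. I would use the polarization identity $-\langle \a,\bb\rangle = \tfrac12(\|\a-\bb\|^2 - \|\a\|^2 - \|\bb\|^2)$ with $\a = \nabla f(\overline{\x}_t)$ and $\bb = \overline{\nabla f}(\X_t)$, giving $-\eta\langle\a,\bb\rangle = \tfrac{\eta}{2}\|\nabla f(\overline{\x}_t) - \overline{\nabla f}(\X_t)\|^2 - \tfrac{\eta}{2}\|\nabla f(\overline{\x}_t)\|^2 - \tfrac{\eta}{2}\|\overline{\nabla f}(\X_t)\|^2$. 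The term $\tfrac{\eta}{2}\|\overline{\nabla f}(\X_t)\|^2$ combines with the $\tfrac{L\eta^2}{2}\|\overline{\nabla f}(\X_t)\|^2$ from the quadratic term to give $-\tfrac{\eta}{2}(1-\eta L)\|\overline{\nabla f}(\X_t)\|^2$. For the discrepancy term $\|\nabla f(\overline{\x}_t) - \overline{\nabla f}(\X_t)\|^2 = \big\|\tfrac1n\sum_k(\nabla f_k(\overline{\x}_t) - \nabla f_k(\x_t^{(k)}))\big\|^2$, apply Jensen/convexity of $\|\cdot\|^2$ followed by $L$-smoothness of each $f_k$ to get $\le \tfrac1n\sum_k L^2\|\overline{\x}_t - \x_t^{(k)}\|^2$, which is exactly $L^2 V_t$ after taking full expectation (recall the definition~\eqref{eq:v} of $V_t$). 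Collecting all terms and taking the total expectation yields~\eqref{eq:one-step}.

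I do not expect a serious obstacle here; the argument is routine. The one point requiring mild care is the variance splitting in the quadratic term — making sure the cross terms between distinct nodes' stochastic noise vanish, which relies on conditional independence of the samples $\xi_t^{(1)},\dots,\xi_t^{(n)}$ given the history, a standard (if sometimes unstated) part of the sampling model. The other bookkeeping detail is being consistent about when expectations are conditional versus total, so that the $V_t$ that appears matches its definition as a fully-averaged quantity.
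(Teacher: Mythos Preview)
Your proposal is correct and follows essentially the same approach as the paper. The only difference is that the paper outsources the intermediate descent inequality (the display you reach after applying the polarization identity and the $\sigma^2/n$ variance bound) to Lemma~8 of \cite{tang2018d}, whereas you derive it from scratch; the final step bounding $\|\nabla f(\overline{\x}_t) - \overline{\nabla f}(\X_t)\|^2 \le L^2 V_t$ via Jensen and smoothness is identical in both.
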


\begin{proof}
	Recall that from the update rule~\eqref{eq:pdsgd} we have
	\[ \overline{\x}_{t+1} = \overline{\x}_{t} - \eta \overline{\g}(\X_t, \xi_t). \]
	When Assumptions~\ref{asum:smooth} and~\ref{asum:within-var} hold,
	it follows directly from Lemma 8 in~\cite{tang2018d} that
	\begin{align*}
	\EB \big[ f(\overline{\x}_{t+1}) \big]
	&\: \leq \: \EB \big[ f(\overline{\x}_t) \big] - \frac{\eta}{2} \EB \big\| \nabla f (\overline{\x}_t) \big\|^2  \\
&\quad - \frac{\eta}{2}(1-\eta L) \EB \big\| \overline{\nabla f}(\X_t) \big\|^2 
	+ \frac{L\sigma^2\eta^2 }{2n} \\
	&\quad + \frac{\eta}{2}  \EB \big\| \nabla f(\overline{\x}_t) - \overline{\nabla f}(\X_t) \big\|^2.
	\end{align*}
	
	The conclusion then follows from 
	\begin{align*}
	\EB \| \nabla f(\overline{\x}_t) - \overline{\nabla f}(\X_t)  \|^2
	&= \frac{1}{n^2} \EB \bigg\| \sum_{k=1}^n \left[  f_k \big(\overline{\x}_t \big) -   f_k \big(\x_t^{(k)} \big) \right] \bigg\|^2\\
	& \overset{(a)}{\le} \frac{1}{n} \sum_{k=1}^n \EB \Big\|  f_k \big(\overline{\x}_t \big) -   f_k \big(\x_t^{(k)} \big)  \Big\|^2\\
	& \overset{(b)}{\le}  \frac{L^2}{n} \sum_{k=1}^n  \EB \big\| \x_t^{(k)} - \overline{\x}_t \big\|^2\\
	& = L^2 V_t
	\end{align*}
	where (a) follows from Jensen's inequality, (b) follows from Assumption~\ref{asum:smooth},
	and $V_t$ is defined in \eqref{eq:v}.
\end{proof}

\begin{lem}[Residual error decomposition]
	\label{lem:residual-decom}
	Let $\X_1 = \x_1 \1_n^\top \in \RB^{d\times n}$ be the initialization. If we apply the update rule~\eqref{eq:pdsgd}, then for any $t \ge 2$,
	\begin{equation}
	\label{eq:decomposition}
	\X_t(\I_n-\Q)= - \eta \sum_{s=1}^{t-1} \G(\X_s;\xi_s) \left(  \Ph_{s, t-1} - \Q \right)
	\end{equation}
	where $\Ph_{s, t-1}$ is defined in~\eqref{eq:Ph} and $\W_t$ is given in~\eqref{eq:W}.
	\begin{equation}
	\label{eq:Ph}
	\Ph_{s, t-1} = \left\{ 
	\begin{array}{ll}
	\I_{n}   &\text{if} \ s \ge t \\
	\prod_{l=s}^{t-1} \W_l & \text{if} \ s < t.
	\end{array}
	\right.
	\end{equation}
\end{lem}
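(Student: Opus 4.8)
The plan is to establish~\eqref{eq:decomposition} by unrolling the matrix recursion~\eqref{eq:pdsgd} (equivalently, by induction on $t$), exploiting only the doubly stochastic structure of each $\W_l$ together with the projection identities satisfied by $\Q = \frac{1}{n}\1_n\1_n^\top$; no analytic input (smoothness, variance bounds, connectivity) is needed for this lemma.

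First I would collect the elementary algebraic facts used throughout. Since every $\W_l \in \{\I_n, \W\}$ is doubly stochastic, $\W_l\1_n = \1_n$ and $\1_n^\top\W_l = \1_n^\top$, hence $\W_l\Q = \Q\W_l = \Q$; also $\Q^2 = \Q$. Taking products of such matrices, $\Q\,\Ph_{s,t-1} = \Ph_{s,t-1}\,\Q = \Q$ for all $s \le t-1$, and $\Ph_{s,t-1}\W_t = \Ph_{s,t}$ by definition~\eqref{eq:Ph}. From these, the one identity that does the real work is $(\Ph_{s,t-1} - \Q)(\W_t - \Q) = \Ph_{s,t} - \Q$, obtained by expanding the product and cancelling with the preceding relations. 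Finally, since $\X_1 = \x_1\1_n^\top$ is a consensus configuration, $\X_1(\Ph_{1,t-1} - \Q) = \x_1(\1_n^\top\Ph_{1,t-1} - \1_n^\top\Q) = \x_1(\1_n^\top - \1_n^\top) = 0$; in particular $\X_1(\I_n - \Q) = 0$.

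Then I would run the induction on $t$, keeping the step size $\eta$ as in Algorithm~\ref{alg:main}. Base case $t = 2$: $\X_2(\I_n - \Q) = (\X_1 - \eta\G(\X_1;\xi_1))\W_1(\I_n - \Q) = (\X_1 - \eta\G(\X_1;\xi_1))(\W_1 - \Q)$ using $\W_1\Q = \Q$; since $\X_1(\W_1 - \Q) = 0$ and $\W_1 = \Ph_{1,1}$, this equals $-\eta\,\G(\X_1;\xi_1)(\Ph_{1,1} - \Q)$, which is~\eqref{eq:decomposition} at $t = 2$. Inductive step: assuming~\eqref{eq:decomposition} at level $t$, write $\X_{t+1}(\I_n - \Q) = (\X_t - \eta\G(\X_t;\xi_t))(\W_t - \Q)$. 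I would split off the gradient term as $-\eta\,\G(\X_t;\xi_t)(\W_t - \Q) = -\eta\,\G(\X_t;\xi_t)(\Ph_{t,t} - \Q)$, and for the remaining term use $\Q(\W_t - \Q) = 0$ to rewrite $\X_t(\W_t - \Q) = \X_t(\I_n - \Q)(\W_t - \Q)$; substituting the inductive hypothesis and applying $(\Ph_{s,t-1} - \Q)(\W_t - \Q) = \Ph_{s,t} - \Q$ term by term collapses this into $-\eta\sum_{s=1}^{t-1}\G(\X_s;\xi_s)(\Ph_{s,t} - \Q)$. Adding the two pieces gives $\X_{t+1}(\I_n - \Q) = -\eta\sum_{s=1}^{t}\G(\X_s;\xi_s)(\Ph_{s,t} - \Q)$, which is~\eqref{eq:decomposition} at level $t+1$, closing the induction.

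There is no genuine obstacle here; the only thing to watch is the bookkeeping of the product index ranges of $\Ph_{s,\cdot}$ and the sidedness of the cancellations — $\Q$ annihilates $(\W_t - \Q)$ only from the left, so it matters that the earlier factors $\X_s$ and $\Ph_{s,t-1}$ always sit to the left of $(\W_t - \Q)$ at each unrolling step. I would therefore state the base case and inductive step cleanly and write out the one-line expansion of $(\Ph_{s,t-1} - \Q)(\W_t - \Q)$ explicitly, since it is the crux of the manipulation.
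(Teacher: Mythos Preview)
Your proposal is correct and takes essentially the same approach as the paper: both unroll the recursion~\eqref{eq:pdsgd} using the doubly-stochastic identities $\W_l\Q=\Q\W_l=\Q$ and the consensus initialization $\X_1(\I_n-\Q)=\0$, with the paper presenting this as a backward iterative expansion and you as a forward induction. One minor remark: your closing comment that ``$\Q$ annihilates $(\W_t-\Q)$ only from the left'' is inaccurate, since $(\W_t-\Q)\Q=0$ holds equally well, but this does not affect the argument.
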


\begin{proof}[Proof]
	For convenience, we denote by $\G_t = \G(\X_t; \xi_t) \in \RB^{d \times n}$ the concatenation of stochastic gradients at iteration $t$. According to the update rule, we have
	\begin{align*}
	\X_t&(\I_n-\Q) = (\X_{t-1} - \eta\G_{t-1})\W_{t-1}(\I_n-\Q)\\
	&\overset{(a)}{=} \X_{t-1}(\I_n-\Q)\W_{t-1} - \eta\G_{t-1}(\W_{t-1} -\Q)\\
	&\overset{(b)}{=} \X_{t-l}(\I_n-\Q) \prod_{s=t-l}^{t-1} \W_{s} - \eta \sum_{s=t-l}^{t-1}  \G_s (\Ph_{s, t-1} - \Q ) \\
	&\overset{(c)}{=} \X_1(\I_n-\Q)\Ph_{1, t-1} - \eta \sum_{s=1}^{t-1} \G_s \left(  \Ph_{s, t-1} - \Q \right)
	\end{align*}
	where (a) follows from $\W_{t-1}\Q = \Q \W_{t-1}$; (b) results by iteratively expanding the expression of $\X_{s}$ from $s = t-1$ to $s = t- l+1$ and plugging in the definition of $\Ph_{s, t-1}$ in~\eqref{eq:W}; (c) follows simply by setting $l=t-1$. Finally, the conclusion follows from the initialization $\X_1 = \x_1 \1_n^\top$ which implies $\X_1(\I-\Q)= \0$.
\end{proof}

\begin{lem}[Gradient variance decomposition]
	\label{lem:grad-decom}
	Given any sequence of deterministic matrices $\{\A_s\}_{s =1}^t$, then for any $t \ge 1$,
	\begin{align}
	\label{eq:grad-var}
	&\EB_{\xi} \bigg\| \sum_{s =1}^t  \left[ \G \big(\X_s;\xi_s \big) - \nabla f \big(\X_s \big) \right] \A_s \bigg\|_F^2  \nonumber \\
	&=  	\sum_{s =1}^t  \EB_{\xi_s}  \Big\|  \left[ \G \big(\X_s;\xi_s \big) - \nabla f \big(\X_s \big) \right] \A_s \Big\|_F^2 .
	\end{align}
	where the expectation $\EB_{\xi}(\cdot)$ is taken with respect to the randomness of $\xi = (\xi_1, \cdots, \xi_t, \cdots)$ and $\EB_{\xi_s}(\cdot)$ is with respect to $\xi_s = (\xi_s^{(1)}, \cdots, \xi_s^{(n)})^\top \in \RB^n$.
\end{lem}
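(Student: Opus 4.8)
The plan is to establish the variance decomposition for a sum of martingale difference terms by exploiting the fact that the cross terms vanish in expectation. First I would introduce the shorthand $\mD_s := \left[\G(\X_s;\xi_s) - \nabla f(\X_s)\right]\A_s$ for each $s$, so that the left-hand side of~\eqref{eq:grad-var} is $\EB_\xi \|\sum_{s=1}^t \mD_s\|_F^2$. Expanding the squared Frobenius norm via the inner product $\langle \mA, \mB\rangle_F = \tr(\mA^\top \mB)$ gives $\sum_{s=1}^t \EB_\xi \|\mD_s\|_F^2 + \sum_{s \neq s'} \EB_\xi \langle \mD_s, \mD_{s'}\rangle_F$, so the claim reduces to showing every cross term with $s \neq s'$ vanishes.

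For the cross terms, I would take without loss of generality $s < s'$ and use the tower property of conditional expectation, conditioning on the filtration generated by $\xi_1,\dots,\xi_{s'-1}$ (which determines $\X_1,\dots,\X_{s'}$ since the iterates are measurable functions of past samples, and also determines $\mD_s$ and $\A_{s'}$, as the $\A$'s are deterministic). The key observation is that $\X_{s'}$ depends only on $\xi_1,\dots,\xi_{s'-1}$ — crucially \emph{not} on $\xi_{s'}$ — so conditionally on this filtration we have $\EB[\G(\X_{s'};\xi_{s'}) \mid \xi_1,\dots,\xi_{s'-1}] = \nabla f(\X_{s'})$ by the definition $\nabla f(\X) = \EB[\G(\X;\xi)]$ (and the fact that, conditionally, $\X_{s'}$ is fixed while $\xi_{s'}$ is freshly drawn from $\DM_1 \times \cdots \times \DM_n$). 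Hence $\EB[\mD_{s'} \mid \xi_1,\dots,\xi_{s'-1}] = \vzero$, and since $\mD_s$ is measurable with respect to this conditioning, $\EB[\langle \mD_s, \mD_{s'}\rangle_F \mid \xi_1,\dots,\xi_{s'-1}] = \langle \mD_s, \EB[\mD_{s'}\mid \xi_1,\dots,\xi_{s'-1}]\rangle_F = 0$. Taking total expectation finishes the cross term. Finally, each surviving diagonal term satisfies $\EB_\xi \|\mD_s\|_F^2 = \EB_{\xi_s}\|\mD_s\|_F^2$ because $\mD_s$ depends on $\xi$ only through $\xi_s$ (again using that $\X_s$ is a function of $\xi_1,\dots,\xi_{s-1}$ and that the samples $\xi_1,\xi_2,\dots$ are drawn independently), yielding exactly the right-hand side of~\eqref{eq:grad-var}.

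The main obstacle here is purely bookkeeping rather than conceptual: one must be careful to state precisely the measurability structure — namely that $\X_s$ is a deterministic function of $(\xi_1,\dots,\xi_{s-1})$ via the recursion~\eqref{eq:pdsgd}, and that the $\xi_s$'s are mutually independent with $\xi_s^{(k)} \sim \DM_k$ — since the unbiasedness $\EB_{\xi_s}[\G(\X_s;\xi_s)] = \nabla f(\X_s)$ is only meaningful once $\X_s$ is held fixed. No inequalities or assumptions beyond these independence/measurability facts are needed; in particular Assumptions~\ref{asum:smooth}--\ref{asum:W} play no role in this lemma. I would present the argument compactly, writing the cross-term vanishing as a single conditional-expectation computation and relegating the independence discussion to one clarifying sentence.
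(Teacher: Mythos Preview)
Your proposal is correct and essentially identical to the paper's proof: both expand the squared Frobenius norm into diagonal plus cross terms, then kill each cross term by conditioning on $\FM_{s'-1}=\sigma(\xi_1,\dots,\xi_{s'-1})$ and invoking the tower rule together with the unbiasedness $\EB[\G(\X_{s'};\xi_{s'})\mid\FM_{s'-1}]=\nabla f(\X_{s'})$. One small slip worth fixing: your claim that $\mD_s$ ``depends on $\xi$ only through $\xi_s$'' contradicts your own parenthetical that $\X_s$ is a function of $\xi_1,\dots,\xi_{s-1}$; the paper's notation $\EB_{\xi_s}$ on the right-hand side is equally loose and is best read as the full expectation $\EB_\xi$, which is in fact how the lemma is applied downstream.
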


\begin{proof}
	\begin{align*}
	\EB_{\xi} &\bigg\| \sum_{s =1}^t  \left[ \G \big(\X_s;\xi_s \big) - \nabla f \big(\X_s \big) \right] \A_s \bigg\|_F^2\\
	&\: = \:  \sum_{s =1}^t  \EB_{\xi_s} \Big\| \left[  \G \big(\X_s;\xi_s \big) - \nabla f \big(\X_s \big)  \right] \A_s \Big\|_F^2 \\
	& + 2 \sum_{1 \le s < l \le t}  \EB_{\xi_s, \xi_l} \Big[  \Big\langle \left[ \G \big(\X_s;\xi_s \big) - \nabla f \big(\X_s \big) \right] \A_s, \\
	&\qquad  \qquad \qquad \qquad  \: \left[ \G \big(\X_l;\xi_l \big) - \nabla f \big(\X_l \big) \right] \A_l  \Big\rangle \Big]
	\end{align*}
	where the inner product of matrix is defined by $\langle \A, \B \rangle  =\text{tr}(\A \B^\top)$.
	
	Since different nodes work independently without interference, for $ s \neq l \in [t]$, $\xi_s$ is independent with $\xi_l$. 
	Let $\FM_s = \sigma(\{\xi_l\}_{l=1}^s)$ be the $\sigma$-field generated by all the random variables until iteration $s$.  
	Then for any $1 \le s < l \le t$, we obtain
	\begin{align*}
	&\EB_{\xi_s, \xi_l} \Big[ \Big\langle  \left( \G(\X_s;\xi_s) - \nabla f(\X_s) \right) \A_s, \\
		&\qquad  \qquad \qquad  \left( \G(\X_l;\xi_l) - \nabla f(\X_l)\right) \A_l  \Big\rangle \Big] \\
	&=\EB_{\xi_s} \EB_{\xi_l} \Big[ \Big\langle  \left( \G(\X_s;\xi_s) - \nabla f(\X_s) \right) \A_s, \\
	&\qquad  \qquad \qquad  \left( \G(\X_l;\xi_l) - \nabla f(\X_l)\right) \A_l  \Big\rangle \, \Big| \, \FM_{l-1} \Big] \\
	&\overset{(a)}{=}\EB_{\xi_s}  \Big\{ \Big\langle  \left( \G(\X_s;\xi_s) - \nabla f(\X_s) \right) \A_s,   \\
	&\qquad  \qquad \qquad 
	\: \EB_{\xi_l} \big[ \left( \G(\X_l;\xi_l) - \nabla f(\X_l)\right) \A_l  \big| \FM_{l-1} \big] \Big\rangle \Big\} \\
	&\overset{(b)}{=}\EB_{\xi_s}  \Big[ \Big\langle  \G(\X_s;\xi_s) - \nabla f(\X_s), \: \0 \Big\rangle \Big] = 0
	\end{align*}
	where (a) follows from the tower rule by noting that $\X_s$ and $\xi_s$ are both $\FM_{l-1}$-measurable and (b) uses the fact that $\xi_l$ is independent with $\FM_s (s < l)$ and  $\G(\X_l;\xi_l)$ is a unbiased estimator of $\nabla f(\X_l)$.
\end{proof}

\begin{lem}[Bound on second moments of gradients]
	\label{lem:grad-second}
	For any $n$ points: $\{ \x_t^{(k)}\}_{k=1}^n$, define $\X_t = \left[ \x_t^{(1)}, \cdots, \x_t^{(n)} \right]$ as their concatenation, then under Assumption~\ref{asum:smooth} and~\ref{asum:inter-var},
	\begin{equation}
	\label{eq:grad-second}
	\frac{1}{n} \EB \big\| \nabla f(\X_t)  \big\|_F^2 
	\: \leq \: 8L^2 V_t + 4 \kappa^2  +  4 \EB \big\| \overline{\nabla f}(\X_t)  \big\|^2.
	\end{equation}
\end{lem}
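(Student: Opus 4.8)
\textbf{Proof plan for Lemma~\ref{lem:grad-second}.} The goal is to bound $\frac{1}{n} \EB \|\nabla f(\X_t)\|_F^2 = \frac{1}{n} \sum_{k=1}^n \EB \|\nabla f_k(\x_t^{(k)})\|^2$ in terms of the residual error $V_t$, the heterogeneity $\kappa^2$, and the averaged gradient $\overline{\nabla f}(\X_t)$. The natural strategy is to insert and subtract well-chosen intermediate quantities and then repeatedly apply the elementary inequality $\|a+b+c\|^2 \le 3(\|a\|^2 + \|b\|^2 + \|c\|^2)$ (or a two-term version with the factor $2$). The obvious anchor points are $\nabla f_k(\overline{\x}_t)$ (replacing the local iterate by the averaged iterate) and $\nabla f(\overline{\x}_t)$ (replacing the local objective's gradient by the global one).

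\textbf{Key steps.} First I would write, for each node $k$,
\[
\nabla f_k(\x_t^{(k)}) = \big[\nabla f_k(\x_t^{(k)}) - \nabla f_k(\overline{\x}_t)\big] + \big[\nabla f_k(\overline{\x}_t) - \nabla f(\overline{\x}_t)\big] + \nabla f(\overline{\x}_t),
\]
apply the three-term inequality, and average over $k$. The first group averages to $\frac{3}{n}\sum_k \|\nabla f_k(\x_t^{(k)}) - \nabla f_k(\overline{\x}_t)\|^2 \le \frac{3L^2}{n}\sum_k \|\x_t^{(k)} - \overline{\x}_t\|^2 = 3L^2 V_t$ by Assumption~\ref{asum:smooth} and the definition~\eqref{eq:v} of $V_t$ (after taking $\EB$). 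The second group averages to $\frac{3}{n}\sum_k \|\nabla f_k(\overline{\x}_t) - \nabla f(\overline{\x}_t)\|^2 \le 3\kappa^2$ by Assumption~\ref{asum:inter-var}. The third term contributes $3\|\nabla f(\overline{\x}_t)\|^2$. This already gives a bound of the form $3L^2 V_t + 3\kappa^2 + 3\EB\|\nabla f(\overline{\x}_t)\|^2$, but the lemma is stated with $\overline{\nabla f}(\X_t)$ rather than $\nabla f(\overline{\x}_t)$ in the last term, so one more substitution is needed: write $\nabla f(\overline{\x}_t) = \big[\nabla f(\overline{\x}_t) - \overline{\nabla f}(\X_t)\big] + \overline{\nabla f}(\X_t)$, apply the two-term inequality, and note that $\|\nabla f(\overline{\x}_t) - \overline{\nabla f}(\X_t)\|^2 = \|\frac{1}{n}\sum_k [\nabla f_k(\overline{\x}_t) - \nabla f_k(\x_t^{(k)})]\|^2 \le \frac{1}{n}\sum_k \|\nabla f_k(\overline{\x}_t) - \nabla f_k(\x_t^{(k)})\|^2 \le L^2 V_t$ by Jensen and smoothness — exactly the bound already derived inside the proof of Lemma~\ref{lem:one-step}. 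Carrying these constants through (being slightly generous with the splitting, e.g.\ grouping the first term with a factor and the last two with a factor so the arithmetic lands on $8, 4, 4$) yields the stated inequality.

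\textbf{Main obstacle.} There is no real analytic difficulty here; the only thing to be careful about is bookkeeping of the constants so that the final coefficients come out as $8L^2$, $4$, and $4$ rather than something slightly larger. Concretely, one should choose the splitting so that the $L^2 V_t$ contributions from the two separate substitutions (the $3L^2 V_t$ from replacing $\x_t^{(k)}$ by $\overline{\x}_t$ inside $\nabla f_k$, and the additional $L^2 V_t$-type term from replacing $\nabla f(\overline{\x}_t)$ by $\overline{\nabla f}(\X_t)$, each scaled by whatever factor the inequality introduces) add up to at most $8L^2 V_t$, and similarly for $\kappa^2$ and the $\overline{\nabla f}(\X_t)$ term. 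A clean route is to first prove the two-term bound $\frac{1}{n}\EB\|\nabla f(\X_t)\|_F^2 \le 2\EB\|\nabla f(\X_t) - \nabla f(\overline{\x}_t)\1_n^\top\|_F^2/n \cdot (\text{schematically}) + \dots$, but in practice just applying $\|a+b+c\|^2 \le 3(\cdots)$ once and then $\|a+b\|^2 \le 2(\cdots)$ once, with the smoothness and Jensen estimates above, is enough, and a bit of slack in the constants is harmless.
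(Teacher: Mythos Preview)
Your proposal is correct and uses exactly the same ingredients as the paper: smoothness to control $\nabla f_k(\x_t^{(k)})-\nabla f_k(\overline{\x}_t)$, Assumption~\ref{asum:inter-var} to control $\nabla f_k(\overline{\x}_t)-\nabla f(\overline{\x}_t)$, and Jensen plus smoothness to control $\nabla f(\overline{\x}_t)-\overline{\nabla f}(\X_t)$. The only difference is organizational: rather than a three-term split followed by a two-term split (which would yield constants $9,3,6$), the paper performs a single four-term decomposition
\[
\nabla f(\X_t)=\big[\nabla f(\X_t)-\nabla f(\overline{\x}_t\1_n^\top)\big]+\big[\nabla f(\overline{\x}_t\1_n^\top)-\nabla f(\overline{\x}_t)\1_n^\top\big]+\big[\nabla f(\overline{\x}_t)\1_n^\top-\overline{\nabla f}(\X_t)\1_n^\top\big]+\overline{\nabla f}(\X_t)\1_n^\top
\]
and applies $\|\sum_{i=1}^4 a_i\|^2\le 4\sum_i\|a_i\|^2$ once, which is precisely how the constants $4+4=8$, $4$, $4$ arise; this is the ``grouping so the arithmetic lands on $8,4,4$'' that you gesture at.
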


\begin{proof}
	By splitting $\nabla f(\X_t)$ into four terms, we obtain
	\begin{align*}
	& \EB \| \nabla f(\X_t)  \|_F^2\\
	=&\EB  \| \nabla f(\X_t)  - \nabla f(\overline{\x}_t  \1_n^\top) 
	+  \nabla f(\overline{\x}_t  \1_n^\top)  -  \nabla f(\overline{\x}_t) \1_n^\top  \\
	& \quad +  \nabla f(\overline{\x}_t) \1_n^\top - \overline{\nabla f}(\X_t)\1_n^\top 
	+ \overline{\nabla f}(\X_t)\1_n^\top  \|_F^2 \\
	\overset{(a)}{\le} 
	& 4\EB \| \nabla f(\X_t)  - \nabla f(\overline{\x}_t  \1_n^\top)\|_F^2 \\
	& \quad + 4 \EB \| \nabla f(\overline{\x}_t  \1_n^\top)  -  \nabla f(\overline{\x}_t) \1_n^\top  \|_F^2    \\
	& \quad + 4\EB \| \nabla f(\overline{\x}_t) \1_n^\top - \overline{\nabla f}(\X_t)\1_n^\top \|_F^2 + 4 \EB \| \overline{\nabla f}(\X_t)\1_n^\top  \|_F^2 \\
	\overset{(b)}{=}& 4L^2 n V_t +  4 \EB \| \nabla f(\overline{\x}_t  \1_n^\top)  -  \nabla f(\overline{\x}_t) \1_n^\top \|_F^2  \\
	& \quad + 4L^2 n V_t + 4 n \EB  \| \overline{\nabla f}(\X_t)  \|^2\\
	\overset{(c)}{=}&  8L^2 n V_t  + 4 n \kappa^2 +  4 n \| \overline{\nabla f}(\X_t)  \|^2
	\end{align*}
	where (a) follows from the basic inequality $\| \sum_{i=1}^n \A_i \|_F^2 \le n \sum_{i=1}^n \|\A_i\|_F^2$; (b) follows from the smoothness of $\{f_k\}_{k=1}^n$ and $f = \frac{1}{n} \sum_{k=1}^n f_k$ (Assumption~\ref{asum:smooth}) and the definition of $V_t$ in~\eqref{eq:v}; (c) follows from Assumption~\ref{asum:inter-var} as a result of the fact $\|\nabla f(\overline{\x}_t  \1_n^\top)  -  \nabla f(\overline{\x}_t) \1_n^\top\|_F^2 = \sum_{k=1}^n \| \nabla f_k(\overline{\x}_t) - \nabla f(\overline{\x}_t) \|^2$.
\end{proof}

\begin{lem}[Bound on residual errors]
	\label{lem:residual-bound}
	Let $\rho_{s, t-1} = \|\Ph_{s, t-1} - \Q\|$ where $\Ph_{s, t-1}$ is defined in~\eqref{eq:Ph}. Then the residual error can be upper bounded, i.e., 
	\[  V_t \le 2 \eta^2 U_t  \]
	where
	\begin{equation}
	\label{eq:U}
	U_t =
	\sigma^2  \sum_{s=1}^{t-1} \rho^2_{s, t-1} 
	+
	\left( \sum_{s=1}^{t-1}  \rho_{s, t-1} \right) \left(  \sum_{s=1}^{t-1} \rho_{s, t-1}   L_s \right).
	\end{equation}
	and
	\[ L_s = 8L^2 V_s  + 4\kappa^2 +4 \EB \| \overline{\nabla f}(\X_s)  \|^2.   \]
\end{lem}

\begin{proof}
	Again we denote by $\G_t = \G(\X_t; \xi_t)$ for simplicity. From Lemma~\ref{lem:residual-decom}, we can obtain a closed form of $V_t$. Then it follows that
	\begin{align*}
	&n V_t  \\
	&= \EB \| \X_{t}(\I - \Q) \|_F^2 = \eta^2 \EB \bigg\| \sum_{s=1}^{t-1} \G_s(\Ph_{s, t-1} - \Q)\bigg\|_F^2 \\
	&=\eta^2 \EB \bigg\| \sum_{s=1}^{t-1} (\G_s - \EB \G_s )(\Ph_{s, t-1} - \Q)  +  \sum_{s=1}^{t-1} \EB \G_s(\Ph_{s, t-1} - \Q) \bigg\|_F^2 \\
	&\overset{(a)}{\le} 2 \eta^2 \EB\bigg\| \sum_{s=1}^{t-1} (\G_s - \nabla f (\X_s) )(\Ph_{s, t-1} - \Q)\bigg\|_F^2 \\
	& \quad + 2\eta^2 \EB \bigg\|\sum_{s=1}^{t-1} \nabla f (\X_s)(\Ph_{s, t-1} - \Q) \bigg\|_F^2\\
	&\overset{(b)}{=} 2 \eta^2\EB \sum_{s=1}^{t-1} \|(\G_s - \nabla f (\X_s) )(\Ph_{s, t-1} - \Q)\|_F^2 \\
	& \quad+ 2\eta^2 \EB \bigg\|\sum_{s=1}^{t-1} \nabla f (\X_s)(\Ph_{s, t-1} - \Q) \bigg\|_F^2\\
	&\overset{(c)}{\le} 2 \eta^2 \EB \sum_{s=1}^{t-1} \|(\G_s - \nabla f (\X_s) )(\Ph_{s, t-1} - \Q)\|_F^2 \\
	& \quad+ 2\eta^2 \EB \left(  \sum_{s=1}^{t-1} \| \nabla f (\X_s)(\Ph_{s, t-1} - \Q) \|_F  \right)^2 \\
	&\overset{(d)}{\le} 2 \eta^2 \EB \sum_{s=1}^{t-1} \|\G_s - \nabla f (\X_s) \|_F^2 \|\Ph_{s, t-1} - \Q\|^2  \\
	& \quad+ 2\eta^2 \EB \left(  \sum_{s=1}^{t-1} \| \nabla f (\X_s)\|_F \|(\Ph_{s, t-1} - \Q) \| \right)^2 \\
	&\overset{(e)}{=} 2 \eta^2 \sum_{s=1}^{t-1} \rho^2_{s, t-1}  \EB \|\G_s - \nabla f (\X_s) \|_F^2 \\
	& \quad  + 2\eta^2 \EB \left(  \sum_{s=1}^{t-1} \rho_{s, t-1} \| \nabla f (\X_s)\|_F  \right)^2 \\
	&\overset{(f)}{\le} 2 \eta^2 \sum_{s=1}^{t-1} \rho^2_{s, t-1} \EB  \|\G_s - \nabla f (\X_s) \|_F^2 \\
	& \quad  + 2\eta^2 \left( \sum_{s=1}^{t-1}  \rho_{s, t-1} \right) \left(  \sum_{s=1}^{t-1} \rho_{s, t-1}\EB  \|  \nabla f (\X_s)\|_F^2 \right) \\
	&\overset{(g)}{\le} 2 \eta^2 \sum_{s=1}^{t-1} \rho^2_{s, t-1} n \sigma^2   + 2\eta^2 \left( \sum_{s=1}^{t-1}  \rho_{s, t-1} \right) \left(  \sum_{s=1}^{t-1} \rho_{s, t-1}\cdot nL_s \right) \\
	&= 2n\eta^2 \left[   \sigma^2  \sum_{s=1}^{t-1} \rho^2_{s, t-1} 
	+  \left( \sum_{s=1}^{t-1}  \rho_{s, t-1} \right) \left(  \sum_{s=1}^{t-1} \rho_{s, t-1}  L_s \right)  \right] \\
	&=2n\eta^2 U_t
	\end{align*}
	where (a) follows from the basic inequality $\| \a + \bb \|^2 \le 2 (\|\a\|^2 + \|\bb\|^2)$ and $\EB \G_s = \nabla f(\X_s)$; (b) follows from Lemma~\ref{lem:grad-decom}; (c) follows from the triangle inequality $\|\sum_{s=1}^{t-1}\A_s\|_F \le \sum_{s=1}^{t-1}\|\A_s\|_F$; (d) follows from the basic inequality $\|\A\B\|_F \le \|\A\|_F \|\B\|$ for any matrix $\A$ and $\B$; (e) directly follows from the notation $\rho_{s, t-1} = \|\Ph_{s, t-1} - \Q\|$; (f) follows from the Cauchy inequality; (g) follows from Assumption~\ref{asum:within-var} and Lemma~\ref{lem:grad-second}.
\end{proof}

\begin{lem}[Bound on average residual error]
	\label{lem:residual-average}
	For any fix $T$, define
	\begin{gather}
		\label{eq:ABC}
	A_T =\frac{1}{T}\sum_{t=1}^{T} \sum_{s=1}^{t-1} \rho^2_{s, t-1}, \\
	B_T = \frac{1}{T} \sum_{t=1}^{T} \left(\sum_{s=1}^{t-1} \rho_{s, t-1} \right)^2, \\
	C_T = \max_{s \in [T-1]} \sum_{t=s+1}^T  \rho_{s, t-1}  \left( \sum_{l=1}^{t-1}  \rho_{l, t-1} \right).
	\end{gather}
	Assume the learning rate is so small that $16\eta^2L^2 C_T < 1$, then
	\begin{align*}
	\frac{1}{T}\sum_{t=1}^T V_t 
	& \le  \bigg[ A_T \sigma^2 + B_T \kappa^2   + C_T\frac{1}{T}\sum_{t=1}^{T}  \EB \| \overline{\nabla f}(\X_t)  \|^2  \bigg] \\
	& \cdot \frac{8\eta^2}{1-16\eta^2L^2 C_T}
	\end{align*}
\end{lem}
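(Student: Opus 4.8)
The plan is to derive the bound from the single-step estimate $V_t \le 2\eta^2 U_t$ of Lemma~\ref{lem:residual-bound} by summing it over $t = 1,\dots,T$ and then solving for the average residual error, which reappears on the right-hand side through the $8L^2 V_s$ term buried inside $U_t$ (this self-reference is exactly why the hypothesis $16\eta^2 L^2 C_T < 1$ will be needed).

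First I would sum $V_t \le 2\eta^2 U_t$ over $t$ and split $\sum_{t=1}^T U_t$ into the four pieces dictated by the definition~\eqref{eq:U}. The $\sigma^2$ piece is immediate: $\sum_{t=1}^T \sigma^2 \sum_{s=1}^{t-1}\rho_{s,t-1}^2 = T\sigma^2 A_T$ by the definition of $A_T$. The $\kappa^2$ piece collapses to $\sum_{t=1}^T 4\kappa^2 \big(\sum_{s=1}^{t-1}\rho_{s,t-1}\big)^2 = 4T\kappa^2 B_T$ by the definition of $B_T$. For the two remaining double sums — the one carrying $8L^2 V_s$ and the one carrying $4\EB\|\overline{\nabla f}(\X_s)\|^2$ — the key move is to interchange the order of summation:
\[
\sum_{t=1}^T\Big(\sum_{l=1}^{t-1}\rho_{l,t-1}\Big)\Big(\sum_{s=1}^{t-1}\rho_{s,t-1}\,a_s\Big)
=\sum_{s\ge 1} a_s\sum_{t=s+1}^{T}\rho_{s,t-1}\Big(\sum_{l=1}^{t-1}\rho_{l,t-1}\Big)
\le C_T\sum_{s=1}^{T} a_s,
\]
valid for any nonnegative $a_s$, where the last inequality is precisely the definition of $C_T = \max_{s\in[T-1]}\sum_{t=s+1}^T\rho_{s,t-1}(\sum_l\rho_{l,t-1})$.

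Collecting the pieces (and using $\sum_{s=1}^{t-1}(\cdot)\le\sum_{s=1}^{T}(\cdot)$ for nonnegative summands) gives
\[
\sum_{t=1}^T V_t \le 2\eta^2\Big(T\sigma^2 A_T + 4T\kappa^2 B_T + 8L^2 C_T\sum_{t=1}^T V_t + 4C_T\sum_{t=1}^T\EB\|\overline{\nabla f}(\X_t)\|^2\Big).
\]
Moving the $\sum_t V_t$ term to the left yields $(1-16\eta^2 L^2 C_T)\sum_{t=1}^T V_t \le 2\eta^2\big(T\sigma^2 A_T + 4T\kappa^2 B_T + 4C_T\sum_t\EB\|\overline{\nabla f}(\X_t)\|^2\big)$, and since $16\eta^2 L^2 C_T < 1$ one may divide by the positive factor $1-16\eta^2 L^2 C_T$ and by $T$ to reach the stated conclusion.

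The only delicate point is the summation-order interchange: one must check the index ranges ($1\le s\le t-1$, $s<t\le T$) line up so that the uniform bound $C_T$ applies, and keep the multiplicative factors attached to each of the four pieces straight. Everything after that is bookkeeping, and the smallness hypothesis on $\eta$ is used precisely to absorb the $\sum_t V_t$ term back onto the left-hand side.
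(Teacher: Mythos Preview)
Your proposal is correct and follows essentially the same approach as the paper: sum the per-step bound $V_t\le 2\eta^2 U_t$, identify the $\sigma^2$ and $\kappa^2$ contributions with $A_T$ and $B_T$, swap the order of summation on the remaining double sum to pull out $C_T$, and then close the self-referential inequality using $16\eta^2 L^2 C_T<1$. The only cosmetic difference is that the paper expresses the self-referential step in terms of $\tfrac{1}{T}\sum_t U_t$ (applying $V_s\le 2\eta^2 U_s$ once more on the right) before multiplying by $2\eta^2$, whereas you close the loop directly in $\sum_t V_t$; the two are algebraically equivalent. (Your coefficient $4B_T\kappa^2$ also agrees with the paper's own intermediate computation; the bare $B_T\kappa^2$ in the lemma statement appears to be a typo.)
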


\begin{proof}
	Denote by $Z_s = 8L^2 V_s  + 4 \EB \| \overline{\nabla f}(\X_s)  \|^2 $ for short. Then $L_s = Z_s + 4\kappa^2$. 
	From Lemma~\ref{lem:residual-bound}, $V_t \le 2 \eta^2 U_t$, then 
	\begin{equation}
	\label{eq:V<U}
	\frac{1}{T}\sum_{t=1}^T V_t \le 2 \eta^2 \cdot \frac{1}{T} \sum_{t=1}^{T}U_t
	\end{equation}
	and
	\begin{align}
	\label{eq:U_average_u}
	&\frac{1}{T}\sum_{t=1}^{T}U_t \nonumber  \\
	&\overset{\eqref{eq:U}}{=} 
	\frac{1}{T}\sum_{t=1}^{T} \left[ 
	\sigma^2  \sum_{s=1}^{t-1} \rho^2_{s, t-1} 
	+  \left( \sum_{s=1}^{t-1}  \rho_{s, t-1} \right) \left(  \sum_{s=1}^{t-1} \rho_{s, t-1} L_s\right) \right] \nonumber \\
	&\overset{(a)}{=} \sigma^2 \frac{1}{T}\sum_{t=1}^{T} \sum_{s=1}^{t-1} \rho^2_{s, t-1}  + 4 \kappa^2 \frac{1}{T} \sum_{t=1}^{T} \left(\sum_{s=1}^{t-1} \rho_{s, t-1} \right)^2  \nonumber \\
	& \quad \quad + \frac{1}{T} \sum_{t=1}^T \left( \sum_{l=1}^{t-1}  \rho_{l, t-1} \right) \left( \sum_{s=1}^{t-1} \rho_{s, t-1} Z_s \right)  \nonumber \\
	&\overset{(b)}{=} \sigma^2 \frac{1}{T}\sum_{t=1}^{T} \sum_{s=1}^{t-1} \rho^2_{s, t-1} 
	+ 4 \kappa^2 \frac{1}{T} \sum_{t=1}^{T} \left(\sum_{s=1}^{t-1} \rho_{s, t-1} \right)^2 \nonumber \\
	& \quad \quad  + \frac{1}{T}\sum_{s=1}^{T-1} Z_s   \sum_{t=s+1}^T  \rho_{s, t-1}  \left( \sum_{l=1}^{t-1}  \rho_{l, t-1} \right)  \nonumber \\	  
	&\overset{(c)}{\le} \sigma^2 A_T + 4 \kappa^2  B_T
	+  C_T \frac{1}{T} \sum_{s=1}^{T-1} Z_s   \nonumber \\	 
	&\overset{(d)}{\le}\sigma^2 A_T + 4 \kappa^2  B_T
	+ 8 L^2 C_T  \frac{1}{T} \sum_{s=1}^{T-1} V_s  \nonumber \\
	& \quad \quad  + 4 C_T \frac{1}{T}\sum_{t=1}^{T}  \EB \| \overline{\nabla f}(\X_t)  \|^2 \nonumber  \\
	&\overset{(e)}{\le}\sigma^2 A_T + 4 \kappa^2  B_T
	+ 16 \eta^2 L^2 C_T  \frac{1}{T} \sum_{t=1}^{T} U_t \nonumber \\
	& \quad \quad + 4 C_T \frac{1}{T}\sum_{t=1}^{T}  \EB \| \overline{\nabla f}(\X_t)  \|^2   
	\end{align}
	where (a) follows from rearrangement;
	(b) follows from the equality that
	\begin{align*}
	&\sum_{t=1}^{T} \left( \sum_{s=1}^{t-1}  \rho_{s, t-1} \right) \left( \sum_{s=1}^{t-1} \rho_{s, t-1} Z_s \right)  \\
	&=  \sum_{t=1}^{T} \left( \sum_{l=1}^{t-1}  \rho_{l, t-1} \right) 
	\left( \sum_{s=1}^{T-1} \rho_{s, t-1} Z_s 1_{ \{t > s \}} \right) \\
	&=  \sum_{s=1}^{T-1} Z_s   \sum_{t=s+1}^T  \rho_{s, t-1}  \left( \sum_{l=1}^{t-1}  \rho_{l, t-1} \right); 
	\end{align*}
	(c) following from the notation~\eqref{eq:ABC};
	(d) follows from the definition of $\Z_s$;
	(e) follows from Lemma~\ref{lem:residual-bound}.

	By arranging~\eqref{eq:U_average_u} and assuming the learning rate is small enough such that $16\eta^2L^2 C_T < 1$, then we have
	\begin{align}
	\label{eq:U_average_u_2}
	\frac{1}{T}\sum_{t=1}^T U_t 
	&\le \left[ A_T \sigma^2 + 4B_T \kappa^2 +  4C_T\frac{1}{T}\sum_{t=1}^{T}  \EB \| \overline{\nabla f}(\X_t)  \|^2  \right] \nonumber \\
	& \quad \quad \cdot \frac{1}{1-16\eta^2L^2 C_T} \nonumber \\
	&\le \left[ A_T \sigma^2 + B_T \kappa^2 +  C_T\frac{1}{T}\sum_{t=1}^{T}  \EB \| \overline{\nabla f}(\X_t)  \|^2  \right] \nonumber \\
	& \quad \quad \cdot \frac{4}{1-16\eta^2L^2 C_T} 
	\end{align}	
	Our conclusion then follows by combining~\eqref{eq:V<U} and~\eqref{eq:U_average_u_2}.
\end{proof}

\begin{lem}[Computation of $\rho_{s, t-1}$]
	\label{lem:rho_compute}
	Define $\rho_{s, t-1} = 1$ for any $t \le s$ and $\rho_{s, t-1} =  \|\Ph_{s, t-1} - \Q\|$ when $s < t$. 
	Then $\rho_{s, t-1} = \prod_{l=s}^{t-1} \rho_l$ with $\rho_l = \rho$ if $l \in \IM_T$, else $\rho_l = 1$, where $\rho$ is defined in Assumption~\ref{asum:W}. 
	As a direct consequence, $\rho_{s, t-1} = \rho_{s, l-1} \rho_{l, t-1}$ for any $s\le l \le t$ and thus $\rho_{s, t-1} = \rho^{|[s:t-1] \cap \IM_T|}$.
\end{lem}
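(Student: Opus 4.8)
The plan is to reduce the claim to two elementary facts about the doubly stochastic matrix $\W$ and the averaging projector $\Q = \frac{1}{n}\1_n\1_n^\top$, and then read off the operator norm. First I would record that $\Q$ is symmetric and idempotent ($\Q^2 = \Q$), and that Assumption~\ref{asum:W} (i.e.\ $\W = \W^\top$ and $\W\1_n = \1_n$) gives $\W\Q = \Q\W = \Q$; trivially $\I_n\Q = \Q\I_n = \Q$. Hence $\I_n$, $\W$, $\Q$ pairwise commute. Since every factor $\W_l$ in $\Ph_{s,t-1} = \prod_{l=s}^{t-1}\W_l$ is either $\I_n$ or $\W$ (by~\eqref{eq:W}), the commuting identity factors can be dropped and the $\W$-factors collected, giving $\Ph_{s,t-1} = \W^{m}$ with $m := |[s:t-1]\cap\IM_T|$ equal to the number of communication steps in the window.

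Next I would establish the key identity $(\W - \Q)^m = \W^m - \Q$ for $m \ge 1$ by induction: the step is $(\W-\Q)(\W^{m-1}-\Q) = \W^m - \W\Q - \W^{m-1}\Q + \Q^2 = \W^m - \Q$, using $\W\Q = \W^{m-1}\Q = \Q$ and $\Q^2 = \Q$. Thus for $m \ge 1$ we have $\Ph_{s,t-1} - \Q = (\W-\Q)^m$, while for $m = 0$ directly $\Ph_{s,t-1} - \Q = \I_n - \Q$. To compute norms, diagonalize the symmetric matrix $\W$ with $\1_n/\sqrt n$ as the eigenvector for $\lambda_1$; this same vector spans $\mathrm{range}(\Q)$, and $\Q$ vanishes on its orthogonal complement, so $\W-\Q$ is symmetric with eigenvalue $0$ along $\1_n$ and eigenvalues $\lambda_2,\dots,\lambda_n$ on the complement. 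Therefore $\|\W-\Q\| = \max_{i\ge 2}|\lambda_i| = |\lambda_2| = \rho$ and $\|\I_n-\Q\| = 1$. Since $\W-\Q$ is symmetric, $\|(\W-\Q)^m\| = \|\W-\Q\|^m = \rho^m$, so $\rho_{s,t-1} = \rho^m$ when $m \ge 1$ and $\rho_{s,t-1} = 1 = \rho^0$ when $m = 0$; the convention $\rho_{s,t-1} = 1$ for $t \le s$ agrees with the empty window $m = |\emptyset| = 0$. In all cases $\rho_{s,t-1} = \rho^{\,|[s:t-1]\cap\IM_T|} = \prod_{l=s}^{t-1}\rho_l$ with $\rho_l$ as stated.

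Finally, for $s \le l \le t$ the window splits as a disjoint union $[s:t-1] = [s:l-1]\sqcup[l:t-1]$, so $|[s:t-1]\cap\IM_T| = |[s:l-1]\cap\IM_T| + |[l:t-1]\cap\IM_T|$, and exponentiating $\rho$ yields $\rho_{s,t-1} = \rho_{s,l-1}\rho_{l,t-1}$. The whole argument is bookkeeping rather than a genuine obstacle; the one point requiring care is keeping the case $m = 0$ out of the induction, since $(\W-\Q)^0 = \I_n \ne \I_n - \Q$, so the $m = 0$ window must be handled separately when reading off the operator norm.
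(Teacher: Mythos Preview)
Your proof is correct and follows essentially the same approach as the paper. The paper's argument is a one-line appeal to simultaneous diagonalization of the commuting symmetric matrices $\W_l$ and $\Q$, from which $\|\prod_l \W_l - \Q\| = \prod_l \rho_l$ is read off directly; you unroll this into explicit algebra by first collapsing $\Ph_{s,t-1}$ to $\W^m$, then proving the identity $(\W-\Q)^m = \W^m - \Q$ by induction, and finally computing the spectral norm. Both routes rest on the same facts ($\W\Q = \Q\W = \Q$, $\Q^2 = \Q$, and the shared eigenvector $\1_n$), so the difference is presentational rather than mathematical; your version is more explicit and handles the $m=0$ edge case carefully, while the paper's is terser.
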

\begin{proof}
	By definition, we have $\rho_{s, t-1} = \|\Ph_{s, t-1} - \Q\| = \|\prod_{l=s}^{t-1}\W_l -\Q \|$. Since for any positive integer $l$, $\W_l \Q = \Q \W_l$, then $\W_l$ and $\Q$ can be simultaneously diagonalized. From this it is easy to see that $ \|\prod_{l=s}^{t-1}\W_l -\Q \|= \prod_{l=s}^{t-1} \rho_l$ where $\rho_l$ is the second largest absolute eigenvalue of $\W_l$. Note that $\W_l$ is either $\W$ or $\I$ according to the value of $l$ as a result of the definition~\eqref{eq:W}. 
	Hence  $\rho_l = \rho$ if $l \in \IM_T$, else $ = 1$.
\end{proof}

\subsection{Proof of Theorem~\ref{thm:PD-SGD}}
\label{append:main_proof}

\begin{proof}
	From Lemma~\ref{lem:one-step}, it follows that
	\begin{align*}
	\EB \big[ f(\overline{\x}_{t+1})  \big] 
	&\: \leq \: \EB \big[ f(\overline{\x}_t) \big] 
	- \frac{\eta}{2}(1-\eta L) \EB \big\| \overline{\nabla f}(\X_t) \big\|^2 \\
	&\quad- \frac{\eta}{2} \EB \big\| \nabla f (\overline{\x}_t) \big\|^2 
	+ \frac{L\sigma^2\eta^2 }{2n} 
	+ \frac{\eta L^2}{2} V_t. 
	\end{align*}
	Note that the expectation is taken with respect to all randomness of stochastic gradients, i.e., $\xi = (\xi_1, \xi_2. \cdots)$.
	Arranging this inequality, we have
	\begin{align}
	\label{eq:e1}
	\EB \big\| \nabla f (\overline{\x}_t) \big\|^2 
	&\: \leq \: \frac{2}{\eta} \Big\{ \EB \big[ f(\overline{\x}_t) \big] -  \EB \big[ f(\overline{\x}_{t+1}) \big] \Big\} + L^2 V_t\nonumber \\
	&- (1-\eta L) \EB \big\| \overline{\nabla f}(\X_t) \big\|^2 + \frac{L\sigma^2\eta}{n} .
	\end{align}
	
	Then it follows that
	\begin{align}
	\label{eq:e2}
	&\frac{1}{T} \sum_{t=1}^{T} \EB \| \nabla f (\overline{\x}_t)\|^2  \nonumber \\
	& \overset{(a)}{\le} \frac{2}{\eta T} \Big\{ \EB \big[ f(\overline{\x}_1) \big] -  \EB \big[ f(\overline{\x}_{T+1}) \big] \Big\}   + \frac{L\sigma^2\eta}{n}  \nonumber\\
	& \quad  \quad +  \frac{L^2}{T} \sum_{t=1}^{T}V_t 
	- \frac{1-\eta L }{T} \sum_{t=1}^{T}  \EB \big\| \overline{\nabla f}(\X_t) \big\|^2 \nonumber  \\
	& \overset{(b)}{\le}
	\frac{2}{\eta T} \Big\{ \EB \big[ f(\overline{\x}_1) \big] -  \EB \big[ f(\overline{\x}_{T+1}) \big] \Big\} 
	+ \frac{L\sigma^2\eta}{n}  \nonumber \\
	& \quad \quad  -  \frac{1-\eta L}{T} \sum_{t=1}^{T}  \EB \big\| \overline{\nabla f}(\X_t) \big\|^2 +
	\frac{8\eta^2L^2}{1-16\eta^2L^2 C_T} \times \nonumber \\
	& \quad  \quad \left[ A_T \sigma^2 + B_T \kappa^2 +  C_T\frac{1}{T}\sum_{t=1}^{T}  \EB \| \overline{\nabla f}(\X_t)  \|^2  \right]  \nonumber \\
	& \overset{(c)}{\le}
	\frac{2}{\eta T} \Big\{  \EB \big[ f(\overline{\x}_1) \big] -  \EB \big[ f(\overline{\x}_{T+1}) \big] \Big\}  \nonumber \\
	&  \quad \quad + \frac{L\sigma^2\eta}{n} + 16\eta^2L^2A_T \sigma^2 + 16\eta^2L^2B_T\kappa^2 \nonumber  \\
	& \quad \quad- (1-\eta L - 16\eta^2L^2C_T)  \frac{1}{T} \sum_{t=1}^{T}  \EB \| \overline{\nabla f}(\X_t)\|^2 \nonumber \\
	&\overset{(d)}{\le}
	\frac{2}{\eta T} \Big\{ \EB \big[ f(\overline{\x}_1) \big] -  \EB \big[ f(\overline{\x}_{T+1}) \big] \Big\}  + \frac{L\sigma^2\eta}{n}  \nonumber \\
	&\quad \quad  + 16\eta^2L^2A_T \sigma^2 + 16\eta^2L^2B_T\kappa^2  
	\end{align}
	where (a) follows by telescoping and averaging~\eqref{eq:e1}; 
	(b) follows from the upper bound of $\frac{1}{T} \sum_{t=1}^{T}V_t$ in Lemma~\ref{lem:residual-average};
	(c) follows from the choice of the learning rate $\eta$ which satisfies $ \frac{1}{1-16\eta^2L^2C_T} \le 2$ (since $16\eta^2L^2C_T \le \frac{1}{2}$ from~\eqref{eq:lr}) and rearrangement; 
	(d) follows the requirement that the learning rate $\eta$ is small enough such that $ \eta L + 16\eta^2L^2C_T < 1$ (which is satisfied since $\eta L \le \frac{1}{2}$ and $16\eta^2L^2K^2 \le \frac{1}{2}$).
\end{proof}

\subsection{Proof of Theorem~\ref{thm:bound}}
For any prescribed $\IM_T$, denote by $g = |\IM_T|$ and $\IM_T = \{ e_1, e_2, \cdots, e_g \}$ with $e_0 = 1 \le e_1 < e_2 < \cdots < e_g \le T=e_{g+1}$. 
For short, we let $s_i = e_i-e_{i-1}$ for $i \in [g+1]$.
Therefore, $\text{gap}(\IM_T) = \max_{i\in[g+1]} s_i$ from Definition~\ref{def:gap} and $T = \sum_{l=0}^{g} s_{l+1}$.

Recall the definition in~\eqref{eq:ABC}:
\begin{gather*}
A_T =\frac{1}{T}\sum_{t=1}^{T} \sum_{s=1}^{t-1} \rho^2_{s, t-1},  \\
B_T = \frac{1}{T} \sum_{t=1}^{T} \left(\sum_{s=1}^{t-1} \rho_{s, t-1} \right)^2,  \\
C_T = \max_{s \in [T-1]} \sum_{t=s+1}^T  \rho_{s, t-1}  \left( \sum_{l=1}^{t-1}  \rho_{l, t-1} \right).
\end{gather*}
In this section, we will provide proof for the bound on $A_T, B_T$ and $C_T$ in terms of $\text{gap}(\IM_T)$ (Theorem~\ref{thm:bound}).

\begin{proof}
	Recall that from Definition~\ref{def:rho} and Lemma~\ref{lem:rho_compute}, $\rho_{s, t-1} = \rho^{|[s: t-1] \cap \IM_T|}$, where $[s:t-1] = \{l \in \N: s \le l \le t-1\}$ and $\rho$ is defined in Assumption~\ref{asum:W}.
	For simplicity, let $\Delta = \text{gap}(\IM_T) = \max_{i\in[g]} s_i$.
	
	Let's first have a glance at $\sum_{s=1}^{t-1} \rho_{s, t-1}$.
	Without loss of generality, we assume $e_{l} <  t \le e_{l+1}$ for some $l \in [g]\cup\{0\}$.
	There, we have 
	\begin{equation}
	\label{eq:inter}
	\sum_{s=1}^{t-1} \rho_{s, t-1} = \left( t-e_{l} -1 \right)  + \sum_{i=1}^{l} s_i \rho^{l+1-i}.
	\end{equation}
	As a direct result, for any $t$, $\sum_{s=1}^{t-1} \rho_{s, t-1} \le \Delta + \Delta \frac{\rho}{1-\rho} = \frac{\Delta}{1-\rho}$.
	Similarly, by symmetry, we have for any $s < T$, $\sum_{t=s+1}^{T} \rho_{s, t-1}  \le \frac{\Delta}{1-\rho}$.
	Therefore, we have
	\begin{gather*}
	B_T \le \left( \frac{\Delta}{1-\rho} \right)^2 \\
	C_T \le  \left( \max_{s \in [T-1]} \sum_{t=s+1}^T  \rho_{s, t-1}\right)  \max_{t \in [T+1]}  \left( \sum_{l=1}^{t-1}  \rho_{l, t-1} \right) = \left( \frac{\Delta}{1-\rho} \right)^2.
	\end{gather*}

	Finally, for $A_T$, by using~\eqref{eq:inter}, we have
	\begin{align*}
	A_T &= \frac{1}{T}\sum_{t=1}^T\sum_{s=1}^{t-1} \rho_{s, t-1}\\
	& =\frac{1}{T}\sum_{l=0}^{g} \sum_{t=e_l+1}^{e_{l+1}}\sum_{s=1}^{t-1} \rho_{s, t-1}\\
	&= \frac{1}{T}\sum_{l=0}^{g} \sum_{t=e_l+1}^{e_{l+1}} \left[ \left( t-e_{l} -1 \right)  + \sum_{i=1}^{l} s_i \rho^{l+1-i} \right] \\
	&=\frac{1}{T}\sum_{l=0}^{g} \left[\frac{s_{l+1}(s_{l+1}-1) }{2} +  s_{l+1}\sum_{i=1}^{l} s_i \rho^{l+1-i} \right] \\
	&\le \frac{1}{T}\sum_{l=0}^{g} s_{l+1} \left( \frac{\Delta -1}{2}  + \Delta\sum_{i=1}^{l}\rho^{l+1-i}  \right) \\
	&\le \frac{\Delta -1}{2}  + \Delta\frac{\rho}{1-\rho} = \frac{ \Delta}{2} \frac{1+\rho}{1-\rho} - \frac{1}{2}.
	\end{align*}

\end{proof}

\section{Proof of LD-SGD with multiple D-SGDs}
\label{append:scheme1}

The task of analyzing convergence for different communication schemes $\IM_T$ can be reduced to figure out how residual errors are accumulated, i.e., to bound $A_T, B_T$ and $C_T$.
In this section, we are going to bound $A_T, B_T$ and $C_T$ when the update scheme is $\IM_T^1$.
To that end, we first give a technical lemma, which facilitate the computation.
Lemma~\ref{lem:rho} captures the accumulation rate of residual errors for $\IM_T^1$. 

\subsection{One technical lemma}
\begin{lem}[Manipulation on $\rho_{s, t-1}$]
	\label{lem:rho}
	When $\IM_T = \IM_T^1$,	the following properties hold for $\rho_{s, t-1}$:
	\begin{enumerate}
		\item
		$\rho_{s, t-1} = \prod_{l=s}^{t-1} \rho_l$ with $\rho_l = 1$ if $l$ mod $I \in [I_1]$, else $\rho_l = \rho$ where $I = I_1 + I_2$ and $\rho$ is defined in Assumption~\ref{asum:W}. As a direct consequence, $\rho_{s, t-1} = \rho_{s, l-1} \rho_{l, t-1}$ for any $s\le l \le t$.
		\item
		Define 
		\begin{equation}
		\label{eq:alpha}
		\alpha_j = \sum_{t=jI+1}^{(j+1)I} \sum_{s=1}^{t-1}\rho_{s, t-1}.
		\end{equation}
		Then for all $j \ge 0$,
		\begin{equation}
		\label{eq:alpha_u}
		\alpha_j \le \frac{1}{2}\left( \frac{1+\rho^{I_2}}{1-\rho^{I_2}} I_1^2 + \frac{1+\rho}{1-\rho} I_1 \right)  + I \frac{\rho}{1-\rho}.
		\end{equation}
		\item
		Define 
		\begin{equation}
		\label{eq:beta}
		\beta_j = \sum_{t=jI+1}^{(j+1)I} \sum_{s=1}^{t-1}\rho_{s, t-1}^2.
		\end{equation}
		Then for all $j \ge 0$,
		\begin{equation}
		\label{eq:beta_u}
		\beta_j \le \frac{1}{2}\left( \frac{1+\rho^{2I_2}}{1-\rho^{2I_2}} I_1^2 + \frac{1+\rho^2}{1-\rho^2} I_1 \right)  + I \frac{\rho^2}{1-\rho^2}.
		\end{equation}
		\item
		For any $t \ge 1$, $\sum_{s=1}^{t-1}\rho_{s, t-1} \le K$ where 
		\begin{equation}
		\label{eq:K}
		K = \frac{I_1}{1-\rho^{I_2}} + \frac{\rho}{1-\rho}.
		\end{equation}
		As a direct corollary, $\alpha_j \le IK$.
		\item 
		Define 
		\begin{equation}
		\label{eq:gamma}
		\gamma_j = \sum_{t=jI+1}^{(j+1)I} (\sum_{s=1}^{t-1}\rho_{s, t-1})^2.
		\end{equation}
		Then $\gamma_j \le K \alpha_j$, where $K$ is given in~\eqref{eq:K}.
		\item 
		Assume $T = (R+1)I$ for some non-negative integer $R$. Define
		\begin{equation}
		w_s = \sum_{t=s+1}^{T} \rho_{s, t-1}
		\end{equation}
		Then for all $s \in [T], w_s \le K$ where $K$ is given in~\eqref{eq:K}.
	\end{enumerate}
\end{lem}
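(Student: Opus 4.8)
The plan is to reduce every item to counting communication steps and summing geometric series. Write $c(s,t):=|[s:t-1]\cap\IM_T^1|$, so that $\rho_{s,t-1}=\rho^{c(s,t)}$; by Part 1 the set $\IM_T^1$ is periodic with period $I=I_1+I_2$, each period being $I_1$ local steps ($\rho_l=1$) followed by $I_2$ communication steps ($\rho_l=\rho$). Part 1 itself is immediate: $l\in\IM_T^1$ iff $l\bmod I\notin[I_1]$, so Lemma~\ref{lem:rho_compute} gives $\rho_l=\rho$ exactly on $\{l:l\bmod I\notin[I_1]\}$, and $\rho_{s,t-1}=\rho_{s,l-1}\rho_{l,t-1}$ is the factorization $\prod_{m=s}^{t-1}\rho_m=\bigl(\prod_{m=s}^{l-1}\rho_m\bigr)\bigl(\prod_{m=l}^{t-1}\rho_m\bigr)$, i.e.\ additivity of $c$.

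Next I would do Part 4, which supports Part 5 and the corollary to Part 2. Fixing $t$ and letting $s$ decrease from $t-1$ to $1$, $c(s,t)$ is nondecreasing and increases by one each time $s$ passes a communication step, so by the block picture the list $(\rho_{s,t-1})_s$, read from $s=t-1$ down to $s=1$, is termwise dominated by $I_1$ copies of $1$, then $\rho,\rho^2,\dots,\rho^{I_2}$, then $I_1$ copies of $\rho^{I_2}$, then $\rho^{I_2+1},\dots,\rho^{2I_2}$, and so on; summing,
\[
\sum_{s=1}^{t-1}\rho_{s,t-1}\le I_1+\frac{\rho(1-\rho^{I_2})}{1-\rho}\sum_{i\ge 0}\rho^{iI_2}+I_1\sum_{i\ge 1}\rho^{iI_2}=\frac{I_1}{1-\rho^{I_2}}+\frac{\rho}{1-\rho}=K ,
\]
and summing this over $t\in\{jI+1,\dots,(j+1)I\}$ gives $\alpha_j\le IK$. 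Part 6 is the mirror argument with $s$ fixed and $t$ increasing, producing the same series and $w_s\le K$ (the hypothesis $T=(R+1)I$ merely makes the last period complete, which is the extremal case). Part 5 then follows at once: $\gamma_j=\sum_{t=jI+1}^{(j+1)I}\bigl(\sum_{s=1}^{t-1}\rho_{s,t-1}\bigr)^2\le K\sum_{t=jI+1}^{(j+1)I}\sum_{s=1}^{t-1}\rho_{s,t-1}=K\alpha_j$.

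The substantive part is Part 2, and Part 3 follows from it by the substitution $\rho\mapsto\rho^2$ (since $\rho_{s,t-1}^2=(\rho^2)^{c(s,t)}$ and $\IM_T^1$ is unchanged, the bound is literally the $\alpha_j$-bound with $\rho$ replaced by $\rho^2$). For Part 2, set $P_j=\{jI+1,\dots,(j+1)I\}$ and split $\alpha_j=\sum_{t\in P_j}\sum_{s=1}^{t-1}\rho_{s,t-1}$ by whether $s\in P_j$. For $s$ outside $P_j$, factor $\rho_{s,t-1}=\rho_{s,jI}\,\rho_{jI+1,t-1}$, so this part equals $\bigl(\sum_{s=1}^{jI}\rho_{s,jI}\bigr)\bigl(1+\sum_{t=jI+2}^{(j+1)I}\rho_{jI+1,t-1}\bigr)$; the first factor telescopes over periods with ratio $\rho^{I_2}$ to at most $\frac{1}{1-\rho^{I_2}}\bigl(I_1\rho^{I_2}+\frac{\rho(1-\rho^{I_2})}{1-\rho}\bigr)=\frac{I_1\rho^{I_2}}{1-\rho^{I_2}}+\frac{\rho}{1-\rho}$, while the second is one period's worth, $1+I_1+\frac{\rho(1-\rho^{I_2-1})}{1-\rho}$; their product supplies the $\frac{\rho^{I_2}}{1-\rho^{I_2}}I_1^2$-term and part of the $\frac{\rho}{1-\rho}$-terms. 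For $s\in P_j$, sum $\rho_{s,t-1}$ over pairs $jI<s<t\le(j+1)I$ by position: when $s$ sits in the local block (positions $1,\dots,I_1$) the inner sum over $t$ is $(I_1+1-p)+\sum_{m=1}^{I_2-1}\rho^m$, whose sum over $p$ gives $\frac{I_1(I_1+1)}{2}+I_1\frac{\rho(1-\rho^{I_2-1})}{1-\rho}$; when $s$ sits in the communication block the inner sum is a pure $\rho$-geometric series bounded by $I_2\frac{\rho}{1-\rho}$. Adding the two parts, using $\frac{\rho(1-\rho^{I_2})}{(1-\rho)(1-\rho^{I_2})}=\frac{\rho}{1-\rho}$ and $\frac{1+\rho^{I_2}}{1-\rho^{I_2}}=1+\frac{2\rho^{I_2}}{1-\rho^{I_2}}$ to regroup, and dropping the nonnegative $\rho^{jI_2}$-tails, yields exactly $\frac12\bigl(\frac{1+\rho^{I_2}}{1-\rho^{I_2}}I_1^2+\frac{1+\rho}{1-\rho}I_1\bigr)+I\frac{\rho}{1-\rho}$.

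The hard part is solely the bookkeeping in Part 2: for each residue class of $t$ modulo $I$ one must identify exactly which $s$ contribute which power of $\rho$, keep the partial leading period and the partial trailing communication block straight, and then collapse the nested geometric sums so that the constants match the claimed expression verbatim. The remaining parts are routine once Part 1's periodic-block structure and the multiplicativity $\rho_{s,t-1}=\rho_{s,l-1}\rho_{l,t-1}$ are in place.
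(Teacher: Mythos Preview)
Your plan is sound and, for Parts 1, 3, 4, 5, 6, essentially coincides with the paper's argument (the paper also derives Part 3 by the $\rho\mapsto\rho^2$ substitution, Part 5 by the one-line $\le K$ bound, and Part 6 by a case analysis symmetric to Part 4; your domination-by-the-periodic-pattern argument for Part 4 is a clean variant of what the paper does via the explicit case formulas).

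The only real difference is in Part 2. The paper splits by the residue of $t$: writing $t=jI+i$, it derives
\[
\sum_{s=1}^{t-1}\rho_{s,t-1}\le
\begin{cases}
(i-1)+I_1\dfrac{\rho^{I_2}}{1-\rho^{I_2}}+\dfrac{\rho}{1-\rho}, & 1\le i\le I_1,\\[1.2ex]
I_1\,\dfrac{\rho^{\,i-I_1-1}}{1-\rho^{I_2}}+\dfrac{\rho}{1-\rho}, & I_1+1\le i\le I,
\end{cases}
\]
and then sums over $i$; the stated constant drops out with no cancellations. You instead split by whether $s$ lies in the current period and factor the ``outside'' part as a product. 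That also works, but there is a slip in your sketch: if you merely \emph{bound} the communication-block contribution by $I_2\frac{\rho}{1-\rho}$ as written, the cross term $\frac{\rho}{1-\rho}\cdot\frac{1-\rho^{I_2}}{1-\rho}$ coming from the outside product survives and the final constant overshoots the target by $\frac{\rho(1-\rho^{I_2})}{(1-\rho)^2}$. To land on the stated bound you must keep the communication-block sum exactly,
\[
\sum_{q=1}^{I_2-1}\frac{\rho(1-\rho^{q})}{1-\rho}=\frac{(I_2-1)\rho}{1-\rho}-\frac{\rho^2-\rho^{I_2+1}}{(1-\rho)^2},
\]
whose negative piece cancels that cross term to give $\frac{I_2\rho}{1-\rho}$ on the nose. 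So your route reaches the exact constant, but only with one more exact computation than your sketch indicates; the paper's split-by-$t$ avoids this cancellation entirely and is slightly cleaner for matching constants.
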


\begin{proof}
	We prove these properties one by one:
	\begin{enumerate}
		\item It is a direct corollary of Lemma~\ref{lem:rho_compute}.
		\item We now directly compute $\alpha_j = \sum_{t=jI+1}^{(j+1)I} \sum_{s=1}^{t-1}\rho_{s, t-1}$. Without loss of generality, assume $t = jI + i$ with $j \ge 0, 1 \le i \le I$. (i) When $1 \le i \le I_1 +1$, then
		\begin{align}
		\label{eq:less}
		\sum_{s=1}^{t-1}& \rho_{s, t-1}  \nonumber  \\
		&= (i-1) + I_1 \sum_{r=0}^{j-1} \rho^{I_2(j-r)} + \sum_{r=0}^{j-1} \sum_{l=1}^{I_2} \rho^{I_2(j-r)+1-l}  \nonumber  \\
		&=(i-1) + I_1 \frac{\rho^{I_2} - \rho^{I_2(j+1)}}{1-\rho^{I_2}} + \frac{\rho - \rho^{jI_2+1}}{1-\rho}  \nonumber  \\
		& \le (i-1)  + I_1\frac{\rho^{I_2}}{1-\rho^{I_2}} + \frac{\rho}{1-\rho}.
		\end{align}
		(ii) When $I_1 + 1 \le i \le I$, then 
		\begin{align}
		\label{eq:large}
		\sum_{s=1}^{t-1}&\rho_{s, t-1} \nonumber \\
		& = \rho^{i - I_1 -1} \left[  I_1\sum_{r=0}^j \rho^{I_2(j-r)} + \sum_{r=0}^{j-1} \sum_{l=1}^{I_2} \rho^{I_2(j-r)+1-l} \right] \nonumber \\
		& \quad \quad  + \sum_{l=1}^{i-I_1-1} \rho^{i-I_1-l} \nonumber \\
		& =\rho^{i - I_1 -1}  \left[   I_1 \frac{1 - \rho^{I_2(j+1)}}{1-\rho^{I_2}} + \frac{\rho - \rho^{jI_2+1}}{1-\rho} \right] + \frac{\rho - \rho^{i-I_1}}{1-\rho} \nonumber \\
		&=\rho^{i - I_1 -1} \cdot  I_1 \frac{1 - \rho^{I_2(j+1)}}{1-\rho^{I_2}}  + \frac{\rho - \rho^{jI_2+i-I_1}}{1-\rho}  \nonumber \\
		&\le I_1  \frac{\rho^{i - I_1 -1} }{1-\rho^{I_2}} + \frac{\rho}{1-\rho}.
		\end{align}
		Therefore, by combining (i) and (ii), we obtain
		\begin{align*}
		\alpha_j &= \sum_{t=jI+1}^{(j+1)I} \sum_{s=1}^{t-1}\rho_{s, t-1} \\
		&\le \sum_{i=1}^{I_1} \left((i-1)  + I_1\frac{\rho^{I_2}}{1-\rho^{I_2}} + \frac{\rho}{1-\rho} \right) \nonumber \\
		& \quad \quad+ \sum_{i=I_1 + 1}^{I_1 + I_2} \left(I_1  \frac{\rho^{i - I_1 -1} }{1-\rho^{I_2}} + \frac{\rho}{1-\rho}\right)\\
		&= \frac{1}{2}\left( \frac{1+\rho^{I_2}}{1-\rho^{I_2}} I_1^2 + \frac{1+\rho}{1-\rho} I_1 \right)  + I \frac{\rho}{1-\rho}.
		\end{align*}
		\item Note that $\beta_j$'s share a similar structure with $\alpha_j$'s. Thus we can apply a similar argument in the proof of~\eqref{eq:alpha} to prove~\eqref{eq:beta}. A quick consideration reveals that~\eqref{eq:beta} can be obtained by replacing $\rho$ in~\eqref{eq:alpha} with $\rho^2$.
		\item Without loss of generality, assume $t = jI + i$ with $j \ge 0$ and $1 \le i \le I$. When $1 \le i \le I_1 +1$, from~\eqref{eq:less}, $\sum_{s=1}^{t-1}\rho_{s, t-1} \le (i-1)  + I_1\frac{\rho^{I_2}}{1-\rho^{I_2}} + \frac{\rho}{1-\rho} \le \frac{I_1}{1-\rho^{I_2}} + \frac{\rho}{1-\rho} = K$. When $I_1 + 1 \le i \le I_1 +I_2$, from~\eqref{eq:large}, $\sum_{s=1}^{t-1}\rho_{s, t-1} \le I_1  \frac{\rho^{i - I_1 -1} }{1-\rho^{I_2}} + \frac{\rho}{1-\rho} \le \frac{I_1}{1-\rho^{I_2}} + \frac{\rho}{1-\rho} = K$.
		\item The result directly follows from this inequality
		\begin{align*}
		\left( \sum_{s=1}^{t-1}\rho_{s, t-1}  \right)^2 
		&\le 
		\left(  \max_{t \ge 1}\sum_{s=1}^{t-1}\rho_{s, t-1}\right) \cdot
		\left( \sum_{s=1}^{t-1}\rho_{s, t-1}  \right)  \\
		&\le K \left( \sum_{s=1}^{t-1}\rho_{s, t-1}  \right)
		\end{align*}
		where $K$ is defined in~\eqref{eq:K}.
		\item Without loss of generality, assume $s = jI + i$ with $0 \le j \le R, 1 \le i \le I$. (i) We first consider the case where $1 \le i \le I_1 +1$, then
		\begin{align*}
		w_s & \le \sum_{t=s+1}^{T+1} \rho_{s, t-1}\\
		&= (I_1-i + 1) + \left( I_1 + \frac{\rho - \rho^{I_2+1}}{1-\rho} \right) \sum_{l=1}^{R-j} \rho^{I_2 l} + \sum_{l=1}^{I_2} \rho^{l}    \\
		&\le (I_1-i + 1)  + \left( I_1 + \frac{\rho - \rho^{I_2+1}}{1-\rho} \right) \frac{\rho^{I_2}}{1-\rho^{I_2}}\\
		& \qquad + \frac{\rho - \rho^{I_2+1}}{1-\rho}   \\
		& \le  \frac{I_1}{1-\rho^{I_2}} + \frac{\rho}{1-\rho} = K.
		\end{align*}
		(ii) Then consider the case where $I_1 + 1 \le i \le I$. If $R=j$, then $w_s = \sum_{l=1}^{I-i} \rho^{l} \le \frac{\rho}{1-\rho} \le K$. If $R \ge j + 1$, then
		\begin{align*}
		w_s & \le \sum_{t=s+1}^{T+1} \rho_{s, t-1}\\
		& = \rho^{I-i+1} \left( I_1 + \frac{\rho - \rho^{I_2+1}}{1-\rho} \right) \sum_{l=0}^{R-j-1} \rho^{I_2 l} + \sum_{l=1}^{I-i+1} \rho^{l}  \\
		& \le \frac{\rho^{I-i+1}}{1-\rho^{I_2}}  \left( I_1 + \frac{\rho - \rho^{I_2+1}}{1-\rho} \right)+ \frac{\rho - \rho^{I-i+2}}{1-\rho}  \\
		&= I_1\frac{\rho^{I-i+1}}{1-\rho^{I_2}} + \frac{\rho}{1-\rho} \\
		&\le \frac{I_1}{1-\rho^{I_2}} + \frac{\rho}{1-\rho} = K.
		\end{align*}
	\end{enumerate}
\end{proof}

\subsection{Proof of Theorem~\ref{thm:scheme1}}

\begin{proof}
	Without loss of generality, we assume $T$ is a multiplier of $I$, i.e., $T = (R+1)I$ for some positive integer $R$.
	This assumption will only simplify our proof but will not change the conclusion.
	
	For $A_T$, by using~\eqref{eq:beta_u} in Lemma~\ref{lem:rho}, we have
	\begin{align*}
	A_T &=\frac{1}{T}\sum_{t=1}^{T} \sum_{s=1}^{t-1} \rho^2_{s, t-1} \\
	&= \frac{1}{(R+1)} \sum_{j=0}^{R}  \frac{1}{I} \sum_{t=jI+1}^{(j+1)I}  \rho^2_{s, t-1} \\
	&=  \frac{1}{(R+1)} \sum_{j=0}^{R}  \frac{1}{I} \beta_{j} \\
	&\le \frac{1}{2I}\left( \frac{1+\rho^{2I_2}}{1-\rho^{2I_2}} I_1^2 + \frac{1+\rho^2}{1-\rho^2} I_1 \right)  + \frac{\rho^2}{1-\rho^2}.
	\end{align*}
	
	For $B_T$, by using the result about $\gamma_j$ in Lemma~\ref{lem:rho}, we have
	\begin{align*}
	B_T &= \frac{1}{T} \sum_{t=1}^{T} \left(\sum_{s=1}^{t-1} \rho_{s, t-1} \right)^2 \\
	&=\frac{1}{(R+1)} \sum_{j=0}^{R} \frac{1}{I} \sum_{t=jI+1}^{(j+1)I} \left(\sum_{s=1}^{t-1} \rho_{s, t-1} \right)^2   \\
	&=  \frac{1}{(R+1)} \sum_{j=0}^{R}  \frac{1}{I} \gamma_{j} \\
	&\overset{(a)}{\le} K \frac{1}{(R+1)} \sum_{j=0}^{R} \frac{1}{I}  \alpha_j\\
	&\overset{(b)}{\le} K \min\bigg\{ K,  \frac{1}{2I}\left( \frac{1+\rho^{I_2}}{1-\rho^{I_2}} I_1^2 + \frac{1+\rho}{1-\rho} I_1 \right)  + \frac{\rho}{1-\rho} \bigg\}
	\end{align*}
	where (a) follows the fact that $\gamma_{j}  \le K \alpha_j$ and (b) follows the two bounds on $\alpha_j$ provided in Lemma~\ref{lem:rho}.

	For $C_T$, by using the results about $w_s$ and $K$ in Lemma~\ref{lem:rho}, we have
	\begin{align*}
	C_T &= \max_{s \in [T-1]} \sum_{t=s+1}^T  \rho_{s, t-1}  \left( \sum_{l=1}^{t-1}  \rho_{l, t-1} \right)\\
	&\le \max_{t \in T-1} \sum_{t=s+1}^T  \rho_{s, t-1} \left( \max_{l} \sum_{l=1}^{t-1}  \rho_{l, t-1} \right)\\
	&\le K  \max_{t \in T-1} \sum_{t=s+1}^T  \rho_{s, t-1} \le K^2.
	\end{align*}
\end{proof}

\section{Proof of Theorem~\ref{thm:scheme2}}
\label{append:scheme2}
In this section, we will give the convergence result of Theorem~\ref{thm:scheme2} which states that the convergence will be fastened if we use the decaying strategy $\IM_T^2$.
Again, by using the framework introduced in Appendix~\ref{append:PD-SGD}, we only needs to give bounds for $A_T, B_T$ and $C_T$.
To that end, we need a modified version of Lemma~\ref{lem:rho} which reveals how the residual errors are accumulated for $\IM_T^2$.

Recall that to define $\IM_T^2$, we define an ancillary set
\[  \IM(I_1, I_2, M) = \{t \in [M(I_1+I_2)]: t \ \text{mod} \ (I_1+I_2) \notin [I_1] \}, \]
and then recursively define $\JM_0 = \IM(I_1, I_2, M)$ and 
\[ \JM_j =\IM\left( \bigg\lfloor \frac{I_1}{2^j} \bigg\rfloor , I_2, M\right) + \max(\JM_{j-1}), 1 \le j \le J \]
where $\max(\JM_{j-1})$ returns the maximum number collected in $\JM_{j-1}$ and $J = \lceil \log_2 I_1 \rceil $.
Finally we set
\[  \IM_T^2 = \cup_{j=0}^J \JM_{j} \cup [\max(\JM_{J}):T].  \]
In short, we first run $M$ rounds of LD-SGD with parameters $I_1$ and $I_2$, then run another $M$ rounds of LD-SGD with parameters $\lfloor \frac{I_1}{2} \rfloor$ and $I_2$, and keep this process going on until we reach the $J + 1$ th run, where $I_1$ shrinks to zero and we only run D-SGD.

\begin{lem}
	\label{lem:rho_strategy}
	Recall $M$ is the decay interval, $T$ the total steps and $\rho_{s, t-1} = \| \Ph_{s, t-1} - \Q\|$ where $\Ph_{s, t-1}$ is defined in~\eqref{eq:Ph} with $\W_t$ given in~\eqref{eq:W}. 
	Let $I_1, I_2$ be the initialized communication parameters. 
	Assume $T \ge \max \max(\JM_J)$. 
	Then for LD-SGD with the decaying strategy, we have that
	\begin{enumerate}
		\item $\frac{1}{T}\sum_{t=1}^T\sum_{s=1}^{t-1}\rho_{s, t-1} \le \frac{1}{T} \frac{I_1}{1-\rho^{I_2}} \rho^{T + I_2-\max(\JM_{J}) -1 } + (1-\frac{\max(\JM_{J})}{T}) \frac{\rho}{1-\rho}$;
		\item $\frac{1}{T}\sum_{t=1}^T\sum_{s=1}^{t-1}\rho_{s, t-1}^2 \le \frac{1}{T} \frac{I_1}{1-\rho^{2I_2}} \rho^{2(T + I_2-\max(\JM_{J}) -1)} + (1-\frac{\max(\JM_{J})}{T}) \frac{\rho^2}{1-\rho^2}$;
		\item 	For any $t \ge 1$, $\sum_{s=1}^{t-1}\rho_{s, t-1} \le K$ where $K = \frac{I_1}{1-\rho^{I_2}} + \frac{\rho}{1-\rho}$;
		\item 	For any $T> s \ge 1$, $\sum_{t=s+1}^{T} \rho_{s, t-1} \le K$.
	\end{enumerate}
	
\end{lem}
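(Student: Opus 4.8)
The plan is to reduce all four claims to the two structural facts already in hand: by Lemma~\ref{lem:rho_compute}, $\rho_{s, t-1} = \rho^{|[s:t-1]\cap\IM_T^2|}$ and $\rho_{s, t-1} = \rho_{s, l-1}\,\rho_{l, t-1}$ for every $s\le l\le t$; and, by the construction of $\IM_T^2$, the index set decomposes into a \emph{decaying region} $[1, m^\ast]$, where $m^\ast := \max(\JM_J)$, followed by a \emph{pure-D-SGD tail} $[m^\ast, T]\subseteq\IM_T^2$. On the decaying region, $\IM_T^2$ is merely a concatenation of $\IM_T^1$-type blocks in which the $j$-th stage has local runs of length $\lfloor I_1/2^j\rfloor\le I_1$ and D-SGD runs of length exactly $I_2$; on the tail every step is a communication, so $\rho_{s, t-1}=\rho^{t-s}$ whenever $m^\ast\le s\le t-1$, and more generally $\rho_{s, t-1}=\rho_{s, m^\ast-1}\,\rho^{\,t-m^\ast}$ for $s\le m^\ast\le t-1$.

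Parts 3 and 4 are then copied from Lemma~\ref{lem:rho}, parts 4 and 6, essentially verbatim. Fix $t$ and walk backwards: $\sum_{s=1}^{t-1}\rho_{s, t-1}$ is a sum of a terminal local run (length $\le I_1$), geometric contributions $\rho, \rho^2,\dots$ from the D-SGD runs, and, for each traversed local run of length $\ell\le I_1$ sitting behind $i$ full D-SGD runs, a block of mass $\ell\,\rho^{iI_2}$. Since $\ell\le I_1$ in every stage and the D-SGD runs all have length $I_2$, the bounds~\eqref{eq:less}--\eqref{eq:large} go through unchanged and give $\sum_{s=1}^{t-1}\rho_{s, t-1}\le I_1/(1-\rho^{I_2})+\rho/(1-\rho)=K$; the presence of the pure-D-SGD tail only sharpens this. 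The symmetric walk forward from $s$ gives $\sum_{t=s+1}^{T}\rho_{s, t-1}\le K$ exactly as in Lemma~\ref{lem:rho}, part 6.

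For parts 1 and 2 I would split the outer sum at $m^\ast$. For $t\le m^\ast$, bound each inner sum by $K$ via part 3, contributing at most $m^\ast K/T$ to the average. For $t>m^\ast$, split $\sum_{s=1}^{t-1}=\sum_{s\le m^\ast}+\sum_{m^\ast<s\le t-1}$: the second piece is $\sum_{k=1}^{t-1-m^\ast}\rho^k\le\rho/(1-\rho)$ since the tail is pure D-SGD, and summing over the $T-m^\ast$ admissible $t$ and dividing by $T$ produces the term $(1-m^\ast/T)\,\rho/(1-\rho)$; the first piece factors as $\sum_{s\le m^\ast}\rho_{s, t-1}=\rho^{\,t-m^\ast}\sum_{s=1}^{m^\ast}\rho_{s, m^\ast-1}$, and here $\sum_{s=1}^{m^\ast}\rho_{s, m^\ast-1}\le I_1/(1-\rho^{I_2})$ by the tail estimate of part 3 applied at the right edge of the decaying region, after which a geometric sum in $t$ (tracking the last D-SGD block, which supplies the exponent $T+I_2-m^\ast-1$) and division by $T$ give the first term of the stated bound. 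The $\rho^2$ versions — part 2, and the $A_T$, $B_T$, $C_T$ bounds of Theorem~\ref{thm:scheme2} that these four estimates feed into via $B_T\le K\cdot\frac1T\sum_t\sum_s\rho_{s, t-1}$, $C_T\le K^2$, etc. — follow by replacing $\rho$ with $\rho^2$ throughout, exactly as in the passage from~\eqref{eq:alpha} to~\eqref{eq:beta}.

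The only real obstacle is bookkeeping: pinning down the exact constants and $\rho$-exponents when a partial sum straddles a stage boundary (where $I_1$ halves) or the decaying/tail boundary at $m^\ast$, and being careful about which parts of $\sum_{s\le m^\ast}\rho_{s,t-1}$ are damped by the factor $\rho^{\,t-m^\ast}$ versus merely bounded by $K$. No new idea beyond Lemma~\ref{lem:rho} and the multiplicativity of $\rho_{s, t-1}$ is needed.
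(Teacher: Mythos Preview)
Your treatment of parts 3 and 4 is essentially identical to the paper's: since each stage of the decaying region is an $\IM_T^1$-block with local run length $\lfloor I_1/2^j\rfloor\le I_1$ and D-SGD run length exactly $I_2$, the estimates~\eqref{eq:less}--\eqref{eq:large} carry over stage by stage via multiplicativity, and the pure-D-SGD tail only sharpens them. That part is fine.

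For parts 1 and 2, though, your decomposition does \emph{not} yield the stated bound. You split the outer sum at $t=m^\ast$ and control the region $t\le m^\ast$ crudely by $m^\ast K/T$, but the claimed right-hand side contains no such term --- it has only the exponentially small piece $\tfrac{1}{T}\tfrac{I_1}{1-\rho^{I_2}}\rho^{T+I_2-m^\ast-1}$ and the tail piece $(1-m^\ast/T)\tfrac{\rho}{1-\rho}$. So your argument, as outlined, proves a strictly weaker inequality than the one asserted. The paper takes a different route: it first swaps the order of summation to $\sum_{s=1}^{T-1}\sum_{t=s}^{T-1}\rho_{s,t}$ and then splits on $s$ (decaying region versus tail). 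That said, the paper's derivation is itself flawed --- after the swap, the inner sum over $t$ is silently dropped, so what is actually computed is essentially $\sum_s\rho_{s,T-1}$ rather than $\sum_s\sum_t\rho_{s,t}$ --- and in fact the stated inequality in parts 1--2 is false as written (try $I_1=2$, $I_2=1$, $M=1$, $\rho=1/2$, $T=7$: the left side is about $1.45$, the right side is $3/7$). Your approach, carried through honestly, gives a \emph{correct} bound with the extra $m^\ast K/T$ term, which is still $O(1)$ and suffices for the downstream $A_T,B_T,C_T$ estimates in Theorem~\ref{thm:scheme2}; but you should flag that the lemma as stated cannot be proved, rather than claim your argument establishes it.
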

\begin{proof}
	
	We verify each inequality by directly computation:
	\begin{enumerate}
		\item 
		By exchanging the order of sum, we have 
		\item One can complete the proof by replacing $\rho$ with $\rho^2$ in the last argument.
		\item If $t \in \JM_j$, let $t_0 = \max(\JM_{j-1})$ (which is the largest element in $\JM_{j-1}$), then we have
		\begin{align*}
		&\sum_{s=1}^{t-1}\rho_{s, t-1} \\
		&\overset{(a)}{=} \sum_{l=0}^{j-1}  \sum_{s \in \JM_l}  \rho_{s, t_0} \rho_{t_0+1, t-1}  + \sum_{s=t_0+1}^{t-1} \rho_{s, t-1} \\
		&= \rho_{t_0+1, t-1} \sum_{l=0}^{j-1} \rho^{lI_2M} \sum_{s=0}^{M-1} \rho^{sI_2} \left(  \bigg\lfloor \frac{I_1}{2^l} \bigg\rfloor + \frac{\rho - \rho^{I_2+1}}{1-\rho}\right) \\
		& \quad +  \sum_{s=t_0+1}^{t-1} \rho_{s, t-1}\\
		&\overset{(b)}{\le} \rho_{t_0+1, t-1} \left(\frac{ 1- \rho^{kI_2M} }{1-\rho^{I_2}}I_1 + \frac{\rho(1- \rho^{kI_2M})}{1-\rho}\right) \\
		& \quad + \sum_{s=t_0+1}^{t-1} \rho_{s, t-1}\\
		&\overset{(c)}{\le} \frac{I_1}{1-\rho^{I_2}} + \frac{\rho}{1-\rho} = K
		\end{align*}
		where (a) uses the fact that $\rho_{s, t-1} = \rho_{s, t_0} \rho_{t_0+1, t-1}$; 
		(b) follows from $\lfloor \frac{I_1}{2^l} \rfloor  \le I_1$; 
		to obtain (c), one can conduct a similar discussion like what we have done in~Lemma~\ref{lem:rho} by discussing whether $t$ locates in the local update phase or the communication phrase. 
		The case is more complicated since we should also think about which round $t$ locates. 
		No matter which case here, (c) always holds. 
		\item The idea here is very similar to the that for the latest statement. 
		If $s \ge \max(\JM_J)$, then $\sum_{t=s+1}^{T} \rho_{s, t-1} \le \sum_{t=1}^{\infty} \rho^t = \frac{\rho}{1-\rho} \le K$. 
		Otherwise, local updates are involved in. 
		Similarly, one can imitate what we have done in Lemma~\ref{lem:rho} by discussing which round and which phase $s$ locates in. 
		Actually, this bound is rather rough.
	\end{enumerate}
\end{proof}

\begin{proof}[Proof of Theorem~\ref{thm:scheme2}.]
	By Lemma~\ref{lem:rho_strategy}, we have
	\begin{align*}
	A_T &= \frac{1}{T}\sum_{t=1}^T\sum_{s=1}^{t-1}\rho_{s, t-1}^2\\
	& \le \frac{1}{T} \frac{I_1}{1-\rho^{2I_2}} \rho^{2(T -\max(\JM_{J}))} + (1-\frac{\max(\JM_{J})}{T}) \frac{\rho^2}{1-\rho^2}\\
	B_T &=\frac{1}{T}\sum_{t=1}^T\left( \sum_{s=1}^{t-1}\rho_{s, t-1}\right)^2 \\
	&\le \left( \max_{t \in [T]}  \sum_{s=1}^{t-1}\rho_{s, t-1}  \right) \cdot  \frac{1}{T}\sum_{t=1}^T\left( \sum_{s=1}^{t-1}\rho_{s, t-1}\right)
	\\
	&\le  K \left[\frac{1}{T} \frac{I_1}{1-\rho^{I_2}} \rho^{T -\max(\JM_{J})} + (1-\frac{\max(\JM_{J})}{T}) \frac{\rho}{1-\rho}\right]\\
	C_T & \le  \left( \max_{t \in [T]}  \sum_{s=1}^{t-1}\rho_{s, t-1}  \right) \cdot
	\left(   \max_{s \in [T-1]}  \sum_{t=s+1}^{T}\rho_{s, t-1}    \right) \\
	&\le K^2.
	\end{align*}
	Then by combing Theorem~\ref{thm:PD-SGD}, we finish the proof of Theorem~\ref{thm:scheme2}.
	
\end{proof}

\section{Convergence of another update rule}
\label{append:another_update}

\subsection{Main result}
For completeness, in this section, we study another update rule in this section: 
\begin{equation}
\label{eq:new_update}
\X_{t+1} = \X_t\W_t - \eta \G(\X_t;\xi_t)
\end{equation}
where $\W_t$ is given in~\eqref{eq:W}. Since in this update rule, the stochastic gradient descent happens after each node communicates with its neighbors, we call this type of update as \textbf{communication-before}. By contrast, what we have analyzed in the body of this paper is termed as \textbf{communication-after}. A lot of previous efforts study the communication-before update rule, including~\cite{jiang2017collaborative,lian2017can}. 
Fortunately, our framework is so powerful that the convergence result for this new update rule can be easily parallel. 

\begin{theorem}[LD-SGD with any $\IM_T$ and the update rule~\eqref{eq:new_update}]
	\label{thm:PD-SGD-}
	Let Assumption~\ref{asum:smooth},~\ref{asum:within-var},~\ref{asum:inter-var},~\ref{asum:W} hold and the constants $L$, $\kappa$, $\sigma$, and $\rho$ be defined therein. 
	Let $\Delta = f(\overline{\x}_0) - \min_{\x } f (\x )$ be the initial error. For any fixed $T$, define
	\begin{gather*}
	\widehat{A}_T =\frac{1}{T}\sum_{t=1}^{T} \sum_{s=1}^{t-1} \rho^2_{s+1, t-1}, \\
	\widehat{B}_T = \frac{1}{T} \sum_{t=1}^{T} \left(\sum_{s=1}^{t-1} \rho_{s+1, t-1} \right)^2, \\
	\widehat{C}_T = \max_{s \in [T-1]} \sum_{t=s+1}^T  \rho_{s+1, t-1}  \left( \sum_{l=1}^{t-1}  \rho_{l+1, t-1} \right).
	\end{gather*} 
	If the learning rate $\eta$ is small enough such that
	\begin{equation}
	\eta \: \leq \: \min \bigg\{ \frac{1}{2L}, \; \frac{1}{4\sqrt{2}L\sqrt{\widehat{C}_{T}}} \bigg\} ,
	\end{equation}
	then
	\begin{equation}
	\frac{1}{T} \sum_{t=1}^{T} \EB \big\|\nabla f (\overline{\x}_t) \big\|^2 
	\: \leq \:
	{\underbrace{\frac{2\Delta}{\eta T} + \frac{\eta L\sigma^2}{n}}_{\text{fully sync SGD}}}
	+ 
	{\underbrace{\vphantom{ \left(\frac{a^{0.3}}{b}\right) } 4\eta^2L^2(\widehat{A}_T \sigma^2 + \widehat{B}_{T}\kappa^2}_{ \text{residual error}})} .
	\end{equation}
\end{theorem}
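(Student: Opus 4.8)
The plan is to mirror the proof of Theorem~\ref{thm:PD-SGD} (Appendices~\ref{append:PD-SGD} and~\ref{append:main_proof}) essentially line for line, isolating the one place where the \textbf{communication-before} update~\eqref{eq:new_update} differs from the \textbf{communication-after} update~\eqref{eq:pdsgd}: the residual-error decomposition. First I would check that the averaged iterate is unaffected. Since every $\W_t$ in~\eqref{eq:W} is doubly stochastic, $\W_t\1_n=\1_n$, so from~\eqref{eq:new_update} we get $\overline{\x}_{t+1} = \tfrac1n\X_{t+1}\1_n = \tfrac1n\X_t\W_t\1_n - \tfrac{\eta}{n}\G(\X_t;\xi_t)\1_n = \overline{\x}_t - \eta\,\overline{\g}(\X_t;\xi_t)$, exactly as in the communication-after case. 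Consequently the one-step recursion Lemma~\ref{lem:one-step}, the gradient-variance decomposition Lemma~\ref{lem:grad-decom}, and the second-moment bound Lemma~\ref{lem:grad-second} --- which involve only $\X_t$, $V_t$, and deterministic matrices --- carry over verbatim, and so does Lemma~\ref{lem:rho_compute}.

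Next I would redo the residual-error decomposition (the analogue of Lemma~\ref{lem:residual-decom}). Using $\W_t\Q = \Q\W_t = \Q$ one has $\W_t(\I_n-\Q) = (\I_n-\Q)\W_t$, hence
\[
\X_{t+1}(\I_n-\Q) = \X_t(\I_n-\Q)\W_t - \eta\,\G(\X_t;\xi_t)(\I_n-\Q).
\]
Unrolling from $t$ down to $1$, using $\X_1 = \x_1\1_n^\top$ (so $\X_1(\I_n-\Q)=\0$) and $\Q\prod_{l=s+1}^{t-1}\W_l = \Q$, I expect to obtain
\[
\X_t(\I_n-\Q) = -\eta\sum_{s=1}^{t-1}\G(\X_s;\xi_s)\big(\Ph_{s+1,t-1}-\Q\big),
\]
which differs from~\eqref{eq:decomposition} only in that the gradient injected at step $s$ is acted on by $\Ph_{s+1,t-1}$ rather than $\Ph_{s,t-1}$; intuitively, a gradient that enters \emph{after} the mixing at step $s$ experiences only the mixings at steps $s+1,\dots,t-1$. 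By Lemma~\ref{lem:rho_compute} this still gives $\|\Ph_{s+1,t-1}-\Q\| = \rho^{|[s+1:t-1]\cap\IM_T|}$, so the combinatorial structure of the coefficients is unchanged --- this is precisely why the theorem's $\widehat A_T,\widehat B_T,\widehat C_T$ are the old $A_T,B_T,C_T$ with $\rho_{s,t-1}$ replaced by $\rho_{s+1,t-1}$.

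Then I would propagate this index shift. Copying the proofs of Lemma~\ref{lem:residual-bound} and Lemma~\ref{lem:residual-average} with $\rho_{s,t-1}$ replaced everywhere by $\rho_{s+1,t-1}$ (and with $U_t$ in~\eqref{eq:U} modified accordingly) yields $V_t \le 2\eta^2\widehat U_t$ and, whenever $16\eta^2L^2\widehat C_T<1$ (guaranteed by the stated bound on $\eta$),
\[
\frac1T\sum_{t=1}^T V_t \;\le\; \frac{2\eta^2}{1-16\eta^2L^2\widehat C_T}\Big[\widehat A_T\sigma^2 + \widehat B_T\kappa^2 + 4\widehat C_T\,\frac1T\sum_{t=1}^T\EB\big\|\overline{\nabla f}(\X_t)\big\|^2\Big].
\]
Finally, telescoping the one-step recursion~\eqref{eq:one-step} over $t$, dividing by $T$, substituting this bound, and using $\eta\le 1/(2L)$ together with $16\eta^2L^2\widehat C_T\le 1/2$ to absorb the $\overline{\nabla f}(\X_t)$ terms and to replace $1/(1-16\eta^2L^2\widehat C_T)$ by $2$ --- exactly as in Appendix~\ref{append:main_proof} --- gives the claimed bound. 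I do not anticipate a genuine obstacle: the argument is a transcription of the communication-after proof, and the only thing requiring care is to track the index shift $s\mapsto s+1$ consistently through the matrix products, through the definitions of $\widehat A_T,\widehat B_T,\widehat C_T$, and through the learning-rate condition.
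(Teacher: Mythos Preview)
Your proposal is correct and follows essentially the same approach as the paper: the paper's proof likewise isolates the single change in the residual-error decomposition (Lemma~\ref{lem:residual-decom-2}, yielding $\Ph_{s+1,t-1}$ in place of $\Ph_{s,t-1}$), then remarks that Lemmas~\ref{lem:residual-bound} and~\ref{lem:residual-average} carry over with $\rho_{s,t-1}\mapsto\rho_{s+1,t-1}$ and that the final telescoping of Appendix~\ref{append:main_proof} goes through unchanged. Your explicit verification that the averaged iterate obeys the same recursion is a useful addition that the paper leaves implicit.
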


\begin{remark}
	Comparing the difference of results between Theorem~\ref{thm:PD-SGD} and Theorem~\ref{thm:PD-SGD-}, one can find that only the value of $A_T, B_T$ and $C_T$ have been modified. 
	In this way, one can parallel the conclusions derived for the update rule~\eqref{eq:pdsgd} to those with the update rule~\eqref{eq:new_update} by simply substituting $A_T, B_T$ and $C_T$ with $\widehat{A}_T, \widehat{B}_T$ and $\widehat{C}_T$. 
	
	Note that $\widehat{A}_T, \widehat{B}_T$ and $\widehat{C}_T$ is always no less than $A_T, B_T$ and $C_T$. 
	This indicators that the communication-after update rule~\eqref{eq:pdsgd} converges faster than the communication-before update rule~\eqref{eq:new_update}.
	As an example, we also given a counterpart of Theorem~\ref{thm:scheme1} in the latter section.
\end{remark}

\subsection{Useful lemmas and missing proof}

\begin{lem}[Residual error decomposition]
	\label{lem:residual-decom-2}
	Let $\X_1 = \x_1 \1_n^\top \in \RB^{d\times n}$ be the initialization, then for any $t \ge 2$,
	\begin{equation}
	\label{eq:decomposition-2}
	\X_t(\I-\Q)= - \eta \sum_{s=1}^{t-1} \G(\X_s;\xi_s) \left(  \Ph_{s+1, t-1} - \Q \right)
	\end{equation}
	where $\Ph_{s, t-1} $ is already given in~\eqref{eq:Ph}.
\end{lem}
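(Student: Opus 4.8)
The plan is to follow the proof of Lemma~\ref{lem:residual-decom} almost verbatim; the only genuine difference is an index shift recording the fact that under the communication-before rule~\eqref{eq:new_update} the stochastic gradient is injected \emph{after} the mixing step. Write $\G_s = \G(\X_s;\xi_s)$. First I would right-multiply~\eqref{eq:new_update} by $\I_n-\Q$ and use $\W_t\Q = \Q\W_t = \Q$ (each $\W_t$ is either $\I_n$ or the doubly stochastic $\W$, hence commutes with, and is absorbed by, $\Q$); this gives
\[
\X_t(\I_n-\Q) \;=\; \X_{t-1}(\I_n-\Q)\W_{t-1} \;-\; \eta\,\G_{t-1}(\I_n-\Q).
\]
Here the gradient term is \emph{not} multiplied by $\W_{t-1}$, which is precisely the structural discrepancy with~\eqref{eq:pdsgd}. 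Since $\I_n-\Q = \Ph_{t,t-1}-\Q$ by the convention in~\eqref{eq:Ph}, this term is already in the target form $\G_{t-1}(\Ph_{t,t-1}-\Q)$.

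Second, I would iterate the recursion from step $t$ down to step $1$. Each substitution of $\X_{s}(\I_n-\Q)$ releases a gradient term $-\eta\,\G_{s-1}(\I_n-\Q)$, which is then carried through the trailing factors $\W_{s}\cdots\W_{t-1} = \prod_{l=s}^{t-1}\W_l$. Using $(\I_n-\Q)\W_l = \W_l - \Q$ and, more generally, $\bigl(\prod_{l=a}^{b}\W_l - \Q\bigr)\W_{b+1} = \prod_{l=a}^{b+1}\W_l - \Q$ (again because $\Q\W_{b+1}=\Q$), one obtains $\G_{s-1}(\I_n-\Q)\prod_{l=s}^{t-1}\W_l = \G_{s-1}\bigl(\prod_{l=s}^{t-1}\W_l - \Q\bigr) = \G_{s-1}(\Ph_{s,t-1}-\Q)$. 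Re-indexing $s-1\mapsto s$, the gradient contributions sum to $-\eta\sum_{s=1}^{t-1}\G_s(\Ph_{s+1,t-1}-\Q)$, which is exactly the shifted quantity in~\eqref{eq:decomposition-2}.

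Third, the surviving boundary term after the full iteration is $\X_1(\I_n-\Q)\prod_{l=1}^{t-1}\W_l$. Since $\X_1 = \x_1\1_n^\top$ and $\1_n^\top\Q = \1_n^\top$, we have $\X_1\Q = \X_1$, hence $\X_1(\I_n-\Q)=\0$ and this term vanishes. Assembling the pieces yields~\eqref{eq:decomposition-2}. Equivalently, the entire argument can be run as a short induction on $t\ge 2$, with base case $\X_2(\I_n-\Q) = -\eta\,\G_1(\I_n-\Q) = -\eta\,\G_1(\Ph_{2,1}-\Q)$ and the displayed recursion for the inductive step.

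I do not expect a real obstacle: the computation is identical in spirit to Lemma~\ref{lem:residual-decom}. The one place that requires care is the bookkeeping of the shift — confirming that because $\G_s$ enters after multiplication by $\W_s$ in~\eqref{eq:new_update}, it is acted on only by $\W_{s+1},\dots,\W_{t-1}$, so that $\Ph_{s+1,t-1}$ (rather than $\Ph_{s,t-1}$) is the correct matrix factor. This same shift is what propagates into $\widehat{A}_T,\widehat{B}_T,\widehat{C}_T$ in Theorem~\ref{thm:PD-SGD-}.
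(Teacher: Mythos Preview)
Your proposal is correct and follows essentially the same approach as the paper: right-multiply the update rule~\eqref{eq:new_update} by $\I_n-\Q$, use $\W_{t-1}\Q=\Q\W_{t-1}=\Q$ to obtain the one-step recursion $\X_t(\I_n-\Q)=\X_{t-1}(\I_n-\Q)\W_{t-1}-\eta\G_{t-1}(\I_n-\Q)$, unroll down to $\X_1$, and kill the boundary term with $\X_1(\I_n-\Q)=\0$. Your explicit bookkeeping of the index shift (why $\Ph_{s+1,t-1}$ appears rather than $\Ph_{s,t-1}$) is a bit more detailed than the paper's terse ``iteratively expanding'' justification, but the argument is the same.
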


\begin{proof}
	We still denote the gradient $\G(\X_t; \xi_t)$ as $\G_t$. According to the update rule, we have
	\begin{align*}
	&\X_{t}(\I_n-\Q) \\
	&=(\X_{t-1}\W_{t-1}-\eta \G_{t-1})(\I_n-\Q)\\
	&\overset{(a)}{=} \X_{t-1}(\I_n-\Q)\W_{t-1}-\eta\G_{t-1}(\I_n-Q)\\
	&\overset{(b)}{=} \X_{t-l}(\I_n-\Q)\prod_{s=t-l}^{t-1}\W_{s}-\eta\sum_{s=t-l}^{t-1}\G_{s}(\Ph_{s+1,t-1}-\Q)\\
	&\overset{(c)}{=}\X_{1}(\I_n-\Q)\Ph_{1, t-1}-\eta\sum_{s=1}^{t-1}\G_{s}(\Ph_{s+1,t-1}-\Q)
	\end{align*}
	where $(a)$ follows from $\W_{t-1}\Q=\Q\W_{t-1}$; (b) results by iteratively expanding the expression of $\X_{s}$ from $s=t-1$ to $s=t-l+1$ and plugging in the definition of $\Ph_{s,t-1}$ in \eqref{eq:Ph}; (c) follows simply by setting $l=t-1$. Finally, the conclusion follows from the assumption $\X_{1}(\I_n-\Q)=\0$.
\end{proof}

\begin{lem}[Bound on residual errors]
	\label{lem:residual-bound-2}
	Let $\rho_{s, t-1} = \|\Ph_{s, t-1} - \Q\|$ where $\Ph_{s, t-1}$ is defined in~\eqref{eq:Ph}. Then the residual error can be upper bounded, i.e., $V_t \le 2 \eta^2 U_t$ where
	\begin{equation*}
	U_t = \sigma^2  \sum_{s=1}^{t-1} \rho^2_{s+1, t-1} 
	+  \left( \sum_{s=1}^{t-1}  \rho_{s+1, t-1} \right) \left(  \sum_{s=1}^{t-1} \rho_{s+1, t-1} L_s\right).
	\end{equation*}
	where $L_s = 8L^2 V_s  + 4 \kappa^2 + 4 \EB \| \overline{\nabla f}(\X_s)  \|^2$.
\end{lem}

\begin{proof}
	The proof can be simply parallel by replacing $\rho_{s, t-1}$ with $\rho_{s+1, t-1}$ in Lemma~\ref{lem:residual-bound}.
\end{proof}

The next thing is to bound the average residual error, i.e., $\frac{1}{T}\sum_{t=1}^T V_t$.

\begin{lem}[Bound on average residual error]
	\label{lem:residual-average-2}
	For any fixed $T$, define
	\begin{gather}
		\label{eq:hatABC}
	\widehat{A}_T =\frac{1}{T}\sum_{t=1}^{T} \sum_{s=1}^{t-1} \rho^2_{s+1, t-1}, \\
	\widehat{B}_T = \frac{1}{T} \sum_{t=1}^{T} \left(\sum_{s=1}^{t-1} \rho_{s+1, t-1} \right)^2,\\
	\widehat{C}_T = \max_{s \in [T-1]} \sum_{t=s+1}^T  \rho_{s+1, t-1}  \left( \sum_{l=1}^{t-1}  \rho_{l+1, t-1} \right).
	\end{gather}
	Assuming the learning rate is so small that $16\eta^2L^2\widehat{C}_{T} < 1$, then
\begin{align*}
	\frac{1}{T}\sum_{t=1}^T V_t& \le  \left[ \widehat{A}_T \sigma^2 + \widehat{B}_T  \kappa^2 + \widehat{C}_T \frac{1}{T}\sum_{t=1}^{T}  \EB \| \overline{\nabla f}(\X_t)  \|^2  \right] \times  \\
& \quad \quad	\frac{8\eta^2}{1-16\eta^2L^2\widehat{C}_T}
\end{align*}
\end{lem}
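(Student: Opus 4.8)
The plan is to reproduce the proof of Lemma~\ref{lem:residual-average} essentially verbatim, with every occurrence of $\rho_{s,t-1}$ replaced by the shifted quantity $\rho_{s+1,t-1}$. First I would invoke Lemma~\ref{lem:residual-bound-2}, which gives $V_t \le 2\eta^2 U_t$ with
\[ U_t = \sigma^2 \sum_{s=1}^{t-1} \rho^2_{s+1,t-1} + \left(\sum_{s=1}^{t-1}\rho_{s+1,t-1}\right)\left(\sum_{s=1}^{t-1}\rho_{s+1,t-1}\left(8L^2 V_s + 4\kappa^2 + 4\EB\|\overline{\nabla f}(\X_s)\|^2\right)\right), \]
so that $\frac1T\sum_{t=1}^T V_t \le 2\eta^2\cdot\frac1T\sum_{t=1}^T U_t$ and it suffices to bound the average of $U_t$.

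Next, writing $Z_s = 8L^2 V_s + 4\EB\|\overline{\nabla f}(\X_s)\|^2$ (the same abbreviation used in Lemma~\ref{lem:residual-average}), I would expand $\frac1T\sum_{t=1}^T U_t$ into three pieces. The first two are $\sigma^2\widehat{A}_T$ and $4\kappa^2\widehat{B}_T$ directly from the definitions in~\eqref{eq:hatABC}. For the third piece I would swap the order of summation over $t$ and $s$ using the identity
\[ \sum_{t=1}^T \left(\sum_{l=1}^{t-1}\rho_{l+1,t-1}\right)\left(\sum_{s=1}^{t-1}\rho_{s+1,t-1} Z_s\right) = \sum_{s=1}^{T-1} Z_s \sum_{t=s+1}^T \rho_{s+1,t-1}\left(\sum_{l=1}^{t-1}\rho_{l+1,t-1}\right) \le \widehat{C}_T \sum_{s=1}^{T-1} Z_s, \]
which is the exact analogue of steps (b)–(c) in the proof of Lemma~\ref{lem:residual-average}, with $\widehat{C}_T$ now defined through $\rho_{s+1,t-1}$ and $\rho_{l+1,t-1}$ as in~\eqref{eq:hatABC}. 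Substituting $Z_s$ back in and applying Lemma~\ref{lem:residual-bound-2} once more to replace $\frac1T\sum_s V_s$ by $2\eta^2\cdot\frac1T\sum_t U_t$, I obtain the self-referential inequality
\[ \frac1T\sum_{t=1}^T U_t \le \sigma^2\widehat{A}_T + 4\kappa^2\widehat{B}_T + 16\eta^2 L^2 \widehat{C}_T\cdot\frac1T\sum_{t=1}^T U_t + 4\widehat{C}_T\cdot\frac1T\sum_{t=1}^T \EB\|\overline{\nabla f}(\X_t)\|^2. \]

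Finally, under the standing assumption $16\eta^2 L^2 \widehat{C}_T < 1$, I would move the $U_t$-term to the left-hand side and divide, obtaining
\[ \frac1T\sum_{t=1}^T U_t \le \frac{1}{1-16\eta^2 L^2 \widehat{C}_T}\left[\widehat{A}_T\sigma^2 + \widehat{B}_T\kappa^2 + 4\widehat{C}_T\,\frac1T\sum_{t=1}^T \EB\|\overline{\nabla f}(\X_t)\|^2\right], \]
and then multiply by $2\eta^2$ and use $V_t\le 2\eta^2 U_t$ to conclude. The only step that requires genuine care is the summation-order exchange in the third term: one must check that the indicator bookkeeping $1_{\{t>s\}}$ still works out cleanly with the shifted index $s+1$, and that the inner factor $\sum_{l=1}^{t-1}\rho_{l+1,t-1}$ is precisely what sits inside the definition of $\widehat{C}_T$ in~\eqref{eq:hatABC}. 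Everything else is a mechanical transcription of the $\rho_{s,t-1}$ case, so no new estimates are needed.
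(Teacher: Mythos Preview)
Your proposal is correct and matches the paper's approach exactly: the paper's proof of Lemma~\ref{lem:residual-average-2} consists of the single sentence ``One can replace Lemma~\ref{lem:residual-bound} with Lemma~\ref{lem:residual-bound-2} in the proof of Lemma~\ref{lem:residual-average} to achieve the conclusion,'' which is precisely the mechanical substitution $\rho_{s,t-1}\mapsto\rho_{s+1,t-1}$ you spell out. Your caution about the summation-order exchange is well placed but unnecessary here---since the outer summation variable is still $s$ (ranging over $1,\dots,t-1$) and only the \emph{value} $\rho_{s+1,t-1}$ is shifted, the indicator bookkeeping $1_{\{t>s\}}$ is unchanged and the inner factor lands exactly on the definition of $\widehat{C}_T$.
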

\begin{proof}
	One can replace Lemma~\ref{lem:residual-bound} with Lemma~\ref{lem:residual-bound-2} in the proof of Lemma~\ref{lem:residual-average} to achieve the conclusion.
\end{proof}

\begin{proof}[Proof of Theorem~\ref{thm:PD-SGD-}]
	To prove Theorem~\ref{thm:PD-SGD-}, one can simply replace Lemma~\ref{lem:residual-average} with Lemma~\ref{lem:residual-average-2} in the proof of Appendix~\ref{append:main_proof}.
\end{proof}

\subsection{Example: Results of LD-SGD with $\IM_T^1$ and the other update rule}
In this part, we are going to bound $\widehat{A}_T, \widehat{B}_T$ and $\widehat{C}_T$ when the update scheme is $\IM_T^1$.
The result shown below shows the superiority of the original update~\eqref{eq:pdsgd}.

\begin{lem}[Manipulation on $\rho_{s+1,t-1}$]
	\label{lem:coro_rho_another}
	Noting that
	\begin{equation}
	\label{eq:rho_relation}
	\sum_{s=1}^{t-1}\rho_{s+1,t-1}=\sum_{s=2}^{t}\rho_{s,t-1}\le\sum_{s=1}^{t}\rho_{s,t-1}=\sum_{s=1}^{t-1}\rho_{s,t-1}+1,
	\end{equation}
	When $\IM_T=\IM_T^1$, we can immediately deduce the following properties from Lemma~\ref{lem:rho} for $\rho_{s+1,t-1}$.
	\begin{enumerate}
		\item$\tilde{\alpha}_{j}=\sum_{jI+1}^{(j+1)I}\sum_{s=1}^{t-1}\rho_{s+1, t-1}\le\frac{1}{2}\left( \frac{1+\rho^{I_2}}{1-\rho^{I_2}} I_1^2 + \frac{1+\rho}{1-\rho} I_1 \right)  + I \frac{1}{1-\rho}$. \\
		\item $\tilde{\beta}_j = \sum_{t=jI+1}^{(j+1)I} \sum_{s=1}^{t-1}\rho_{s+1, t-1}^2\le\frac{1}{2}\left( \frac{1+\rho^{2I_2}}{1-\rho^{2I_2}} I_1^2 + \frac{1+\rho^2}{1-\rho^2} I_1 \right)  + I \frac{1}{1-\rho^2}$.\\
		\item Let $\tilde{K} = \frac{I_1}{1-\rho^{I_2}} + \frac{1}{1-\rho}=K+1$, then $\sum_{s=1}^{t-1}\rho_{s+1, t-1} \le \tilde{K}$ and $\tilde{\alpha}_{j} \le I \tilde{K}$.  \\
		\item  $\tilde{\gamma}_j = \sum_{t=jI+1}^{(j+1)I} (\sum_{s=1}^{t-1}\rho_{s+1, t-1})^2\le \tilde{K}\tilde{\alpha}_{j}$.\\
		\item If $T = (R+1)I$, we have $\tilde{w}_{s}=\sum_{t=s+1}^{T}\rho_{s+1,t-1}=1+\sum_{t=s+2}^{T}\rho_{s+1, t-1}=1+w_{s+1}\le 1+K=\tilde{K}$.
	\end{enumerate}
\end{lem}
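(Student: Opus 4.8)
The plan is to obtain all six bounds as immediate consequences of the index-shift identity~\eqref{eq:rho_relation} combined with the corresponding statements of Lemma~\ref{lem:rho}. The one fact that drives everything is the convention from Definition~\ref{def:rho} (see also Lemma~\ref{lem:rho_compute}) that $\rho_{s,t-1}=1$ whenever $t\le s$; in particular $\rho_{t,t-1}=1$, so that $\sum_{s=1}^{t-1}\rho_{s+1,t-1}=\sum_{s=2}^{t}\rho_{s,t-1}\le\sum_{s=1}^{t-1}\rho_{s,t-1}+1$, and — since $\rho_{t,t-1}^2=1$ as well — the same inequality holds verbatim with every $\rho$ replaced by $\rho^2$. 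Throughout I would use the two elementary identities $\frac{\rho}{1-\rho}+1=\frac{1}{1-\rho}$ and $\frac{\rho^2}{1-\rho^2}+1=\frac{1}{1-\rho^2}$, which also give $\tilde K=K+1$.

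First I would prove (3), since it underlies (1) and (4): applying the shift inequality and then part~4 of Lemma~\ref{lem:rho} yields $\sum_{s=1}^{t-1}\rho_{s+1,t-1}\le\sum_{s=1}^{t-1}\rho_{s,t-1}+1\le K+1=\tilde K$, and summing this uniform bound over the $I$ values of $t$ in a block $[jI+1:(j+1)I]$ gives $\tilde\alpha_j\le I\tilde K$. Next, summing the shift inequality (respectively its squared version) over $t\in[jI+1:(j+1)I]$ gives $\tilde\alpha_j\le\alpha_j+I$ and $\tilde\beta_j\le\beta_j+I$; plugging in the bounds on $\alpha_j$ and $\beta_j$ from parts~2 and~3 of Lemma~\ref{lem:rho} and absorbing the extra $+I$ via the two identities above produces exactly the expressions claimed in (1) and (2). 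Part (4) then follows from (3) by $\big(\sum_{s=1}^{t-1}\rho_{s+1,t-1}\big)^2\le\tilde K\sum_{s=1}^{t-1}\rho_{s+1,t-1}$ summed over the block, mirroring part~5 of Lemma~\ref{lem:rho}. Finally (5) is a single index shift: $\rho_{s+1,(s+1)-1}=\rho_{s+1,s}=1$, so $\tilde w_s=1+\sum_{t=s+2}^{T}\rho_{s+1,t-1}=1+w_{s+1}\le1+K=\tilde K$ by part~6 of Lemma~\ref{lem:rho}.

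There is essentially no hard step here; this is a bookkeeping corollary. The only things to watch are that the per-summand $+1$ terms are tracked correctly (one per $t$, hence $+I$ per block, plus the $+1$ hidden in $\tilde K$) and absorbed cleanly through $\frac{\rho}{1-\rho}+1=\frac{1}{1-\rho}$, and that the squared shift identity in (2) is justified by $\rho_{t,t-1}^2=1$. The only genuine edge case is the degenerate $s=T$ in (5), where $\tilde w_T=0$ and part~6 of Lemma~\ref{lem:rho} (stated for $s\in[T]$) would formally require $w_{T+1}$; this should be dispatched as a trivial special case, $0\le\tilde K$, rather than by appeal to the lemma.
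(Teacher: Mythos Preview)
Your proposal is correct and follows exactly the approach the paper intends: the paper gives no separate proof for this lemma, embedding the index-shift relation~\eqref{eq:rho_relation} in the statement and declaring the rest an immediate consequence of Lemma~\ref{lem:rho}. Your write-up simply makes explicit the bookkeeping (the $+I$ per block absorbed via $\frac{\rho}{1-\rho}+1=\frac{1}{1-\rho}$, and the analogous squared version) that the paper leaves to the reader, and your handling of the $s=T$ edge case in (5) is a reasonable extra caution that the paper omits.
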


\begin{theorem}[LD-SGD with $\IM_T^1$ and the update rule~\eqref{eq:new_update}]
	\label{thm:scheme1-}
	When we set $\IM_T = \IM_T^1$ for LD-SGD with update rule~\eqref{eq:new_update}, under the same setting, Theorem~\ref{thm:PD-SGD-} holds with
	\begin{gather*}
	\widehat{A}_T \le \frac{1}{2I}\left( \frac{1+\rho^{2I_2}}{1-\rho^{2I_2}} I_1^2 + \frac{1+\rho^2}{1-\rho^2} I_1 \right)  +  \frac{1}{1-\rho^2}, \\
	\max \left\{  \widehat{B}_T, \widehat{C}_T \right\}\le \tilde{K}^2, \tilde{K}= \frac{I_1}{1 - \rho^{I_2}} + \frac{1}{1-\rho}.
	\end{gather*}
	Therefore, LD-SGD converges with $\IM_T^1$ and the update rule~\eqref{eq:new_update}.
\end{theorem}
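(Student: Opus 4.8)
The plan is to parallel, almost line for line, the argument already carried out for Theorem~\ref{thm:scheme1} in Appendix~\ref{append:scheme1}, with every $\rho_{s,t-1}$ replaced by $\rho_{s+1,t-1}$ and every manipulation lemma replaced by its shifted counterpart from Lemma~\ref{lem:coro_rho_another}. Since Theorem~\ref{thm:PD-SGD-} already reduces the convergence question for the communication-before rule~\eqref{eq:new_update} to controlling the three quantities $\widehat A_T$, $\widehat B_T$, $\widehat C_T$, it suffices to establish the stated upper bounds on them under the scheme $\IM_T^1$. As in the proof of Theorem~\ref{thm:scheme1}, I would first assume without loss of generality that $T=(R+1)I$ with $I=I_1+I_2$ and $R$ a non-negative integer; a leftover partial block at the end only perturbs the averages by $O(1/T)$ and does not change the conclusion.

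For $\widehat A_T$ I would group the outer index $t$ into $R+1$ consecutive blocks of length $I$, writing $\widehat A_T=\frac{1}{R+1}\sum_{j=0}^{R}\frac{1}{I}\tilde\beta_j$ with $\tilde\beta_j=\sum_{t=jI+1}^{(j+1)I}\sum_{s=1}^{t-1}\rho_{s+1,t-1}^2$, and then invoke the bound on $\tilde\beta_j$ from part~2 of Lemma~\ref{lem:coro_rho_another}; this gives $\widehat A_T\le\frac{1}{2I}\big(\frac{1+\rho^{2I_2}}{1-\rho^{2I_2}}I_1^2+\frac{1+\rho^2}{1-\rho^2}I_1\big)+\frac{1}{1-\rho^2}$. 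For $\widehat B_T$ the same blocking yields $\widehat B_T=\frac{1}{R+1}\sum_{j=0}^{R}\frac{1}{I}\tilde\gamma_j$; combining $\tilde\gamma_j\le\tilde K\tilde\alpha_j$ (part~4) with $\tilde\alpha_j\le I\tilde K$ (part~3) gives $\widehat B_T\le\tilde K^2$. For $\widehat C_T$ I would bound the inner factor by $\sum_{l=1}^{t-1}\rho_{l+1,t-1}\le\tilde K$ (part~3), pull this constant out of the sum over $t$, and recognize what remains as $\max_{s}\tilde w_s\le\tilde K$ (part~5), so that $\widehat C_T\le\tilde K^2$. Substituting these three estimates into Theorem~\ref{thm:PD-SGD-} produces the claimed bound.

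Convergence then follows exactly as for Theorem~\ref{thm:scheme1}: all of $\widehat A_T$, $\widehat B_T$, $\widehat C_T$ are $O(1)$ in $T$, so the shifted analogue of the sublinear condition~\eqref{eq:sublinear} holds, the stepsize constraint in Theorem~\ref{thm:PD-SGD-} is satisfiable for large $T$ (e.g.\ $\eta=\sqrt{n/T}$), and the right-hand side of Theorem~\ref{thm:PD-SGD-} vanishes as $T\to\infty$.

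I do not expect a genuine obstacle: the only delicate point is the bookkeeping packaged in Lemma~\ref{lem:coro_rho_another}, namely that the index shift $s\mapsto s+1$ reindexes $\sum_{s=1}^{t-1}\rho_{s+1,t-1}=\sum_{s=2}^{t}\rho_{s,t-1}$ and thereby adds the boundary term $\rho_{t,t-1}=1$, which replaces the geometric tails $\frac{\rho}{1-\rho}$ and $\frac{\rho^2}{1-\rho^2}$ of Lemma~\ref{lem:rho} by $\frac{1}{1-\rho}$ and $\frac{1}{1-\rho^2}$, and correspondingly $K$ by $\tilde K=K+1$. Since that lemma is already in hand, the present proof is a short assembly step; moreover, comparing the two sets of bounds makes the Remark after Theorem~\ref{thm:PD-SGD-} immediate, that $\widehat A_T\ge A_T$, $\widehat B_T\ge B_T$, $\widehat C_T\ge C_T$, so the communication-after rule~\eqref{eq:pdsgd} converges no slower than~\eqref{eq:new_update}.
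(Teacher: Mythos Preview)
Your proposal is correct and follows exactly the approach the paper intends: the paper does not spell out a separate proof of Theorem~\ref{thm:scheme1-} but simply supplies Lemma~\ref{lem:coro_rho_another} as the shifted analogue of Lemma~\ref{lem:rho}, implicitly leaving the reader to repeat the blockwise argument of Theorem~\ref{thm:scheme1} with $\tilde\alpha_j,\tilde\beta_j,\tilde\gamma_j,\tilde w_s,\tilde K$ in place of $\alpha_j,\beta_j,\gamma_j,w_s,K$. Your write-up makes this explicit and correctly identifies the index shift $\rho_{t,t-1}=1$ as the source of the replacement $K\mapsto\tilde K=K+1$.
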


\section{Discussion on others' convergence results}
\label{appen:discussion}

\begin{table*}[ht]
	\setlength{\tabcolsep}{0.3pt}
	\centering
	\caption{Convergence results of LD-SGD with $\IM_T^1$.
		To recover previous algorithms, $I_1, I_2$ and $\rho$ (which is defined by Assumption~\ref{asum:W}) are determined as following. 
		The result is directly obtained by combining the bounds derived in Theorem~\ref{thm:scheme1} with Theorem~\ref{thm:PD-SGD} or Theorem~\ref{thm:PD-SGD-}. 
		In this table, $\Delta = f(\overline{\x}_0) - \min_{\x } f (\x )$ is the initial error, $\eta$ the learning rate and $I = I_1 + I_2$. 
		The result for D-SGD is obtained from Theorem~\ref{thm:PD-SGD-} while the rest from Theorem~\ref{thm:PD-SGD}.}
	\label{tab:algo_result}
	\begin{tabular}{ccccccc}
		\toprule
		Algorithms & ~~~~~$I_1$~~~~~ & ~~~~~$I_2$~~~~~ 
		& ~~~~~$\rho$~~~~~ &~~~~~$\sigma$~~~~~ &~~~~~$\kappa$~~~~~ &
		Convergence Rate \\
		\midrule
		SGD~\cite{bottou2018optimization}& 0 & 1 & $0$ & 0 & 0 &
		$\frac{2\Delta}{\eta T} + \frac{\eta L\sigma^2}{n}$ \\
		PR-SGD~\cite{yu2019parallel} & $ \geq 1$ & $1$ &  $0$ & $>0$&  $>0$ & 
		$\frac{2\Delta}{\eta T} + \frac{\eta L\sigma^2}{n}  + 8\eta^2L^2\sigma^2 I_1 + 16\eta^2L^2\kappa^2 I_1^2$ \\
		D-SGD~\cite{lian2017can} & 0 & 1 & $[0, 1)$ & $>0$ & $>0$ &
		$\frac{2\Delta}{\eta T} + \frac{\eta L\sigma^2}{n} +  \frac{16\eta^2L^2\sigma^2}{1-\rho^2} + \frac{ 16\eta^2L^2\kappa^2}{(1-\rho)^2}$  \\
		PD-SGD~\cite{wang2018cooperative}&$\geq 0$ & $1$ & $[0,1)$ & $>0$& 0 &
		$\frac{2\Delta}{\eta T} + \frac{\eta L\sigma^2}{n} + 8\eta^2L^2 \sigma^2(\frac{1+\rho^2}{1-\rho^2} I -1)$ \\
		\bottomrule
	\end{tabular}
\end{table*}

As discussed in Section~\ref{sec:algo}, LD-SGD with the update scheme $\IM_T^1$ incorporates many previous algorithms by setting $I_1, I_2$ and $\rho$ correspondingly.
Taking the difference between the update rule~\eqref{eq:pdsgd} and~\eqref{eq:new_update} into account, we could give convergence results for previous algorithms via Theorem~\ref{thm:PD-SGD} or Theorem~\ref{thm:PD-SGD-} (see Table~\ref{tab:algo_result}).
In this section, we compare the result obtained from our analytical framework with their original ones.

\subsection{Convergence for PR-SGD} 
PR-SGD~\cite{zhou2017convergence,yu2019parallel,yu2019linear} is the special case of LD-SGD when $I_2 =1$ and $\rho = 0$ (i.e., $\W = \Q = \frac{1}{n}\1_n\1_n^\top$).~\cite{yu2019parallel} derives its convergence (Theorem~\ref{thm:yu}) by requiring Assumption~\ref{asum:bound} which is definitely stronger than our Assumption~\ref{asum:inter-var}. Roughly speaking we always have bound $\kappa^2 \le 4G^2$ since $	\frac{1}{n}\sum_{k=1}^n \big\| \nabla f_k(\x) - \nabla f(\x) \big\|^2 \le \frac{2}{n} \sum_{k=1}^n \|  \nabla f_k(\x) \|^2 + 2 \|\nabla f(\x)\|^2 \le 4G^2$. Then our bound matches theirs up to constant factors. Another interesting thing is in this case our bound only depends on $I_1 = I-1$ while~\cite{yu2019parallel}'s relies on $I$. Though they are the same asymptotically, our refined analysis shows that the step of model averaging doesn't account for the accumulation of residual errors.

\begin{assumption}(Bounded second moments)
	\label{asum:bound}
	There are exist some $G > 0$ such that for all $k \in [n]$,
	\[ \EB_{\xi \sim \DM_k} \|  \nabla F_k(\x; \xi)\|^2 \le G^2. \]
\end{assumption}

\begin{theorem}[\cite{yu2019parallel}]
	\label{thm:yu}
	Let Assumption~\ref{asum:smooth},~\ref{asum:within-var} and~\ref{asum:bound} hold and $L, \sigma, G$ defined therein. 
	Let $\{\x_t\}_{t=1}^T$ denote by the sequence obtained by PR-SGD and $\Delta = f(\overline{\x}_0) - \min_{\x } f (\x )$ be the initial error.
	If $0 < \eta  \le \frac{1}{L}$, then for all $T$, we have
	\begin{equation*}
	\frac{1}{T} \sum_{t=1}^{T} \EB \big\|\nabla f (\overline{\x}_t) \big\|^2 
	\: \leq \:
	\frac{2\Delta}{\eta T} + \frac{\eta L\sigma^2}{n} +  4\eta^2 I^2 G^2L^2.
	\end{equation*}
\end{theorem}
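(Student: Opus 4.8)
The plan is to treat PR-SGD as the instance of LD-SGD with $I_2 = 1$ and $\W = \Q = \frac{1}{n}\1_n\1_n^\top$ (hence $\rho = 0$), so that a communication step is a full average and the synchronizations occur every $I$ steps. The crucial structural fact is that $\Q$ is doubly stochastic and the SGD step subtracts each node's own stochastic gradient, so averaging never changes the mean: $\overline{\x}_{t+1} = \overline{\x}_t - \eta\,\overline{\g}(\X_t,\xi_t)$ holds at \emph{every} step, whether or not it is a synchronization step. This lets me invoke the one-step descent recursion of Lemma~\ref{lem:one-step} verbatim,
\[
\EB[f(\overline{\x}_{t+1})] \le \EB[f(\overline{\x}_t)] - \tfrac{\eta}{2}(1-\eta L)\EB\|\overline{\nabla f}(\X_t)\|^2 - \tfrac{\eta}{2}\EB\|\nabla f(\overline{\x}_t)\|^2 + \tfrac{L\sigma^2\eta^2}{2n} + \tfrac{\eta L^2}{2}V_t .
\]
Because $\eta \le 1/L$, the coefficient $1-\eta L$ is nonnegative, so the $\|\overline{\nabla f}(\X_t)\|^2$ term is dropped. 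Rearranging, multiplying by $2/\eta$, summing over $t=1,\dots,T$, and telescoping the $f(\overline{\x}_t)$ differences into $\Delta$ gives
\[
\frac{1}{T}\sum_{t=1}^T \EB\|\nabla f(\overline{\x}_t)\|^2 \le \frac{2\Delta}{\eta T} + \frac{\eta L\sigma^2}{n} + \frac{L^2}{T}\sum_{t=1}^T V_t ,
\]
so everything reduces to bounding the average consensus error $\frac{1}{T}\sum_t V_t$.

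The residual bound is the heart of the argument, and it is exactly where Assumption~\ref{asum:bound} (uniformly bounded second moments $\EB\|\nabla F_k(\x;\xi)\|^2 \le G^2$) enters, replacing the $\sigma$--$\kappa$ and spectral-gap machinery of the general $\IM_T$ analysis. Let $t_0 \le t$ be the most recent synchronization step. Since $\W = \Q$ performs a full average, all nodes agree at $t_0$, i.e. $\x_{t_0}^{(k)} = \overline{\x}_{t_0}$ for every $k$; during the subsequent local phase each node runs only local SGD, so $\x_t^{(k)} - \overline{\x}_t = -\eta\sum_{s=t_0}^{t-1}\big(\nabla F_k(\x_s^{(k)};\xi_s^{(k)}) - \overline{\g}(\X_s,\xi_s)\big)$. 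Applying Cauchy--Schwarz over the at most $I$ summands gives $V_t \le \eta^2(t-t_0)\sum_{s=t_0}^{t-1}\frac1n\sum_k\EB\|\nabla F_k(\x_s^{(k)};\xi_s^{(k)}) - \overline{\g}(\X_s,\xi_s)\|^2$, and the elementary inequality $\|a-b\|^2 \le 2\|a\|^2+2\|b\|^2$ together with Assumption~\ref{asum:bound} bounds each summand by $4G^2$. Since $t - t_0 \le I$ in every period, this yields $V_t \le 4\eta^2 I^2 G^2$ uniformly in $t$.

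Substituting $\frac1T\sum_t V_t \le 4\eta^2 I^2 G^2$ into the telescoped inequality yields the claimed
\[
\frac{1}{T}\sum_{t=1}^T \EB\|\nabla f(\overline{\x}_t)\|^2 \le \frac{2\Delta}{\eta T} + \frac{\eta L\sigma^2}{n} + 4\eta^2 I^2 G^2 L^2 .
\]
The main obstacle is the consensus bound rather than the telescoping: one must verify that anchoring at the last synchronization point $\overline{\x}_{t_0}$ is legitimate (so that the deviation accumulates over at most $I$ local steps) and that no cross-step covariance terms survive. The $I^2$ dependence and the constant $4$ are artifacts of bounding the full stochastic gradient crudely by $G$ instead of separating variance from heterogeneity; as the surrounding discussion notes, the paper's refined spectral-gap analysis replaces $I^2 G^2$ by the sharper $I_1\sigma^2 + I_1^2\kappa^2$, but for reproducing Yu et al.'s statement the crude $G^2$ bound suffices and, since $\rho = 0$, no $\rho_{s,t-1}$ product estimates are needed at all.
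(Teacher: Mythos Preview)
The paper does not actually prove Theorem~\ref{thm:yu}: it is stated as a cited result from \citet{yu2019parallel}, quoted in the discussion section solely for comparison with the bounds that Theorem~\ref{thm:PD-SGD}/\ref{thm:scheme1} yield in the special case $I_2=1$, $\rho=0$. So there is no ``paper's own proof'' to compare against.

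That said, your argument is a correct and self-contained derivation of the cited bound. You reuse the paper's Lemma~\ref{lem:one-step} for the descent step, drop the nonpositive $-\frac{\eta}{2}(1-\eta L)\EB\|\overline{\nabla f}(\X_t)\|^2$ term using $\eta\le 1/L$, and telescope; this part is identical to the skeleton in Appendix~\ref{append:main_proof}. The difference from the paper's general machinery is in how you bound $V_t$: instead of invoking Lemmas~\ref{lem:residual-decom}--\ref{lem:residual-average} and the $A_T,B_T,C_T$ quantities, you exploit that $\rho=0$ makes every communication a full reset, anchor at the last sync time, and use Assumption~\ref{asum:bound} with a crude Cauchy--Schwarz to get $V_t\le 4\eta^2 I^2 G^2$. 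This is exactly the style of argument in \citet{yu2019parallel} and is what produces the $I^2 G^2$ term that the paper contrasts with its own $I_1\sigma^2 + I_1^2\kappa^2$ residual. Two minor remarks: (i) with your convention for $t_0$ one actually has $t-t_0\le I-1=I_1$, so the bound is slightly loose but of course still valid with $I$; (ii) the variance identity $\frac{1}{n}\sum_k\|\nabla F_k-\overline{\g}\|^2=\frac{1}{n}\sum_k\|\nabla F_k\|^2-\|\overline{\g}\|^2\le G^2$ would save a factor of $4$, but the stated constant matches what is being reproduced.
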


\subsection{Convergence for D-SGD} 
D-SGD~\cite{jiang2017collaborative,lian2017can} is the special case of LD-SGD where $I_1 =0, I_2=1$, $1  > \rho \ge 0$ and the communication-after update rule (introduced in Appendix~\ref{append:another_update}) is applied. 
The original paper~\cite{lian2017can} provides an analysis for D-SGD, which we simplify and translate into Theorem~\ref{thm:lian} in our notation. 
To guarantee convergence at a neighborhood of stationary points,~\cite{lian2017can} requires a smaller learning rate $\OM(\frac{1-\rho}{\sqrt{n}L})$ than our $\OM(\frac{1-\rho}{L})$. 
By contrast their residual error is larger than ours up to a factor of $\OM(n)$. They could achieve as similar bounds on residual errors as ours by shrinking the learning rate, but the convergence would be slowed down.

\begin{theorem}[\cite{lian2017can}]
	\label{thm:lian}
	Let Assumption~\ref{asum:smooth},~\ref{asum:within-var}, ~\ref{asum:inter-var} and~\ref{asum:W} hold and $L, \sigma, \kappa$ defined therein. 
	Let $\{\x_t\}_{t=1}^T$ denote by the sequence obtained by D-SGD and $\Delta = f(\overline{\x}_0) - \min_{\x } f (\x )$ be the initial error.
	When the learning rate is small enough\footnote{In this way, their $D_2 \ge \frac{2}{3}$ and $D_1 \ge \frac{1}{4}$, and this result follows from replacing $D_1, D_2$ with these constant lower bounds.} such that $\eta \le \frac{1-\rho}{3\sqrt{6}L} \frac{1}{\sqrt{n}}$, then for all $T$, we have
	\begin{align*}
	\frac{1}{T} \sum_{t=1}^{T} &\EB \big\|\nabla f (\overline{\x}_t) \big\|^2  \\
	&\: \leq \: 
	\frac{4\Delta}{\eta T} + \frac{2\eta L\sigma^2}{n} + n \left[ \frac{6\eta^2L^2\sigma^2 }{1-\rho^2} +  \frac{54\eta^2L^2\kappa^2 }{(1-\rho)^2} \right].
	\end{align*}
\end{theorem}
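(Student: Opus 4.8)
The plan is to specialize the decentralized-SGD argument to the every-step-communication case $\IM_T=[T]$ (that is $I_1=0$, $I_2=1$) under the communication-before rule~\eqref{eq:new_update}. Two ingredients suffice: a descent inequality for the averaged iterate and a contraction bound on the consensus error. Since $\W\1_n=\1_n$, the mean iterate obeys $\overline{\x}_{t+1}=\overline{\x}_t-\eta\,\overline{\g}(\X_t;\xi_t)$ for either update rule, so $L$-smoothness applied to $f(\overline{\x}_{t+1})$ yields precisely Lemma~\ref{lem:one-step}:
\begin{equation*}
\EB f(\overline{\x}_{t+1}) \le \EB f(\overline{\x}_t) - \tfrac{\eta}{2}\EB\|\nabla f(\overline{\x}_t)\|^2 - \tfrac{\eta}{2}(1-\eta L)\EB\|\overline{\nabla f}(\X_t)\|^2 + \tfrac{L\sigma^2\eta^2}{2n} + \tfrac{\eta L^2}{2}V_t .
\end{equation*}
Everything then hinges on bounding the averaged consensus error $\frac1T\sum_t V_t$.

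For D-SGD the residual decomposition (Lemma~\ref{lem:residual-decom-2}) reads $\X_t(\I-\Q)=-\eta\sum_{s=1}^{t-1}\G_s(\Ph_{s+1,t-1}-\Q)$ with $\|\Ph_{s+1,t-1}-\Q\|=\rho^{\,t-1-s}$, so each elapsed step contracts by $\rho$. Splitting $\G_s$ into its mean and a zero-mean noise, discarding the cross terms by iterate independence (Lemma~\ref{lem:grad-decom}), summing the geometric weights (which gives $\widehat A_T\le\frac1{1-\rho^2}$ and $\widehat B_T,\widehat C_T\le\frac1{(1-\rho)^2}$), and replacing the concatenated gradient norm via Lemma~\ref{lem:grad-second} by $8L^2V_s+4\kappa^2+4\|\overline{\nabla f}(\X_s)\|^2$, one obtains a self-referential estimate for $\frac1T\sum_t V_t$. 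Resolving it under the learning-rate smallness (Lemma~\ref{lem:residual-average-2}) bounds $\frac1T\sum_t V_t$ by a multiple of $\frac{\eta^2\sigma^2}{1-\rho^2}+\frac{\eta^2\kappa^2}{(1-\rho)^2}$ plus a small multiple of $\frac1T\sum_t\EB\|\overline{\nabla f}(\X_t)\|^2$.

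Finally I would substitute this into the telescoped descent inequality. The hypothesis $\eta\le\frac{1-\rho}{3\sqrt6 L}\frac1{\sqrt n}$ is calibrated so that Lian et al.'s coefficients satisfy $D_1\ge\frac14$ and $D_2\ge\frac23$ (the footnoted values), letting one drop the $\frac1T\sum_t\EB\|\overline{\nabla f}(\X_t)\|^2$ term and rescale, which delivers the stated bound. The main obstacle is the self-referential coupling in the consensus step: the bound feeds $V_s$ back through the $8L^2V_s$ term of Lemma~\ref{lem:grad-second}, and absorbing $\frac1T\sum_t V_t$ onto the left is what pins down the admissible learning rate. It is worth flagging that the paper's own refined Lemmas~\ref{lem:grad-second}--\ref{lem:residual-bound} keep the $1/n$ normalization tightly and would give a \emph{strictly sharper} bound — no factor $n$ in the residual and the larger rate $\OM(\frac{1-\rho}{L})$; reproducing Lian et al.'s exact statement, with its explicit factor $n$ and $\frac1{\sqrt n}$ rate, instead requires retaining their cruder estimate of the concatenated gradient norm, which is precisely the source of the $\OM(n)$ inflation noted in the surrounding discussion.
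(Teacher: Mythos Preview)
Your proposal is sound, but there is a framing point you should be aware of: the paper does \emph{not} prove Theorem~\ref{thm:lian}. This theorem is a translation of Lian et al.'s original result into the paper's notation, offered purely for comparison with the bounds in Table~\ref{tab:algo_result}. The only ``proof'' content the paper supplies is the footnote: under the stated learning-rate condition $\eta\le\frac{1-\rho}{3\sqrt6 L\sqrt n}$, Lian et al.'s internal quantities satisfy $D_1\ge\tfrac14$ and $D_2\ge\tfrac23$, and the displayed inequality then follows by substituting those lower bounds into their original theorem. In other words, the paper defers entirely to \cite{lian2017can} for the argument.

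What you have written is therefore not a reproduction of the paper's proof (there is none) but rather a correct high-level sketch of how one could rederive the result from scratch via the consensus-error machinery. Your observation in the final paragraph is exactly the point the surrounding discussion makes: applying Lemmas~\ref{lem:grad-second}--\ref{lem:residual-average-2} verbatim would yield the \emph{sharper} bound in the D-SGD row of Table~\ref{tab:algo_result} (no factor $n$, learning rate $\OM(\tfrac{1-\rho}{L})$), not Theorem~\ref{thm:lian}. To land on Lian et al.'s constants with the extra $n$ and the $1/\sqrt n$ in the rate, you must follow their own coarser handling of $\|\nabla f(\X_t)\|_F^2$ rather than the paper's Lemma~\ref{lem:grad-second}. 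You have identified this correctly; just be explicit that the theorem is cited, not proved, and that your sketch is a reconstruction of the external argument rather than a comparison with anything in the present paper.
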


\subsection{Convergence for PD-SGD}
The PD-SGD is derived as a byproduct of the framework of Cooperative SGD (C-SGD) in~\cite{wang2018cooperative}. 
In that paper,~\cite{wang2018cooperative} terms PD-SGD as Decentralized Periodic Averaging SGD (PDA-SGD).
In our paper, the PD-SGD (or DPA-SGD) is the case of LD-SGD with $\IM_T^1$ when $I_2 = 1$ and $1 > \rho \ge 0$.
We translate their original analysis into Theorem~\ref{thm:wang} for ease of comparison.

First, our residual error is exactly the same with theirs up to constant factors. Second, they didn't consider the case when the data is non-identically distributed. Third, we allow more flexible communication pattern design by introducing parameters $I_2$.

\begin{theorem}[\cite{wang2018cooperative}]
	\label{thm:wang}
	Let Assumption~\ref{asum:smooth},~\ref{asum:within-var} and~\ref{asum:W} hold and $L, \sigma$ defined therein. 
	Let $\{\x_t\}_{t=1}^T$ denote by the sequence obtained by PD-SGD and $\Delta = f(\overline{\x}_0) - \min_{\x } f (\x )$ be the initial error.
	When the learning rate is small enough such that $\eta \le \min  \{  \frac{1}{2L}, \frac{1-\rho}{\sqrt{10}LI} \}$, then for all $T$, we have
	\begin{equation*}
	\frac{1}{T} \sum_{t=1}^{T} \EB \big\|\nabla f (\overline{\x}_t) \big\|^2 
	\: \leq \:
	\frac{2\Delta}{\eta T} + \frac{\eta L\sigma^2}{n} + \eta^2L^2 \sigma^2(\frac{1+\rho^2}{1-\rho^2} I -1).
	\end{equation*}
\end{theorem}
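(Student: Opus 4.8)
The plan is to reuse the machinery already in place for the communication-before rule. Theorem~\ref{thm:PD-SGD-} has already reduced everything to controlling $\widehat{A}_T$, $\widehat{B}_T$, $\widehat{C}_T$ — the analogues of $A_T$, $B_T$, $C_T$ in which every factor $\rho_{s, t-1}$ is replaced by the shifted factor $\rho_{s+1, t-1}$. So the whole task is to specialize to $\IM_T = \IM_T^1$ and show these three quantities are $O(1)$ with the stated constants; the convergence claim then follows verbatim from Theorem~\ref{thm:PD-SGD-} once we check that the step-size condition $\eta \le \min\{\frac{1}{2L}, \frac{1}{4\sqrt{2}L\sqrt{\widehat{C}_T}}\}$ is nonvacuous, which it is because $\widehat{C}_T \le \tilde{K}^2$ is a fixed constant.

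First I would record the elementary shift identity~\eqref{eq:rho_relation}: for every $t$, $\sum_{s=1}^{t-1} \rho_{s+1, t-1} = \sum_{s=2}^{t} \rho_{s, t-1} \le \sum_{s=1}^{t-1} \rho_{s, t-1} + 1$, using $\rho_{t, t-1} = 1$, and likewise with each summand squared. This transfers each estimate of Lemma~\ref{lem:rho} to its shifted counterpart at the cost of a harmless additive $+1$ per inner sum; this is exactly the content of Lemma~\ref{lem:coro_rho_another}. Concretely: (i) for $\widehat{A}_T$, summing the shifted squared inner sums over a block of $I$ consecutive values of $t$ gives $\tilde{\beta}_j \le \beta_j + I$, hence $\widehat{A}_T \le A_T + 1$; plugging in the bound on $A_T$ from Theorem~\ref{thm:scheme1} and using $\frac{\rho^2}{1-\rho^2} + 1 = \frac{1}{1-\rho^2}$ yields the stated bound. (ii) For $\widehat{B}_T$, the shift identity applied to the $K$-bound of Lemma~\ref{lem:rho} gives $\sum_{s=1}^{t-1} \rho_{s+1, t-1} \le K + 1 = \tilde{K}$ for all $t$, so each term $\big(\sum_{s=1}^{t-1} \rho_{s+1, t-1}\big)^2 \le \tilde{K}^2$ and thus $\widehat{B}_T \le \tilde{K}^2$. (iii) For $\widehat{C}_T$, factor the double sum as in the proof of Theorem~\ref{thm:scheme1}: $\widehat{C}_T \le \big(\max_s \sum_{t=s+1}^{T} \rho_{s+1, t-1}\big)\big(\max_t \sum_{l=1}^{t-1} \rho_{l+1, t-1}\big)$; the first factor is $\max_s \tilde{w}_s \le \tilde{K}$, the shifted analogue of item 6 of Lemma~\ref{lem:rho} via $\tilde{w}_s = 1 + w_{s+1} \le 1 + K = \tilde{K}$, and the second is again $\le \tilde{K}$, so $\widehat{C}_T \le \tilde{K}^2$. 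Assembling the three bounds and invoking Theorem~\ref{thm:PD-SGD-} finishes the proof, and the $O(1)$ character of $\widehat{A}_T, \widehat{B}_T, \widehat{C}_T$ gives convergence of LD-SGD with $\IM_T^1$ under~\eqref{eq:new_update}.

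The main obstacle is purely bookkeeping: one must verify that the geometric-sum computations behind Lemma~\ref{lem:rho} survive the index shift with only the advertised additive corrections — carefully tracking the boundary terms $\rho_{1, t-1}$ and $\rho_{t, t-1}$ that appear when rewriting $\sum_{s=1}^{t-1} \rho_{s+1, t-1}$ as $\sum_{s=1}^{t-1} \rho_{s, t-1} + 1$, and likewise when relating $\tilde{w}_s$ to $w_{s+1}$. No new idea beyond Lemma~\ref{lem:rho} and Theorem~\ref{thm:PD-SGD-} is required; the only mild subtlety is that $\tilde{K} = K + 1$ now replaces $K$ everywhere, which is exactly why the denominators $1 - \rho$ (instead of the numerators $\rho$) surface in the final constants.
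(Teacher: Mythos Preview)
Your proposal is a correct outline, but it proves the wrong theorem. What you have written is precisely the argument for Theorem~\ref{thm:scheme1-} (LD-SGD with $\IM_T^1$ under the communication-before rule~\eqref{eq:new_update}): you bound the shifted quantities $\widehat{A}_T, \widehat{B}_T, \widehat{C}_T$ via Lemma~\ref{lem:coro_rho_another} and invoke Theorem~\ref{thm:PD-SGD-}. That reasoning is sound for \emph{that} result, and in fact matches how the paper handles Theorem~\ref{thm:scheme1-}.

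Theorem~\ref{thm:wang}, however, is a different object. It is not a result the paper proves; it is a restatement of the convergence theorem from \citet{wang2018cooperative}, translated into the paper's notation and displayed purely for comparison (see the surrounding text: ``We translate their original analysis into Theorem~\ref{thm:wang} for ease of comparison''). In particular it concerns PD-SGD under the communication-after rule~\eqref{eq:pdsgd} with $I_2 = 1$ and identically distributed data ($\kappa = 0$), and the specific constants and step-size condition $\eta \le \min\{\tfrac{1}{2L}, \tfrac{1-\rho}{\sqrt{10}LI}\}$ come from the cited paper's own analysis, not from the $A_T, B_T, C_T$ framework developed here. There is accordingly no proof of Theorem~\ref{thm:wang} in the present paper for your proposal to be compared against, and the machinery you invoke (the hatted quantities, $\tilde{K}$, Theorem~\ref{thm:PD-SGD-}) belongs to a different update rule and a different statement.
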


\section{Experiments Details}
\label{appen:exp_detail}
Our experiment setting follows~\cite{wang2019matcha} closely and is implemented with PyTorch and MPI4Py.
We adopt the code released by~\cite{wang2019sharper}.

\paragraph{Image Classification.}
CIFAR-10 and CIFAR-100 consist of 60, 000 color images in 10 and 100 classes, respectively.
We set the initial learning rate as 0.8 and it decays by 10 after 100 and 150 epochs.
The mini-batch size per worker node is 64.

\paragraph{Language Modeling.}
The PTB dataset contains 923, 000 training words. 
A two-layer LSTM with 1500 hidden nodes in each layer~\cite{press2016using} is adopted.
We set the initial learning rate as 20 and it decays by 4 when the training procedure saturates. 
The mini-batch size per worker node is 10. 
The embedding size is 1500. 
All algorithms are trained for 40 epochs.

\paragraph{Machines.}
The training procedure is performed in a single machine which is equipped with four GPUs (Tesla P100 PCIe 16GB).
We uniformly distribute models (as well as the associated data) into the four GPUs.
Therefore, the communication in our paper actually means the inter-GPU communication.
If we use a realistic wireless environment, then the communication cost will be larger than that of our current experiments.
Hence, the advantage of LD-SGD in real time will be much larger.

\begin{figure*}[ht]
	\centering
	\vspace{-0.1in} 
	\subfloat[Graph 0 $(\rho= 0.808)$]{
		\includegraphics[width=0.33\textwidth] {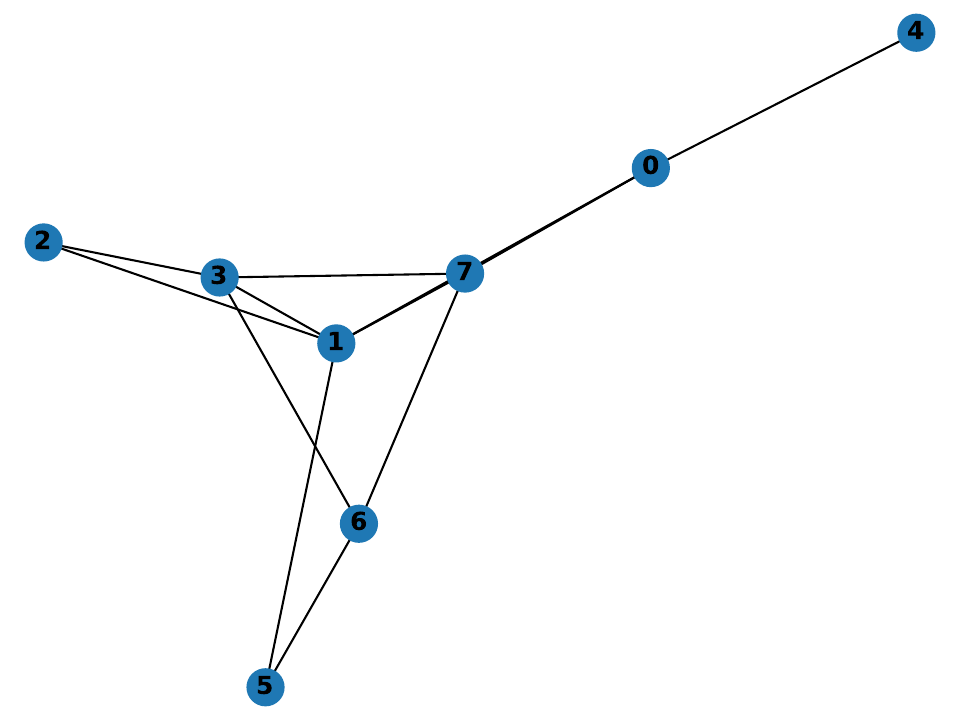}}
	\hspace{-0.1in} 
	\subfloat[Graph 1 $(\rho= 0.964)$]{
		\includegraphics[width=0.33\textwidth] {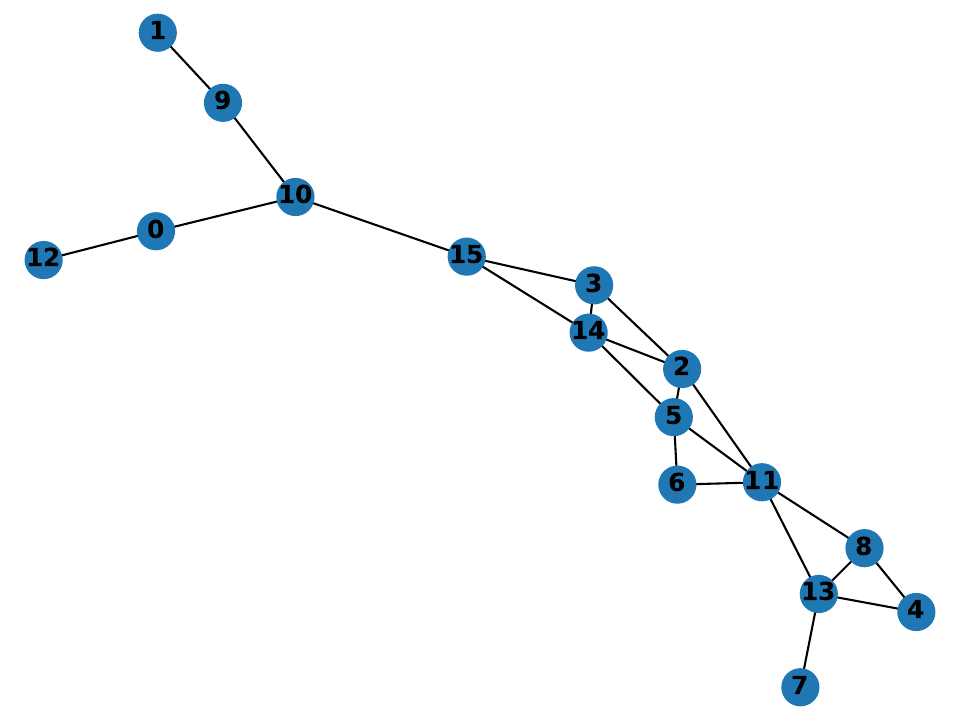}}
	\hspace{-0.1in} 
	\subfloat[Graph 2 $(\rho= 0.693)$]{
		\includegraphics[width=0.33\textwidth] {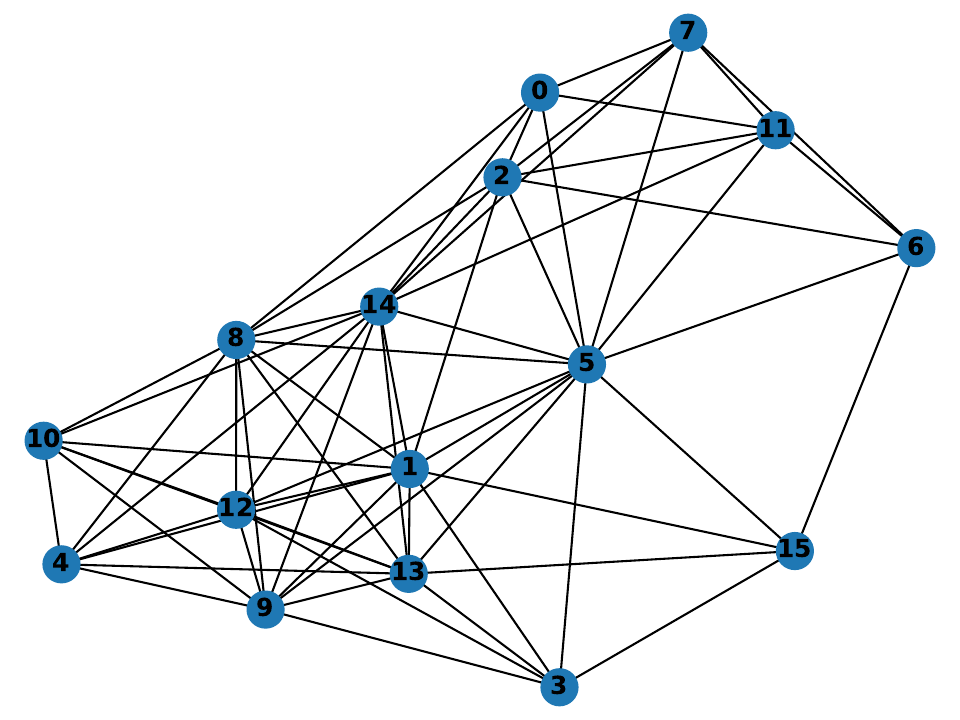}}
	\caption{Three topologies that will be used in following experiments.}
	\label{fig:graph}
	\vspace{-0.1in} 
\end{figure*}


\begin{figure*}[ht!]
	\centering
	\subfloat[different $I_1/I_2$ on CIFAR 10]{
		\includegraphics[width=0.3\textwidth] {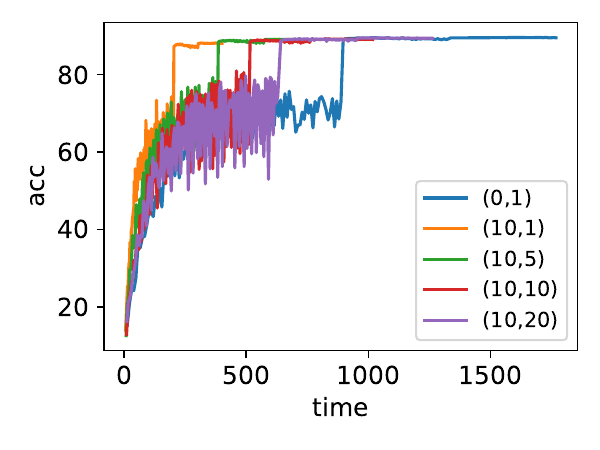}}
	\hspace{-0.15in} 
	\subfloat[same $I_1/I_2$ on CIFAR10]{
		\includegraphics[width=0.3\textwidth] {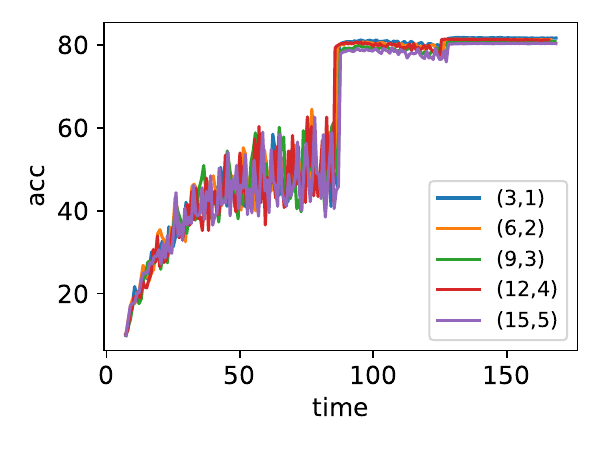}}
	\hspace{-0.15in} 
	\subfloat[decay strategy on CIFAR10]{
		\includegraphics[width=0.3\textwidth] {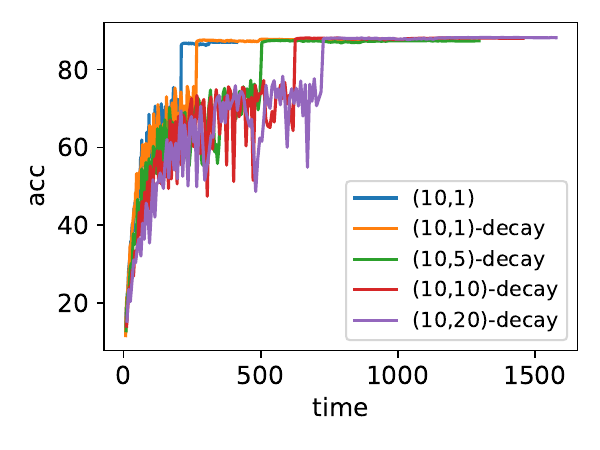}}\\
	\subfloat[different $I_1/I_2$ on CIFAR10]{
		\includegraphics[width=0.3\textwidth] {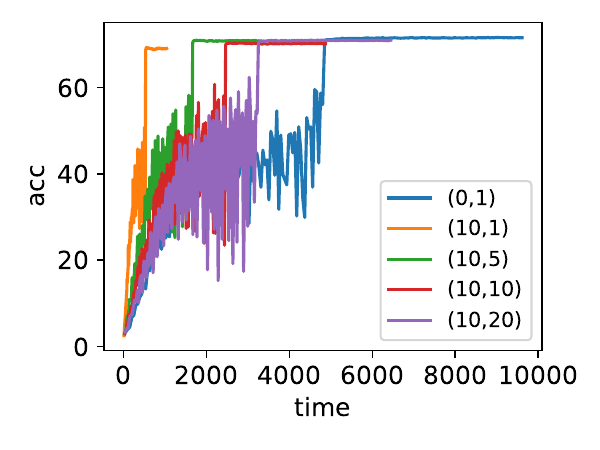}}
	\subfloat[same $I_1/I_2$ on CIFAR100]{
		\includegraphics[width=0.3\textwidth] {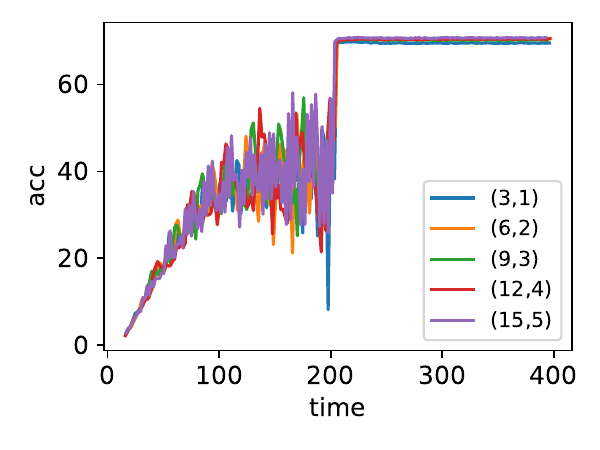}}
	\subfloat[decay strategy  on CIFAR100]{
		\includegraphics[width=0.3\textwidth] {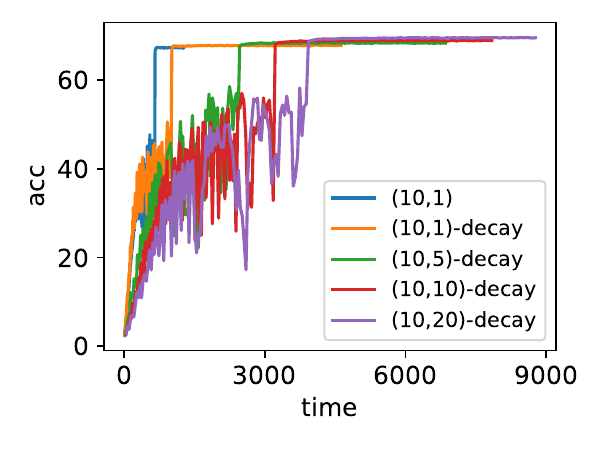}}\\
	\caption{Test accuracy of LD-SGD with different $(I_1, I_2)$. 
	The first column shows the results of different $I_1/I_2$, the column row shows show those of the same $I_1/I_2$, and the rightmost column  shows those of the decay strategy.
The first row is the results on CIFAR 10, while the second row in on CIFAR 100.}
	\label{fig:iid-test-acc}
	\vspace{-0.1in} 
\end{figure*}

\section*{Acknowledgment}
Li, Yang and Zhang have  been  supported  by  the National Key Research and Development Project of MOST of China (No. 2018AAA0101004),  Beijing  Natural Science Foundation (Z190001)  and Beijing Academy of Artificial Intelligence (BAAI).


%
%

%

%
%
%




\end{document}